\documentclass{article}

\usepackage{microtype}
\usepackage{graphicx}
\usepackage{subcaption}
\usepackage{multirow}
\usepackage{booktabs} % for professional tables

\usepackage{hyperref}

\usepackage[accepted]{icml2026}

\usepackage{amsmath}
\usepackage{amssymb}
\usepackage{mathtools}
\usepackage{amsthm}

\usepackage{diagbox}

\usepackage{titletoc}

\usepackage[nameinlink, noabbrev, capitalize]{cleveref}

\Crefname{equation}{Eq.}{Eq.}
\Crefname{proposition}{Proposition}{Proposition}
\Crefname{assumption}{Assumption}{Assumption}
\Crefname{figure}{Fig.}{Fig.}
\Crefname{definition}{Definition}{Definitions}
\Crefname{section}{Sec.}{Sec.}
\Crefname{table}{Tab.}{Tab}
\Crefname{tabular}{Tab.}{Tab}
\Crefname{appendix}{App.}{App.}

\newcommand{\oursGraph}{\textsc{R-UniGraph}}
\newcommand{\ours}{\textsc{R-UniMesh}}
\newcommand{\pv}{\texttt{PyVista}}

\usepackage[capitalize,noabbrev]{cleveref}

\DeclareMathOperator{\Tr}{Tr}

\DeclareMathOperator{\err}{err}

\newtheorem{theorem}{Theorem}
\newtheorem{proposition}{Proposition}

\newtheorem{corollary}{Corollary}

\theoremstyle{definition}
\newtheorem{definition}{Definition}

\newtheorem{assumption}{Assumption}
\newtheorem{remark}[theorem]{Remark}
\newtheorem{example}{Example}

\newtheorem*{theorem*}{Theorem 1}

\newtheorem*{corollary*}{Corollary 1}

\definecolor{darkblue}{rgb}{0.0,0.0,0.65}
\definecolor{darkred}{rgb}{0.68,0.05,0.0}
\definecolor{darkgreen}{rgb}{0.0,0.29,0.29}
\definecolor{darkpurple}{rgb}{0.47,0.09,0.29}
\hypersetup{
   colorlinks = true,
   citecolor  = darkred,
   linkcolor  = darkred,
   filecolor  = darkred,
   urlcolor   = darkred,
 }

\newcommand{\errreg}[1]{\text{err}_{\mathrm{reg}}(#1)}

\newcommand{\PfX}{P_{f(\mathbf{X})}}
\newcommand{\PX}{P_{\mathbf{X}}}

\newcommand{\kl}[2]{D_{\text{KL}}\left(#1 \parallel #2\right)}

\usepackage[textsize=tiny]{todonotes}

\DeclareMathOperator{\ACC}{ACC}

\DeclareMathOperator{\RMSE}{RMSE}

\crefformat{equation}{#2Eq.~#1#3}
\Crefformat{equation}{#2Eq.~#1#3}

\icmltitlerunning{Smoothness Errors in Dynamics Models and How to Avoid Them}

\begin{document}

\twocolumn[
  \icmltitle{Smoothness Errors in Dynamics Models and How to Avoid Them}

  % It is OKAY to include author information, even for blind submissions: the
  % style file will automatically remove it for you unless you've provided
  % the [accepted] option to the icml2026 package.

  % List of affiliations: The first argument should be a (short) identifier you
  % will use later to specify author affiliations Academic affiliations
  % should list Department, University, City, Region, Country Industry
  % affiliations should list Company, City, Region, Country

  % You can specify symbols, otherwise they are numbered in order. Ideally, you
  % should not use this facility. Affiliations will be numbered in order of
  % appearance and this is the preferred way.
  \icmlsetsymbol{equal}{*}

  \begin{icmlauthorlist}
    \icmlauthor{\href{https://ebrmn.space/}{Edward Berman}}{equal,neu}
    \icmlauthor{\href{https://www.luisali.com/}{Luisa Li}}{equal,neu}
    \icmlauthor{\href{https://jypark0.github.io/}{Jung Yeon Park}}{neu}
    \icmlauthor{\href{https://www.robinwalters.com/}{Robin Walters}}{neu}
  \end{icmlauthorlist}

  \icmlaffiliation{neu}{\href{https://www.robinwalters.com/}{Geometric Learning Lab}, Northeastern University}
  %\icmlaffiliation{comp}{Company Name, Location, Country}

  \icmlcorrespondingauthor{Edward Berman}{eddieberman@g.harvard.edu}

  % You may provide any keywords that you find helpful for describing your
  % paper; these are used to populate the "keywords" metadata in the PDF but
  % will not be shown in the document
  \icmlkeywords{PDEs, Dynamics Modeling, Oversmoothing}

  \vskip 0.3in
]

% this must go after the closing bracket ] following \twocolumn[ ...

% This command actually creates the footnote in the first column listing the
% affiliations and the copyright notice. The command takes one argument, which
% is text to display at the start of the footnote. The \icmlEqualContribution
% command is standard text for equal contribution. Remove it (just {}) if you
% do not need this facility.

% Use ONE of the following lines. DO NOT remove the command.
% If you have no special notice, KEEP empty braces:
%\printAffiliationsAndNotice{}  % no special notice (required even if empty)
% Or, if applicable, use the standard equal contribution text:
 \printAffiliationsAndNotice{\icmlEqualContribution}

\begin{abstract}
Modern neural networks have shown promise for solving partial differential equations over surfaces, often by discretizing the surface as a mesh and learning with a mesh-aware graph neural network. However, graph neural networks suffer from oversmoothing, where a node's features become increasingly similar to those of its neighbors. Unitary graph convolutions, which are mathematically constrained to preserve smoothness, have been proposed to address this issue. Despite this, in many physical systems, such as diffusion processes, smoothness naturally increases and unitarity may be overconstraining. In this paper, we systematically study the smoothing effects of different GNNs for dynamics modeling and prove that unitary convolutions hurt performance for such tasks. We propose relaxed unitary convolutions that balance smoothness preservation with the natural smoothing required for physical systems. We also generalize unitary and relaxed unitary convolutions from graphs to meshes. In experiments on PDEs such as the heat and wave equations over complex meshes and on weather forecasting, we find that our method outperforms several strong baselines, including mesh-aware transformers and equivariant neural networks. Our project page is available at \href{https://www.luisali.com/research/smoothness}{https://www.luisali.com/research/smoothness}.
\end{abstract}

\section{Introduction}

Solving partial differential equations (PDEs) is crucial across many scientific and engineering domains, including acoustics, fluid dynamics, and electrodynamics. Recently, neural networks have been explored as alternatives to analytic and traditional numerical methods for solving PDEs. Neural networks offer faster inference \citep{cui2024transformer}, discretization free solutions \citep{li2021fourier}, better robustness to partial observability \citep{schlaginhaufen2021learning, huang2024diffusionpde, morelpredicting}, and synergy with existing finite element methods \citep{gupta2023generalized}.

\begin{figure}[t]
    \centering
    \includegraphics[width=1\linewidth]{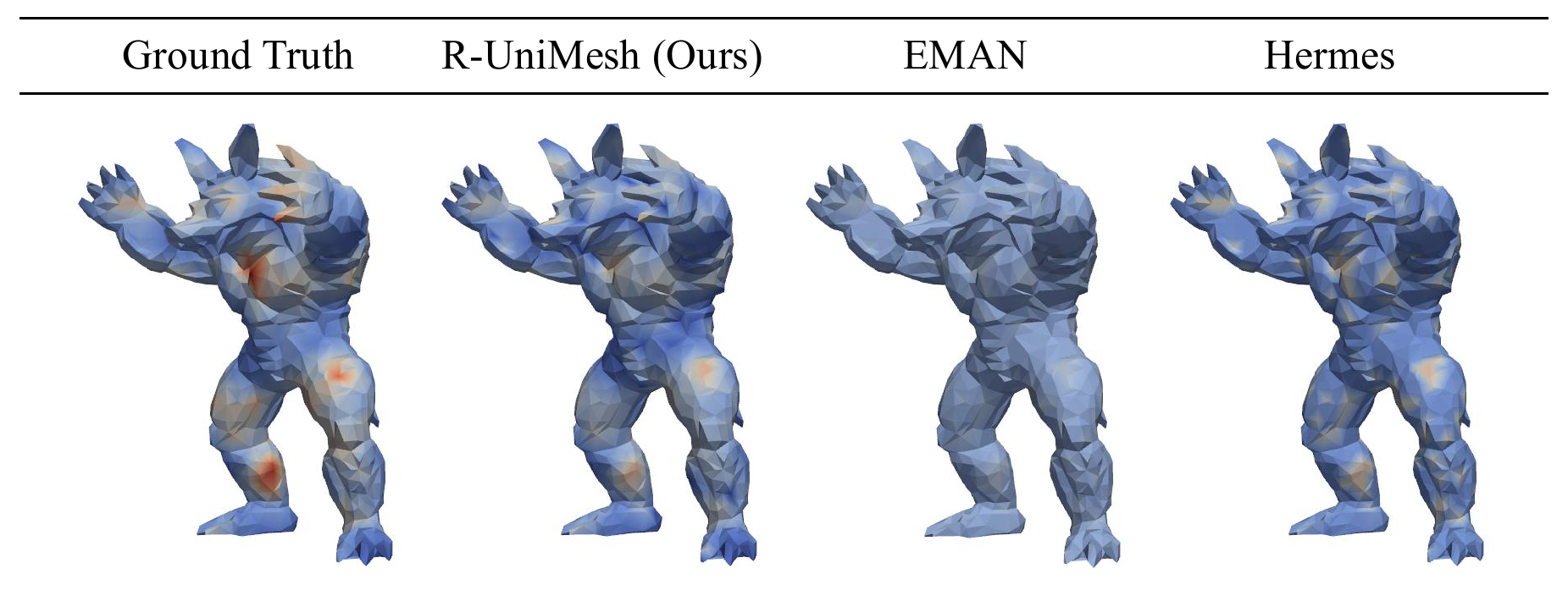}
    \caption{Qualitative comparison of autoregressive model predictions for the heat equation on the armadillo mesh at timestep $T=190$. Our \ours{} model remains faithful to the ground truth during each step of the rollout, whereas the EMAN model over smooths and the Hermes model under smooths. A more complete comparison over several timesteps is in \cref{sec:diagnostics}, \cref{tab:diagnostic}.} 
    \label{fig:salesman}
\end{figure}

However, neural network models often have architectural biases that hurt their ability to model certain dynamics.  In particular, many deep learning methods solve PDEs by discretizing the domain into a grid or mesh and modeling the solution using a graph neural network \citep[e.g.,][]{janny2023eagle, park2023modelingdynamicsmeshesgauge}. Unfortunately, graph neural networks (GNNs) tend to oversmooth \cite{li2018deeper}, where adjacent node features become increasingly similar over successive iterations of message passing. The phenomenon of oversmoothing occurs in a variety of settings \cite{cai2020note, bodnar2022neural, keriven2022not, rusch2023survey, kiani2024unitary, arroyo2025vanishing, suinterplay} and hampers the performance of deep GNNs. 

To address oversmoothing, \citet{kiani2024unitary} propose using unitary graph convolutions, which constrain weight matrices to be unitary. This ensures that the linear transformations preserve norms and remain invertible, improving network stability. They also prove that unitary convolutions prevent oversmoothing by preserving the Rayleigh quotient, a measure of smoothness for signals defined on graphs. However, this poses a new problem: many dynamics problems commonly solved using GNNs require \textit{some} amount of smoothing. For example, heat diffusion on graphs and meshes naturally smooths the input node features. Using unitary graph convolutions in such problems would result in undersmoothing and does not give a complete solution to smoothness errors. 

In this work, we first theoretically characterize the limitations of unitary convolutions for dynamics problems. In particular, we derive a lower bound on the approximation error of unitary functions and show that unitary functions are overconstrained for dynamical systems where the solution's norm has high angular dependence. To address this issue, we propose \emph{relaxed unitary convolutions}, which balance smoothness preservation with modeling fidelity, outperforming existing methods on dynamic systems that require natural smoothing. We also generalize both the Rayleigh quotient and unitary convolution framework from graphs to meshes so that relaxations can be applied in this setting. Finally, we systematically investigate smoothness tendencies of different mesh-GNN architectures and find that our mechanism for approximately preserving smoothness is key to successful modeling, providing equal or greater improvement to other inductive biases such as equivariance.

In summary, our contributions are the following:

\begin{enumerate}
  \item Derive a lower bound on the approximation error of unitary functions, demonstrating that they are overly restrictive when predicting dynamics with high angular dependence in the solution's norm (\cref{sec:theory}).
  \item  Introduce relaxed unitary convolutions that balance accuracy with smoothness preservation, and extend both the Rayleigh quotient and unitary convolution framework to meshes (\cref{sec:relax_uni}).
  \item Empirically analyze the smoothness behavior of various GNN architectures on complex dynamical systems, showing that controlling smoothness can match or outperform strong baselines (\cref{sec:experiments}).
\end{enumerate}

\section{Related Works}

\paragraph{Oversmoothing and undersmoothing in GNNs.} Like \citet{kiani2024unitary}, our work quantifies the effect of neural networks on the Rayleigh quotient \cite{chung1997spectral} of a graph. \citet{kiani2024unitary} prove that unitary functions, and in particular the unitary convolution network, strictly preserve the Rayleigh quotient and therefore the smoothness of input graphs. However, we show both theoretically and empirically how this property can be overconstraining in GNNs. Other approaches to quantifying smoothness in PDE solutions have used the Mat\'ern kernel \cite{borovitskiy2021matern, daniels2025uncertainty} or decay rate exponents \cite{kulick2025investigating}, but none have considered the Rayleigh quotient for dynamics models as we do.

Our work is perhaps most similar to \citet{keriven2022not}, who also point out that some smoothing can be useful for certain regression tasks but do not consider dynamics modeling specifically. Similarly, \citet{li2018deeper} point out that GCNs \cite{kipf2017semisupervised} can be understood as a special case of Laplacian smoothing, which is a key reason why GCNs work at all. In fact, \citet{kipf2017semisupervised} argue that their architecture can be understood as a differentiable and parameterized generalization of the $1$-dim Weisfeiler-Lehman algorithm \cite{leman1968reduction}, indicating that even randomly initialized GCNs can be performant due to the way they smooth information throughout the network. Despite these findings, there is comparatively less work studying the role of smoothness in spatio-temporal modeling tasks. While \citet{marisca2025over} study issues with message-passing based GNNs for spatio-temporal modeling, they focus on \textit{oversquashing}, where information fails to propagate to distant nodes, whereas our work addresses \textit{oversmoothing} and \textit{undersmoothing}.

\paragraph{Dynamics modeling over graphs and meshes.} Our work focuses on dynamics modeling where PDE solutions are discretized as signals on graphs and meshes through the lens of smoothness. Many physical systems, such as wave propagation \citep{dalembert1747wave}, heat diffusion \citep{baron1822theorie}, phase fields \citep{cahn1958free, li2024tutorials}, fluid flows \citep{constantin1988navier, li2020neural}, and climate systems \citep{ghil2020geophysical} can be described by systems of PDEs. Deep learning based approaches are increasingly used to solve these PDEs in these domains where numerical solving is difficult \cite{10.1145/3394486.3403198, cranmer2020lagrangian, li2020neural, li2021fourier, kashinath2021physics, cai2021physics, maurizi2022predicting, park2023modelingdynamicsmeshesgauge, liukan, doi:10.1073/pnas.2311808121, daniels2025splat}. For PDE solving on meshes, these dynamics can be formulated extrinsically by embedding the manifold into Euclidean space \cite{satorras2022enequivariantgraphneural, pfaff2021learningmeshbasedsimulationgraph}, or intrinsically by defining evolution directly in the coordinates of local tangent spaces \cite{cohen2019gaugeequivariantconvolutionalnetworks,  dehaan2021gaugeequivariantmeshcnns, mitchel2021field, mitchel2022mobius, basu2022equivariant, park2023modelingdynamicsmeshesgauge, suk2024mesh, mitchel2024single}. While \citet{cohen2019gaugeequivariantconvolutionalnetworks},  \citet{pfaff2021learningmeshbasedsimulationgraph}, and \citet{suk2024mesh} contain isolated experiments related to dynamics modeling, only our work and \citet{park2023modelingdynamicsmeshesgauge} study how the choice of Euclidean versus locally defined coordinate representations in the network can affect convergence to PDE solutions. Furthermore, our work is distinct from \citet{park2023modelingdynamicsmeshesgauge} in that only we directly assess how these design choices affect neural network \textit{smoothing behavior}. 

\paragraph{Benchmarking PDE Surrogate Models.} While the physical symmetries of many dynamical systems are well understood \cite{olver1993applications, wang2020incorporating, borovitskiy2021matern}, smoothness has received less attention. The performance of deep dynamics models is typically measured either via quantitative error metrics against the ground truth or their preservation of underlying physical laws, such as spectral energy errors \cite{wang2020incorporating} or equivariance errors \cite{wang2020incorporating, wang2022approximately, wang2022data}. Our work is novel in our application of the Rayleigh quotient in quantifying the smoothing effect of trained GNN dynamics models. Furthermore, we are among the first to design architectures with inductive biases that encourage the model to match the Rayleigh quotient of the labeled graphs. Other works have explored using the Rayleigh quotient as an auxiliary loss \cite{rowan2025solving}, as positional encodings \cite{dong2023rayleigh}, and as a hard constraint to preserve smoothness regardless of the true labels \cite{kiani2024unitary}. In contrast, only our work and the work of \citet{shao2024unifying} use architectural inductive biases to match the smoothness of labeled graphs, and only our work evaluates how well the true smoothness of dynamical systems is recovered.

\section{Background}

We first recall the definition of the Rayleigh quotient, a measure of smoothness on graphs, and provide background on unitary convolutions and their invariance to the Rayleigh quotient. We also introduce the mesh data type, which we later use to extend the unitary convolution framework from graphs to meshes.

\subsection{Rayleigh quotient}

To measure the smoothness of a signal on a graph, we use the Rayleigh quotient as defined in \citet{chung1997spectral}.

\begin{definition}[Rayleigh quotient, \cite{chung1997spectral}] \label{def:RQ}
    Given an undirected graph $\mathcal{G}=(V,E)$ with $|V|=n$ nodes and adjacency matrix $\mathbf{A} \in \{0,1\}^{n \times n}$, let $\mathbf{D} \in \mathbb{R}^{n \times n}$ be a diagonal matrix where the $i$-th entry $\mathbf{D}_{ii} = d_i$  the degree of node $i$. Let $s \colon  V\to\mathbb{C}^d$ be a function from nodes to features. Denote by $\tilde{\mathbf{A}}= \mathbf{D}^{-1/2}\mathbf{AD}^{-1/2}$  the normalized adjacency matrix and $\mathbf{X} \in \mathbb{C}^{n \times d}$ a matrix with the $i$-th row set to feature vector $s(i)$. The Rayleigh quotient is
    \begin{equation}
        R_{\mathcal{G}}(\mathbf{X}) = \frac{1}{2}\frac{{\sum_{(u, v) \in E}} \left\| \frac{s(u)}{\sqrt{d_u}} - \frac{s(v)}{\sqrt{d_v}}\right\|^2}{{\sum_{w \in V}} || s(w) ||^2}  \label{eq:rayleigh_quotient_other_og}
    \end{equation}
    or $\Tr{(\mathbf{X^\dagger(I - \tilde{A})X})} \cdot 
||\mathbf{X}||^{-2}_F$ in matrix form. We will often abbreviate the Laplacian as $\mathbf{L = (I - \tilde{A})}$.
\end{definition}

Intuitively, the Rayleigh quotient measures the mean difference in node features for adjacent nodes. A graph with identical degree-weighted node features has a Rayleigh quotient of zero.

In this work, we define both oversmoothing and undersmoothing of a model $f$ with respect to the Rayleigh quotient of a target signal.\footnote{This is in contrast to prior works which define oversmoothing as exponential decay of the Rayleigh quotient over successive GNN layers towards its minimum, e.g. \citet{fesser2026unitary}.} 
%\begin{definition}[Target Free Oversmoothing \cite{fesser2026unitary}] \label{def:target_free}
%A GNN is said to \textit{oversmooth} on a graph $\mathcal{G}$ with Laplacian $\mathbf{L}$ and input $\mathbf{X}$ if there exist constants $C$, $C_0 > 0$ such that for every layer $k \geq 1$ we have $$|R_\mathcal{G}(\mathbf{X}) - \lambda_{\rm min}(\mathbf{L)}| \leq C_0e^{-Ck}.$$
%\end{definition}}

%\textcolor{blue}{In contrast, we define oversmoothing and undersmoothing with respect to the Rayleigh quotient of a target signal.}

\begin{definition}[Relative Oversmoothing and Undersmoothing] \label{def:RQ_free}
Let $\mathbf{A}$, $\mathbf{X}$, and $\mathcal{G}$ be as in \cref{def:RQ}. Let $\mathbf{Y} \in \mathbb{R}^{n \times d_{\rm out}}$ be a node feature matrix for a target signal defined on the same graph $\mathcal{G}$. Let $f: \mathbb{R}^{n \times d_{\rm in}} \times \mathbb{R}^{n \times n} \to \mathbb{R}^{n \times d_{\rm out}}$ be a function that updates the signal defined on the graph. If the predicted signal is smoother than the ground truth ($R_{\mathcal{G}}(\mathbf{f(X;A)}) < R_{\mathcal{G}}(\mathbf{Y})$), then we say that $f$ is \textit{oversmoothing}. If the predicted signal is less smooth than the ground truth ($R_{\mathcal{G}}(\mathbf{f(X;A)}) > R_{\mathcal{G}}(\mathbf{Y})$), then $f$ is \textit{undersmoothing}.
\end{definition}

\subsection{Unitary Convolution}

\citet{kiani2024unitary} define two different models that preserve the Rayleigh quotient using unitary functions, which satisfy $\mathbf{U^\dagger U = I}$. In particular, they define the separable unitary convolution 
\begin{equation}
f_{\rm UniConv}^{\mathrm{sep}}(\mathbf{X; A}) = \mathbf{\exp(iAt)XU}, \quad \mathbf{U^\dagger U =I} \label{eq:separable}
\end{equation}
and the Lie unitary convolution 
\begin{equation}
f_{\rm UniConv}^{\mathrm{Lie}}(\mathbf{X;A}) = \mathbf{\exp(AXW)}, \quad \mathbf{W = - W^\dagger} \label{eq:lie}
\end{equation}
where $\exp(\cdot )$ denotes the matrix exponential. We provide further background material on the matrix exponential and its relationship to unitary matrices in \cref{sec:exp}. The authors show that unitary convolutions are mathematically constrained to preserve the Rayleigh quotient: 
\begin{proposition}[Invariance of Rayleigh quotient, Proposition 6 in \citet{kiani2024unitary}] \label{prop:rq_preserved}
    Given an undirected graph $\mathcal{G}$ on $n$ nodes with normalized adjacency matrix $\widetilde{\mathbf{A}} = \mathbf{D}^{-1/2}\mathbf{A}\mathbf{D}^{-1/2}$, the Rayleigh quotient is invariant under normalized unitary or orthogonal graph convolution, i.e. $R_{\mathcal{G}}(\mathbf{X}) = R_{\mathcal{G}}(f_{\rm UniConv}(\mathbf{X}))$ where $f_{\rm UniConv}$ is either seperable or Lie.
    %(see \cref{eq:separable} and \cref{eq:lie}).
\end{proposition}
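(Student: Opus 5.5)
The approach is to work with the matrix form $R_{\mathcal{G}}(\mathbf{X}) = \Tr(\mathbf{X}^\dagger \mathbf{L}\mathbf{X})\,\|\mathbf{X}\|_F^{-2}$ with $\mathbf{L} = \mathbf{I}-\widetilde{\mathbf{A}}$. Both numerator and denominator are quadratic forms with respect to the Frobenius inner product $\langle \mathbf{Y},\mathbf{X}\rangle = \Tr(\mathbf{Y}^\dagger\mathbf{X})$ on $\mathbb{C}^{n\times d}$:
\[
R_{\mathcal{G}}(\mathbf{X}) = \frac{\langle \mathbf{X}, \mathcal{L}\mathbf{X}\rangle}{\langle \mathbf{X},\mathbf{X}\rangle},
\qquad \mathcal{L}(\mathbf{X}) := \mathbf{L}\mathbf{X}.
\]
It therefore suffices to show that each convolution is a linear map $\mathcal{U}$ on $\mathbb{C}^{n\times d}$ with two properties:
\begin{enumerate}
\item $\mathcal{U}$ is unitary for the Frobenius inner product, which preserves the denominator;
\item $\mathcal{U}$ commutes with $\mathcal{L}$, so that $\langle \mathcal{U}\mathbf{X}, \mathcal{L}\mathcal{U}\mathbf{X}\rangle = \langle \mathbf{X}, \mathcal{U}^\dagger\mathcal{U}\mathcal{L}\mathbf{X}\rangle = \langle \mathbf{X},\mathcal{L}\mathbf{X}\rangle$, which preserves the numerator.
\end{enumerate}
Throughout I take the adjacency appearing inside the convolutions to be the normalized $\widetilde{\mathbf{A}}$, as in the statement. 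This matters, because commutation with $\mathbf{L}$ comes from $\mathbf{L}$ being a polynomial in $\widetilde{\mathbf{A}}$.

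\textbf{Separable case.} Set $\mathbf{V}=\exp(i\widetilde{\mathbf{A}}t)$. Since $\widetilde{\mathbf{A}}$ is real symmetric, $i\widetilde{\mathbf{A}}t$ is skew-Hermitian, so $\mathbf{V}$ is unitary. Moreover $\mathbf{V}$ is a power series in $\widetilde{\mathbf{A}}$, hence $\mathbf{V}\mathbf{L}=\mathbf{L}\mathbf{V}$. For square unitary $\mathbf{U}$ (so also $\mathbf{U}\mathbf{U}^\dagger=\mathbf{I}$), cyclicity of the trace gives
\[
\Tr\bigl(\mathbf{U}^\dagger\mathbf{X}^\dagger\mathbf{V}^\dagger\mathbf{L}\mathbf{V}\mathbf{X}\mathbf{U}\bigr) = \Tr\bigl(\mathbf{X}^\dagger\mathbf{L}\mathbf{X}\,\mathbf{U}\mathbf{U}^\dagger\bigr) = \Tr\bigl(\mathbf{X}^\dagger\mathbf{L}\mathbf{X}\bigr).
\]
Setting $\mathbf{L}=\mathbf{I}$ in the same computation gives $\|\mathbf{V}\mathbf{X}\mathbf{U}\|_F=\|\mathbf{X}\|_F$, so the quotient is unchanged.

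\textbf{Lie case.} I read $\exp(\widetilde{\mathbf{A}}\mathbf{X}\mathbf{W})$ as the operator exponential $e^{\mathcal{T}}$ of the linear map $\mathcal{T}(\mathbf{X}) = \widetilde{\mathbf{A}}\mathbf{X}\mathbf{W}$, which is how the construction is meant; making this interpretation explicit is the main subtlety.
\begin{enumerate}
\item \emph{$\mathcal{T}$ is skew-adjoint.} We have $\langle \mathbf{Y},\mathcal{T}\mathbf{X}\rangle = \Tr(\mathbf{Y}^\dagger\widetilde{\mathbf{A}}\mathbf{X}\mathbf{W}) = \langle \widetilde{\mathbf{A}}\mathbf{Y}\mathbf{W}^\dagger, \mathbf{X}\rangle$. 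Hence $\mathcal{T}^\dagger(\mathbf{Y}) = \widetilde{\mathbf{A}}\mathbf{Y}\mathbf{W}^\dagger = -\mathcal{T}(\mathbf{Y})$.
\item \emph{$e^{\mathcal{T}}$ is unitary.} Since $\mathcal{T}^\dagger=-\mathcal{T}$, we get $(e^{\mathcal{T}})^\dagger e^{\mathcal{T}} = e^{-\mathcal{T}}e^{\mathcal{T}} = \mathrm{id}$.
\item \emph{Vectorized check.} Equivalently, $\mathrm{vec}(\mathcal{T}) = \mathbf{W}^\top\otimes\widetilde{\mathbf{A}}$. This is skew-Hermitian because $(\mathbf{W}^\top)^\dagger = \overline{\mathbf{W}} = -\mathbf{W}^\top$.
\item \emph{Commutation.} $\mathcal{L}$ and $\mathcal{T}$ commute, since $\mathbf{L}\widetilde{\mathbf{A}}\mathbf{X}\mathbf{W} = \widetilde{\mathbf{A}}\mathbf{L}\mathbf{X}\mathbf{W}$ because $\mathbf{L}$ and $\widetilde{\mathbf{A}}$ commute. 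Therefore $\mathcal{L}$ commutes with every power of $\mathcal{T}$, and hence with $e^{\mathcal{T}}$.
\end{enumerate}
Applying the general argument from the first paragraph finishes the proof.

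The only real obstacle is the bookkeeping in the Lie case. One must verify that the operator exponential is unitary for the correct (Frobenius) inner product, and that the left action of $\mathbf{L}$ commutes with the two-sided action $\mathbf{X}\mapsto\widetilde{\mathbf{A}}\mathbf{X}\mathbf{W}$. Once the problem is phrased as "a unitary operator commuting with $\mathcal{L}$", both cases are immediate.
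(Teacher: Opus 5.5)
Your proof is correct and takes essentially the same route as the paper. The paper does not prove this proposition itself, but its proof of the mesh corollary explicitly mirrors Kiani et al.'s argument. For the separable case it uses cyclicity of the trace together with commutation of $\exp(i\widetilde{\mathbf{A}}t)$ with $\mathbf{I}-\widetilde{\mathbf{A}}$. For the Lie case it vectorizes $\widetilde{\mathbf{A}}\mathbf{X}\mathbf{W}$ as a skew-Hermitian Kronecker generator, whose exponential is unitary and commutes with $(\mathbf{I}-\widetilde{\mathbf{A}})\otimes\mathbf{I}$.

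Your "Frobenius-unitary operator commuting with $\mathcal{L}$" framing just packages both cases uniformly. You also correctly make explicit two hypotheses the paper's write-up leaves loose: the adjacency inside the convolutions must be the normalized $\widetilde{\mathbf{A}}$, and $\mathbf{U}\mathbf{U}^\dagger=\mathbf{I}$ is required.
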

%\textcolor{blue}{Unitary convolutions are therefore constrained to avoid oversmoothing with respect to \cref{def:target_free}, but may oversmooth or undersmooth \textit{a target signal} with respect to \cref{def:RQ_free}.}
%
%Crucially, separable unitary convolution is often relaxed by dropping the constraint that $\mathbf{U^\dagger U = I}$, meaning that \cref{prop:rq_preserved} is not strictly satisfied. Additionally, Taylor series truncation errors in the matrix exponential can cause both separable and Lie unitary convolution to violate \cref{prop:rq_preserved}. %\citet{kiani2024unitary} also show that conventional graph neural networks such as GCNs \cite{kipf2017semisupervised} are likely to exhibit oversmoothing. We include the proposition in  \cref{sec:GCN_smoothing} for completeness.

\subsection{Mesh Data} A (triangular) mesh $\mathcal{M}$ consists of a set $(\mathcal{V}, \mathcal{E}, \mathcal{F})$, where $\mathcal{V}$ is a set of vertices, $\mathcal{E}= \{(i,j)\}$ is a set of ordered vertex indices $i$, $j$ connected by an edge, and $\mathcal{F} = \{(i,j,k)\}$ is the set of ordered vertex indices $i$,$j$,$k$ connected by a triangular face. The mesh generalizes graphs by including high order connectivity information via the inclusion of faces. We assume that the mesh is a $2-$dimensional manifold embedded in $\mathbb{R}^3$, i.e. a manifold mesh. The definition of the manifold condition for a mesh is given in \cref{def:man_cond} (\cref{sec:mesh_laplacian}). %  In \cref{sec:symmetries}, we provide further detail on various equivariance constraints commonly employed for tasks defined on meshes. For more background on discrete differential geometry, see \citet{Crane2013DGP}.

\section{Theory: Unitary Models are Overconstrained} \label{sec:theory}

While unitary models on graphs can be useful because they preserve the Rayleigh quotient, this section illustrates how unitary models can be \textit{overly} constrained. In particular, we derive an approximation error lower bound that clarifies the approximation limits of unitary models. We start by establishing our unitary approximation learning framework. 

\subsection{Preliminaries}
Let $Z = \mathbb{C}^n$ be a domain with data probability density $p \colon Z \to \mathbb{R}$. Let $u\colon \mathbb{C}^n \to \mathbb{C}^n$ be a unitary function and let $f\colon \mathbb{C}^n  \to \mathbb{C}^n$ be the target function. Denote the regression error by 
\begin{equation*}
\errreg{u} = \int_{Z}p(z) \lVert u(z) - f(z)\rVert_2^2dz.
\end{equation*}
\paragraph{Group Invariance.} Our main result relies on the theory of approximation error for group invariant functions $h$. We review these concepts in detail in \cref{sec:unitary_learning} and provide informal definitions in the paragraph that follows. 

A group invariant function $h$ satisfies $h(z) = h(gz)$ for all $g \in G$, $z \in Z$. Let $Gz = \{gz\colon g \in G\}$ be the orbit of $z$. A \textit{fundamental domain} $F$ of a group $G$ in $Z$ is a set of orbit representatives. The domain $Z$ can be written as the union of conjugates, $Z = \cup_{g \in G}gF$, where the conjugate is defined $gF = \{gz \colon z \in F\}$. Denote the integrated density on an orbit by $p(Gz) = \int_{Gz}p(z)dz.$ Finally, denote the variance of a function $f$ on an orbit $Gz$ by $\mathbb{V}_{Gz}[f]$. The approximation error lower bound for an invariant function is given by the following proposition.
\begin{proposition}[Theorem 4.8 in \citet{wang2023general}] \label{prop:inv} For a $G$-invariant function $h$, the regression error is bounded below by $\err \geq \int_F p(Gz) \mathbb{V}_{Gz}[f]dz$.
\end{proposition}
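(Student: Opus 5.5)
The plan is to reduce the regression error to a sum of orbit-wise least-squares problems. Write $Z = \cup_{g\in G} gF$ and, for $z \in F$, let $\bar f(z)$ denote the mean of $f$ over the orbit $Gz$ with respect to the density $p$ restricted to that orbit:
\[
\bar f(z) = \frac{1}{p(Gz)}\int_{Gz} p(w) f(w)\,dw .
\]
Since $h$ is $G$-invariant, it is constant on each orbit; write $h(Gz) := h(z)$ for this common value.

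First, disintegrate the integral over $Z$ along orbits. I would change variables $w = gz$ with $z \in F$, $g \in G$, so that
\[
\err(h) = \int_Z p(w)\lVert h(w) - f(w)\rVert_2^2\,dw = \int_F \int_{Gz} p(w)\,\lVert h(z) - f(w)\rVert_2^2 \,dw\,dz,
\]
with the measure normalization implicit in the paper's notation $p(Gz)=\int_{Gz}p$. Invariance is used here to replace $h(w)$ by $h(z)$ on the whole orbit.

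Second, apply the bias–variance decomposition on each orbit. Adding and subtracting $\bar f(z)$ gives
\[
\int_{Gz} p\,\lVert h(z) - f\rVert_2^2 = p(Gz)\,\lVert h(z) - \bar f(z)\rVert_2^2 + p(Gz)\,\mathbb{V}_{Gz}[f],
\]
where $\mathbb{V}_{Gz}[f]$ is the $p$-weighted variance of $f$ on the orbit. The cross term vanishes because $h(z)$ is constant on $Gz$ and $\bar f(z)$ is exactly the $p$-weighted mean there. This holds for complex-valued vectors as well, since the cross term is $2\,\mathrm{Re}\,\langle h(z)-\bar f(z), \int_{Gz} p\,(\bar f - f)\rangle = 0$.

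Finally, drop the nonnegative first term and integrate over $F$ to obtain $\err \ge \int_F p(Gz)\,\mathbb{V}_{Gz}[f]\,dz$. Equality holds when $h$ equals the orbit average $\bar f$.

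The main obstacle I expect is making the disintegration step rigorous: the orbit measure on $Gz$ and the measure on $F$ have to combine into Lebesgue measure on $Z$. I would handle this as in \citet{wang2023general}, assuming $F$ is a measurable fundamental domain whose boundary and non-free orbits have measure zero, and taking the orbit measure induced by the coarea/quotient decomposition. With that normalization fixed, the rest is the elementary variance identity above.
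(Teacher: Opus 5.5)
Your proposal is correct and takes essentially the same route as the source. The paper gives no proof of its own and cites Theorem 4.8 of \citet{wang2023general}, whose argument is exactly this: disintegrate the error over orbits indexed by $F$, use invariance to make $h$ constant on each orbit, and apply the orbit-wise bias--variance identity with minimizer $\mathbb{E}_{Gz}[f]$. Your cross-term computation, $2\,\mathrm{Re}\langle h(z)-\bar f(z), \int_{Gz} p\,(\bar f - f)\rangle = 0$, is also worth keeping, because it actually justifies the paper's unproved remark that the bound carries over to complex-valued $f$.
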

\cref{prop:inv} was initially stated for real-valued functions in \citet{wang2023general}, but can be applied to complex valued functions without loss of generality. Furthermore, \citet{wang2023general} provide numerical evidence that \cref{prop:inv} is a tight bound. % This instills confidence that the bound we will prove in \cref{thm:uni_lb} is also tight, since it is a direct application of the proposition.

\subsection{Unitary Approximation Error Lower Bound}

In this subsection, we state our main theoretical result, which demonstrates that unitary neural networks are overconstrained when the norm of the ground truth function has a high angular dependence. Recall the definition of $\mathrm{SU}(n)$, the group of rotations in $\mathbb{C}^n$:
\begin{equation*}
    \mathrm{SU}(n) =  \{U \in \mathbb{C}^{n \times n} \colon \det (U) = 1\}.
\end{equation*}
We can now give an approximation error lower bound for unitary models. See \cref{sec:proof} for the proof.

\begin{theorem} \label{thm:uni_lb}
Let $F$ be a fundamental domain of $\mathrm{SU}(n)$ in $Z$, e.g. $F = \{ t e \colon t \in \mathbb{R}_+\}$ where $e$ is a standard basis vector of $\mathbb{C}^n$. The approximation error of $u$ of $f$ has lower bound
\[
\int_{Z}p(z)\lVert u(z) - f(z)\rVert_2^2dz \geq \int_{F} p(\lVert te \rVert) \mathbb{V}_{Gz}[\rVert f\lVert]dz.
\]
\end{theorem}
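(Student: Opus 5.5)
The plan is to reduce the unitary approximation problem to an invariant one, so that \cref{prop:inv} applies. Here a ``unitary function'' $u$ is read as norm-preserving, $\lVert u(z)\rVert_2 = \lVert z\rVert_2$ for all $z$, as for $f_{\rm UniConv}$. The key observation is that the scalar function $h(z) = \lVert u(z)\rVert_2$ equals $\lVert z\rVert_2$. This function is invariant under $G=\mathrm{SU}(n)$, since $\lVert gz\rVert_2 = \lVert z\rVert_2$ for every $g \in \mathrm{SU}(n)$. So although $u$ itself need not be invariant, its norm is.

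\emph{Step 1 (reverse triangle inequality).} For each $z$, we have $\lVert u(z) - f(z)\rVert_2 \geq \bigl|\,\lVert u(z)\rVert_2 - \lVert f(z)\rVert_2\,\bigr| = |h(z) - \lVert f(z)\rVert_2|$. Squaring and integrating against $p$ gives
\[
\errreg{u} \geq \int_Z p(z)\,\bigl(h(z) - \lVert f(z)\rVert_2\bigr)^2\,dz .
\]
The right-hand side is exactly the regression error of the real-valued $G$-invariant function $h$ against the real target $\lVert f\rVert_2$.

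\emph{Step 2 (apply \cref{prop:inv}).} Applying \cref{prop:inv} with target $\lVert f\rVert$ and invariant approximant $h$ yields $\int_F p(Gz)\,\mathbb{V}_{Gz}[\lVert f\rVert]\,dz$. This is a real-valued problem, so the real-valued statement of \citet{wang2023general} suffices directly. Equivalently, one can argue directly: on each orbit $Gz$, $h$ is constant, and the constant minimizing the $p$-weighted squared error is the orbit mean. The residual is then the orbit variance times the orbit mass. Integrating over the fundamental domain via the decomposition $Z = \cup_{g} gF$ gives the bound.

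\emph{Step 3 (identify orbits and the fundamental domain).} For $n \geq 2$, $\mathrm{SU}(n)$ acts transitively on each sphere $\{\lVert z\rVert = t\}$ in $\mathbb{C}^n$. Hence the orbits are exactly the spheres of radius $t$ (together with $\{0\}$), and $F = \{te : t \in \mathbb{R}_+\}$ meets each orbit exactly once. The orbit density $p(Gz)$ for $z = te$ therefore depends only on $\lVert te\rVert = t$, which we write as $p(\lVert te\rVert)$, matching the statement.

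The main obstacle is bookkeeping rather than ideas. We must make precise the measure-theoretic disintegration of $p$ over spheres, so that $p(Gz)$ and $\mathbb{V}_{Gz}$ are taken with respect to the conditional (normalized surface) measure used in \citet{wang2023general}. We must also check transitivity of $\mathrm{SU}(n)$, which needs $n \geq 2$ and can be handled by adjusting a phase in an orthogonal complement direction. I would handle the former by citing the same orbit-measure conventions as in \cref{prop:inv} and simply noting that they are compatible with spherical coordinates.
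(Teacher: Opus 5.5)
Your proposal is correct and follows the paper's proof essentially step for step. The reverse triangle inequality reduces the error to that of the $\mathrm{SU}(n)$-invariant scalar function $\lVert u(z)\rVert$ against the target $\lVert f\rVert$, and \cref{prop:inv} is then applied with the orbits identified as the concentric spheres indexed by the ray $F=\{te\}$. Your extra remarks are sound refinements that the paper's proof leaves implicit: that the reduced problem is real-valued, so the original real-valued form of \cref{prop:inv} suffices, and that transitivity of $\mathrm{SU}(n)$ on spheres requires $n\geq 2$.
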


The proof of \cref{thm:uni_lb} uses the reverse triangle inequality before applying \cref{prop:inv}. Intuitively, the fundamental domain enumerates all concentric spheres $S^{2n-1}$ embedded in $\mathbb{C}^n$. Unitary functions are complex valued rotations and reflections that preserve the norm of data points that live on each sphere. The error lower bound is given by the variance of the norm of $f$ averaged over each concentric sphere where the norm of $u$ is constant. Our result suggests that unitary functions can be particularly overly constraining when the norm of $f$ has a high angular dependence.  %Because \cref{thm:uni_lb} is a direct application of \cref{prop:inv}, we are confident that \cref{thm:uni_lb} presents a tight bound as well.

\section{Unitary Convolution Constraint Relaxation} \label{sec:relax_uni}

Since \cref{thm:uni_lb} informs us that a unitary convolution network may be overconstraining when the ground truth is not perfectly smoothness preserving, we propose two methods for relaxing unitary convolutions and describe how to extend these architectures from graphs to meshes. We name these methods Taylor truncation and compositional smoothness-breaking. The Taylor truncation method allows precise control of the extent of relaxation, whereas the compositional smoothness-breaking method scales more easily with the number of parameters in the network. The Taylor truncation method is especially useful in cases where the true smoothness is known a priori, in which case theoretical results from the literature (see \cref{sec:sensitivity_more_details}) can inform what truncation order $\mathbf{T_{\rm max}}$ is needed to achieve enough relaxation without grid search or hyperparameter tuning. We will use the first relaxation for a motivating experiment in \cref{sec:motivate} and the second for a more challenging set of tasks in \cref{sec:meshes}. A comparison of the compositional smoothness-breaking and Taylor truncation methods is shown in \cref{fig:methods}. We will use $f_{\rm Layer}$ to define individual layers and \textsc{small caps text} to define architectures constructed from those layers.

\begin{figure}[!htb]
    \centering
    \includegraphics[width=1.0\linewidth]{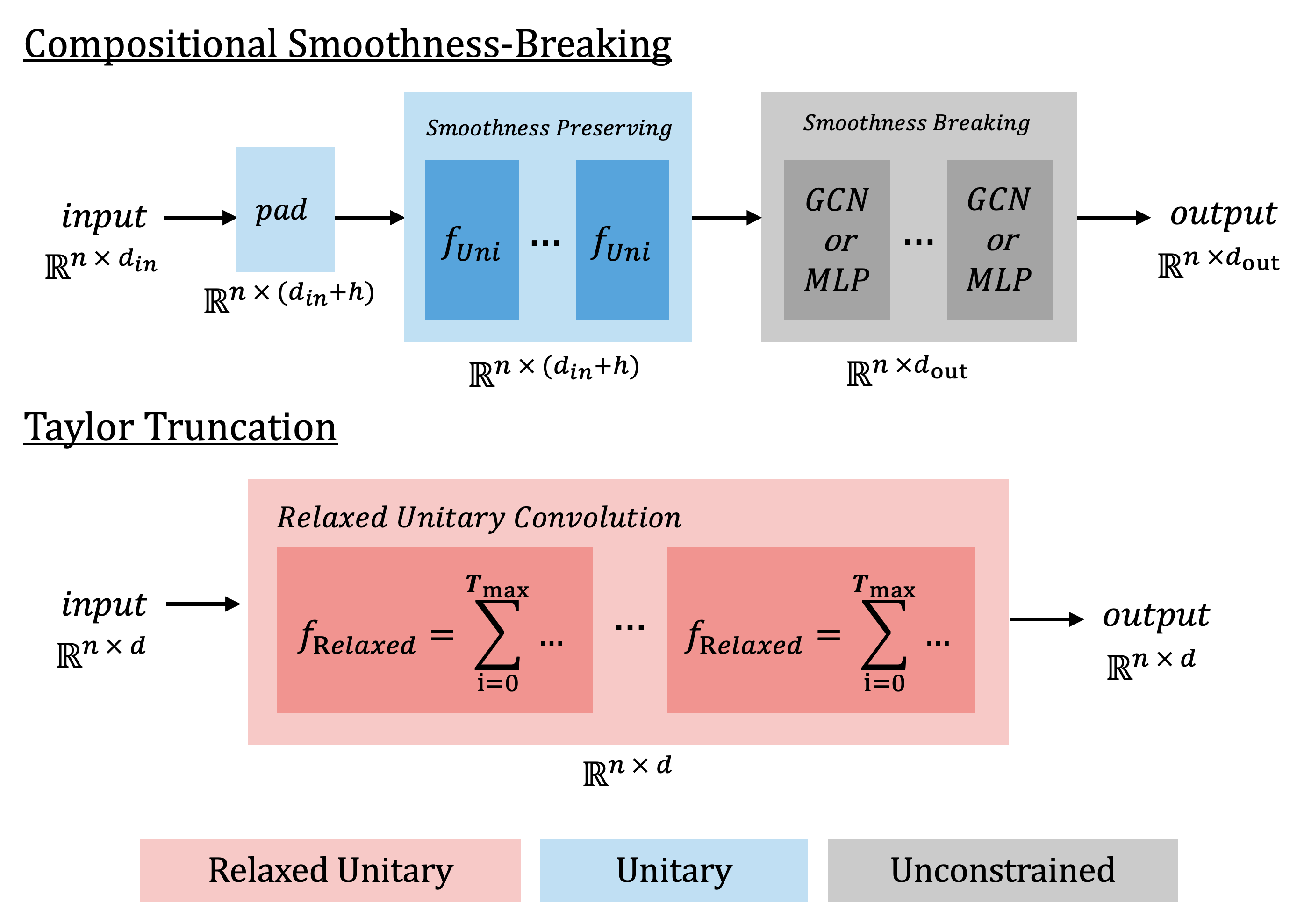}
    \caption{\textbf{Top:} After zero padding, individual unitary blocks are stacked and the output is fed into an unconstrained smoothness-breaking component. \textbf{Bottom:} Each block uses Taylor truncated unitary convolution. } 
    \label{fig:methods}
\end{figure}

\subsection{Relaxed Unitary Convolution via Taylor Truncation} \label{sec:Taylor}

We relax Lie unitary convolution by truncating the Taylor series approximations used in \cref{eq:lie}. We note that \citet{kiani2024unitary} do propose their own constraint relaxation for separable unitary convolutions \cref{eq:separable}. However, their approach conflates two sources of relaxation: Taylor series truncation of the matrix exponential and letting $\mathbf{U}$ remain unconstrained, making it difficult to measure the individual contributions of each source and to tune the extent of the relaxation. By relaxing the Lie unitary convolution rather than the separable unitary convolution, we isolate the architectural component that alters the Rayleigh quotient. Instead of approximating the matrix exponential using enough Taylor series terms so that the truncation error is negligible, we truncate at some $T= \mathbf{T_{\rm max}}$ where $\mathbf{T_{\rm max}}$ controls the extent of the relaxation. Our Taylor-relaxed unitary convolution layer is defined

% by allowing $\mathbf{U}$ to be unconstrained in \cref{eq:separable}, and that our approach instead relaxes \cref{eq:lie}. This allows us to isolate the architectural component that alters the Rayleigh quotient. In contrast, the relaxation in \citet{kiani2024unitary} can be achieved via two different mechanisms: early Taylor series truncation of the matrix exponential and letting $\mathbf{U}$ remain unconstrained in \cref{eq:separable}. The downside of this approach is that the relative contributions of each are difficult to measure and it is therefore harder to tune the extent of the relaxation. Our relaxation is simply Taylor series truncation of Lie unitary convolution (\cref{eq:lie}). Instead of approximating the matrix exponential using enough Taylor series terms so that the truncation error is negligible, we truncate at some $T= \mathbf{T_{\rm max}}$ where $\mathbf{T_{\rm max}}$ controls the extent of the relaxation. Our Taylor-relaxed unitary convolution layer is defined
%
\begin{equation}
f_{\rm Relaxed}(\mathbf{X}; \mathbf{A}, \mathbf{T}_{\rm max})= \overset{\mathbf{T}_{\rm max}}{\underset{i=0}{\sum}}\frac{1}{i!}\mathbf{L}^i(\mathbf{X}) \label{eq:relaxed}
\end{equation}
where $\mathbf{L}(\mathbf{X)} = \mathbf{AXW}$, $\mathbf{W} = -\mathbf{W^\dagger}$. While this approach does not preserve the Rayleigh quotient for small $\mathbf{T_{\rm max}}$, we recover the standard Lie unitary convolution in \cref{eq:lie} in the limit as $T \to \infty$.  %For small $\mathbf{T_{\rm max}}$, the computation time is also significantly reduced compared to standard Lie Unitary convolution.

Motivated by the desire to find an appropriate $\mathbf{T_{\rm max}}$ that applies only a small perturbation to the Rayleigh quotients of input graphs, in \cref{sec:sensitivity_analysis_taylor} we conduct a sensitivity analysis of the Rayleigh quotient to different Taylor series truncations. Consistent with \citet{kiani2024unitary}, we find that $\mathbf{T_{\rm max}}=10$ is sufficient to preserve the Rayleigh quotient. We will refer to models constructed from relaxed layers in \cref{eq:relaxed} as \oursGraph{}.  % Accordingly, we will use $f_{\rm Relaxed}(\mathbf{X}; \mathbf{A}, 10)$ interchangeably with $f_{\rm UniConv}^{\rm Lie}(\mathbf{X}; \mathbf{A})$.

\subsection{Relaxed Unitary Convolutional Models via compositional smoothness-breaking} \label{sec:encoder_decoder}

In this section, we note a limitation of relaxed unitary convolution via Taylor truncation that makes it difficult to scale, and propose an alternative relaxation method that addresses this. Since unitary layers cannot change the channel dimension of the node features, the only way to increase the number of parameters in the network is to increase the number of unitary layers. This means that scaling unitary convolutional models requires making the model deeper, which can make training unstable \citep{balduzzi2017shattered}. As an alternative, we propose a compositional smoothness-breaking method, which subsequently maps the signal through a smoothness preserving block $P$ and then through a smoothness breaking block $B$.The smoothness preserving block first zero-pads the input node features to the desired hidden dimension. This allows us to increase the parameter count without increasing depth. Zero padding also trivially preserves the Rayleigh quotient since it preserves norms. Concretely, we define our zero padding function $f_{\rm pad} \colon \mathbb{R}^{n\times d_{\rm in}} \to \mathbb{R}^{n\times d_{\rm h}}$ by $\mathbf{X} \mapsto \mathbf{X} \oplus 0$, where $0$ is the $\mathbb{R}^{n \times (d_{\rm h} - d_{\rm in})}$ zero matrix. Our approach to increasing the latent dimension is distinct from most prior works on unitary convolution which instead use learnable maps \cite{kiani2024unitary, fesser2026unitary}, though it was hypothesized that zero padding would be an effective approach for approximately unitary models in \citet{kiani2024unitary}. Next, we define our $k$-layer smoothness-preserving block $P$ by 
\begin{equation} 
P = f_{\rm UniConv}^{(k)} \circ \dots \circ f_{\rm UniConv}^{(1)}(f_{\rm pad}(\mathbf{X}), \mathbf{A})  \label{eq:encoder} 
\end{equation}
where $f_{\rm UniConv}$ is either separable or Lie. The smoothness-breaking block $B$ then serves two purposes: (\textbf{i}) map to the target node feature dimension and (\textbf{ii}) break the unitary constraint. The smoothness-breaking block can be any network, such as an MLP or GCN. 

\subsection{Relaxed Unitary Convolution on Meshes} \label{sec:relaxed_meshes}

We now generalize unitary convolutional models, relaxed unitary convolutional models, and the mathematical definition of Rayleigh quotient from graphs to meshes. This enables us to solve dynamics problems on manifolds by discretizing them as meshes, such as testing the thermal stability of mechanical parts or weather forecasting on Earth. In particular, we prove that, under modest assumptions on the mesh triangulation, unitary convolution with a weighted adjacency matrix preserves the Rayleigh quotient on meshes (\cref{def:mesh_rq}); enabling generalization of both unitary and relaxed unitary models to meshes.

\paragraph{Mesh Rayleigh Quotient.} We first generalize the Rayleigh quotient from graphs to meshes by using the mesh Laplacian instead of the graph Laplacian. The Laplacian on a mesh is typically defined as the symmetric cotangent Laplacian \citep{reuter2009discrete} given in \cref{eq:new_laplacian}, which approximates the Laplace-Beltrami operator for the continuous manifold which the mesh discritizes. For a scalar function $s$ defined on nodes,
\begin{equation}
    (\mathbf{\tilde{L}}(s))_i = \frac{1}{2A_i}\underset{j \in \mathcal{N}(i)}{\sum} \left(\cot \alpha_{ij} + \cot \beta_{ij} \right)(s_j - s_i) \label{eq:new_laplacian}
\end{equation}
where $\mathcal{N}(i)$ denotes the adjacent vertices of $i$, $\alpha_{ij}$ and $\beta_{ij}$ are the angles opposite edge $(i, j)$, and $A_i$ is the vertex area of $i$. We use the barycentric cell area for $A_i$. We note that it is invalid to define the mesh Rayleigh quotient by replacing $\mathbf{L}$ in \cref{eq:rayleigh_quotient_other_og} with the symmetric cotangent Laplacian $\mathbf{\tilde{L}}$ in \cref{eq:new_laplacian}. The cotangent weights in \cref{eq:new_laplacian} may be negative, which means that the Rayleigh quotient is no longer a valid measure of smoothness \citep[Definition 1,][]{rusch2023survey}. To address this, we use the \textit{Robust Laplacian} \cite{sharp2020laplacian}, which performs a minimal edge rewiring of the mesh so that the cotangent weights obey the \textit{intrinsic Delaunay criterion}. 

\begin{definition}[Intrinsic Delaunay Criterion, \citep{bobenko2007discrete}] \label{def:delaunay}
For all faces connected by an edge $(i,j)$ with opposite angles $\alpha_{ij}$ and $\beta_{ij}$, $\alpha_{ij} + \beta_{ij} \leq \pi$. %This is true if and only if $\cot \alpha_{ij} + \cot \beta_{ij} \geq 0$.
\end{definition}

Concretely, this means that our Laplacian weights are symmetric and the off-diagonals are nonnegative; the edge rewiring simply provides an alternative discretization of the same manifold. Denote by $\mathcal{W}$ the cotangent weights 
\begin{equation}
    \mathcal{W}_{ij} = \begin{cases}
        \frac{1}{2}\left(\cot \alpha_{ij} + \cot \beta_{ij}\right), & j \in \mathcal{N}(i)\\
        - \underset{k \in \mathcal{N}(i)}{\sum}\mathcal{W}_{ik}, & i = j\\
        0, & \text{Otherwise.}
    \end{cases} \label{eq:cot_weights}
\end{equation}
We define a novel Rayleigh quotient for meshes as follows. 

\begin{definition}[Mesh Rayleigh Quotient] \label{def:mesh_rq}
Let $\mathcal{M}=(\mathcal{V},\mathcal{E},\mathcal{F})$ be a mesh with $|V|=n$ nodes. Denote by $\mathcal{W}$ the cotangent weights corresponding to the Robust Laplacian $\tilde{\mathbf{L}}$. Denote by $\mathcal{E}'$ the rewired edge set given by $\mathcal{E}' = \{(u,v) \colon \mathcal{W}_{uv} \neq 0\}$. Let $s$ and $\mathbf{X}$ be the same as in \cref{def:RQ}. The mesh Rayleigh quotient is defined
 \begin{equation*}
        R_{\mathcal{M}}(\mathbf{X}) = \frac{1}{2}\frac{\underset{(u, v) \in \mathcal{E}'}{\sum} \mathcal{W}_{uv}\left\| \frac{s(u)}{\sqrt{d_u}} - \frac{s(v)}{\sqrt{d_v}}\right\|^2}{\underset{w \in V}{\sum} \left\| s(w) \right\|^2}  = \frac{\Tr{(\mathbf{X^\dagger \tilde{L}X})}}{
||\mathbf{X}||^2_F}.
    \end{equation*}
\end{definition}

%For further details on the cotangent Laplacian, see \cref{sec:mesh_laplacian}.

\paragraph{Unitary Mesh Convolution.} We now make similar modifications to generalize unitary convolution from graphs to meshes. Specifically, we modify the functions in \cref{eq:separable} and \cref{eq:lie} by incorporating the cotangent weights (\cref{eq:cot_weights}) into the normalized adjacency matrix $\mathbf{\tilde{A}}$. We note that this edge weighting been shown to improve PDE solving numerically \citep{crane2017heat, sharp2020laplacian} and we are the first to use it in deep dynamics models. In order to prove that incorporating these weights preserves the Rayleigh quotient given by \cref{def:mesh_rq}, we assume the mesh already satisfies the Delaunay criterion.

\begin{assumption}[Mesh Weights Obey the Delaunay Criterion] \label{ass:assumption}
For a mesh $\mathcal{M}$, the mesh is manifold and all angles obey the Delaunay Criterion given by \cref{def:delaunay}.
\end{assumption}

We note that there are existing triangulation strategies that a practitioner can use to ensure that mesh edges satisfy this criterion as a standard data preprocessing step \citep{huang2018robust, sharp2020laplacian}, see \cref{sec:mesh_laplacian} for details. With this assumption, we will now define unitary mesh convolution. Let $\mathbf{D}$ be the weighted degree matrix defined by $\mathbf{D}_{ii} = \sum_{i \neq j}\mathcal{W}_{ij}$. Let $\odot$ represent the Hadamard product which performs element-wise matrix multiplication. Let $\mathbf{\tilde{A}}$ be the normalized adjacency matrix $\mathbf{\tilde{A}} = \mathbf{D^{-1/2}}\left(\mathcal{W} \odot \mathbf{A}\right)\mathbf{D^{-1/2}}$. We define separable unitary mesh convolution as
\begin{equation}
f_{\rm UniMeshConv}^{\mathrm{Sep}}(\mathbf{X;A, \mathcal{W}}) = \exp (i \mathbf{\tilde{A}})\mathbf{XU}\label{eq:msep} 
\end{equation}
and Lie unitary mesh convolution as
\begin{equation}
f_{\rm UniMeshConv}^{\mathrm{Lie}}(\mathbf{X;A},\mathcal{W}) = \exp (\mathbf{\tilde{A}}\mathbf{XW})\label{eq:mlie}
\end{equation}
where $\mathbf{UU^\dagger = I}$ and $\mathbf{W + W^\dagger = 0}$. The following Corollary (proven in \cref{sec:mesh_conv}) states that \cref{eq:msep} and \cref{eq:mlie} preserve the Rayleigh quotient on meshes.
% , i.e., for $C=A \odot B$ we have $C_{ij} = A_{ij}B_{ij}$

\begin{corollary}[Corollary to \cref{prop:rq_preserved}] \label{cor:main_cor}
Given a mesh $\mathcal{M}$ with normalized adjacency matrix $\mathbf{\tilde{A}} = \mathbf{D^{-1/2}}(\mathcal{W} \odot \mathbf{A})\mathbf{D^{-1/2}}$ that satisfies \cref{ass:assumption}, the mesh Rayleigh quotient is invariant under normalized unitary or orthogonal graph convolution, i.e. $R_{\mathcal{M}} (\mathbf{X}) = R_\mathcal{M} (f_{\rm UniMeshConv} (\mathbf{X}))$ where $f_{\rm UniMeshConv}$ is either separable or Lie.
\end{corollary}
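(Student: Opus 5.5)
The plan is to reduce \cref{cor:main_cor} to the algebraic mechanism behind \cref{prop:rq_preserved}. In \citet{kiani2024unitary} that mechanism uses only three facts. First, $\widetilde{\mathbf{A}}$ is Hermitian (real symmetric). Second, the Laplacian is $\mathbf{L} = \mathbf{I} - \widetilde{\mathbf{A}}$. Third, the convolution operators are unitary and commute with $\mathbf{L}$. So the first step is to show that, under \cref{ass:assumption}, the weighted matrix $\widetilde{\mathbf{A}} = \mathbf{D}^{-1/2}(\mathcal{W}\odot\mathbf{A})\mathbf{D}^{-1/2}$ has these properties. The Laplacian here is the one appearing in \cref{def:mesh_rq}, which I take to be the normalized form $\mathbf{I}-\widetilde{\mathbf{A}}$, consistent with the $\sqrt{d_u}$ normalization in the edge sum.

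Expanding $\sum_{(u,v)\in\mathcal{E}'}\mathcal{W}_{uv}\|s(u)/\sqrt{d_u}-s(v)/\sqrt{d_v}\|^2$ with weighted degrees $d_u=\mathbf{D}_{uu}$ gives $2\Tr(\mathbf{X}^\dagger(\mathbf{I}-\widetilde{\mathbf{A}})\mathbf{X})$. This confirms the trace form in \cref{def:mesh_rq}.

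\cref{ass:assumption} is used in two places. The Delaunay criterion gives $\cot\alpha_{ij}+\cot\beta_{ij}\ge 0$, since $\sin(\alpha+\beta)\ge0$ when $\alpha+\beta\le\pi$. Hence the off-diagonal weights are nonnegative, so $\mathbf{D}$ has nonnegative entries and $\mathbf{D}^{-1/2}$ is a real diagonal matrix. Here I assume no isolated vertices, which follows from the manifold condition. Symmetry of the cotangent weights, $\mathcal{W}_{ij}=\mathcal{W}_{ji}$, then makes $\widetilde{\mathbf{A}}$ real symmetric. Nonnegativity also makes the quotient a genuine smoothness measure, but only symmetry is needed for the invariance argument.

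Next comes the invariance itself.

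\emph{Separable case.} Let $\mathbf{Y}=\exp(i\widetilde{\mathbf{A}})\mathbf{X}\mathbf{U}$. Because $\widetilde{\mathbf{A}}$ is Hermitian, $V=\exp(i\widetilde{\mathbf{A}})$ is unitary and commutes with $\widetilde{\mathbf{A}}$. Therefore
\[
\Tr(\mathbf{Y}^\dagger\mathbf{L}\mathbf{Y})=\Tr(\mathbf{U}^\dagger\mathbf{X}^\dagger V^\dagger\mathbf{L}V\mathbf{X}\mathbf{U})=\Tr(\mathbf{X}^\dagger\mathbf{L}\mathbf{X}\mathbf{U}\mathbf{U}^\dagger)=\Tr(\mathbf{X}^\dagger\mathbf{L}\mathbf{X}),
\]
and likewise $\|\mathbf{Y}\|_F=\|\mathbf{X}\|_F$.

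\emph{Lie case.} Interpret $\exp(\widetilde{\mathbf{A}}\mathbf{X}\mathbf{W})$ as in \citet{kiani2024unitary}: the exponential of the linear operator $\mathcal{L}:\mathbf{X}\mapsto\widetilde{\mathbf{A}}\mathbf{X}\mathbf{W}$ on $\mathbb{C}^{n\times d}$ with the Frobenius inner product. Its adjoint is $\mathbf{X}\mapsto\widetilde{\mathbf{A}}^\dagger\mathbf{X}\mathbf{W}^\dagger=-\mathcal{L}(\mathbf{X})$, so $\mathcal{L}$ is skew-adjoint and $\exp(\mathcal{L})$ is unitary. This preserves the denominator. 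For the numerator, $\mathcal{L}$ commutes with the map $\mathcal{P}:\mathbf{X}\mapsto\mathbf{L}\mathbf{X}$, because $\mathbf{L}$ is a polynomial in $\widetilde{\mathbf{A}}$ acting on the left and $\mathbf{W}$ acts on the right. Hence $\langle e^{\mathcal{L}}\mathbf{X},\mathcal{P}e^{\mathcal{L}}\mathbf{X}\rangle=\langle\mathbf{X},e^{-\mathcal{L}}e^{\mathcal{L}}\mathcal{P}\mathbf{X}\rangle=\Tr(\mathbf{X}^\dagger\mathbf{L}\mathbf{X})$.

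Alternatively, once symmetry of $\widetilde{\mathbf{A}}$ is established, I can cite \cref{prop:rq_preserved} directly. Its proof never uses that the adjacency entries are in $\{0,1\}$, only that the normalized adjacency is symmetric, so it carries over verbatim to weighted graphs.

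The main obstacle is the first step: matching the mesh quotient's Laplacian to $\mathbf{I}-\widetilde{\mathbf{A}}$ with the right weighted degree normalization. The issue is that \cref{eq:cot_weights} sets the diagonal of $\mathcal{W}$ to minus the row sum, and the Hadamard product with $\mathbf{A}$ has to remove that diagonal. Concretely, I would check that $\mathcal{W}\odot\mathbf{A}$ has zero diagonal, since $\mathbf{A}$ has no self-loops, and that $\mathbf{D}=\operatorname{diag}(\sum_{j\ne i}\mathcal{W}_{ij})$ is positive. I would also check that $\mathbf{A}$ is taken on the Delaunay (rewired) edge set $\mathcal{E}'$. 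This is what \cref{ass:assumption} guarantees, since the mesh is then already Delaunay and no rewiring occurs.
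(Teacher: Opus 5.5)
Your proof is correct and follows the paper's route. Under \cref{ass:assumption}, $\tilde{\mathbf{A}}$ is real symmetric, so $\exp(i\tilde{\mathbf{A}})$ and the Lie exponential are unitary and commute with $\mathbf{I}-\tilde{\mathbf{A}}$, which preserves both the numerator and the Frobenius norm. The only difference is cosmetic: you handle the Lie case through skew-adjointness of $\mathbf{X}\mapsto\tilde{\mathbf{A}}\mathbf{X}\mathbf{W}$ under the Frobenius inner product, while the paper vectorizes and works with $\tilde{\mathbf{A}}\otimes\mathbf{W}^T$, which is the same argument.
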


\paragraph{Relaxed Unitary Mesh Convolution.} We create a network architecture by coupling the compositional smoothness-breaking relaxation (\cref{sec:encoder_decoder}) with Lie unitary mesh convolution in \cref{eq:mlie}. Concretely, nodes are first zero padded. The smoothness-preserving block $P^{(k)}$ (\cref{eq:encoder}) is constructed from $k$ layers of Lie unitary mesh convolution (\cref{eq:mlie}), and a MLP or GCN smoothness-breaking block $B$ maps to the target. We name our relaxed model \ours{}. \ours{} uses the norm preserving GroupSort activation from \citet{anil2019sorting} to introduce nonlinearity. \ours{} also uses \textit{orthogonal} weights, since our modeling tasks on meshes in \cref{sec:experiments} are real valued.

\section{Experiments} \label{sec:experiments}

\subsection{Motivating Experiment: Heat Flow on Grid Graphs} \label{sec:motivate}

In this section, we motivate the use of relaxed unitary models by showing that \oursGraph{} is able to outperform both normal and Lie unitary graph convolution on predicting heat diffusion. The Taylor truncation method is key to balancing the smoothness preservation of unitary models with the flexibility to capture the true smoothness of the target heat graph. %We attribute the success of \oursGraph{} not only to constraint relaxations but also the tendencies of GCNs to oversmooth at initialization. We provide evidence for this by studying the training curves for each model. 

\paragraph{Heat Diffusion Dataset.} We use \texttt{PyGSP} \cite{pygsp} to simulate heat diffusion on $10,000$ two-dimensional grids, each initialized with $20$ randomly placed heat sources. Denote by $H \colon \mathbb{R}_+ \to \mathbb{R}^{n}$ a function that maps time $t$ to the heat distribution of the graph with $n$ nodes. In other words, the heat field on the graph is represented by a feature vector in $\mathbb{R}^{n \times 1}$. In particular, $H(t)= e^{-\tau t \mathbf{L}} H(0)$ where $\tau$ is a diffusivity constant, $\mathbf{L}$ is the graph Laplacian, and $H(0)$ is the initial heat values across the graph. The task is to predict the heat distribution on the graph at time $t=4$ given the heat distribution at time $t=3$. We denote the target heat field as $\mathbf{Y} = H(4)$. See \cref{sec:dataset_details} for further dataset details. We compare the performance in terms of MSE loss and mean Rayleigh quotient error for three models: GCN, \oursGraph{}, and a Lie unitary model. We use $\mathbf{T_{\rm max}} = 3$ for the relaxed model. The mean Rayleigh quotient error is given by 
\begin{equation*}
\mathrm{MRE}(f) = |\overline{R_{\mathcal{G}}(f(\mathbf{X}))} - \overline{R_{\mathcal{G}}(\mathbf{Y})}|.
\end{equation*}
%
%Further dataset details are given in \cref{sec:dataset_details}.

\paragraph{Results.} As shown in \cref{tab:relaxed_performance} and \cref{fig:loss_and_rq}, the relaxed model significantly outperforms the GCN and also outperforms the Lie unitary model. Moreover, the relaxed model is best able to produce graphs whose smoothness matches that of the true labels. Our results validate Proposition 7 in \citet{kiani2024unitary} (also provided in \cref{sec:GCN_smoothing}), which states that GCNs with standard weight initialization tend to increase the smoothness of input signals on the graph as measured by the Rayleigh quotient. Crucially, our experiments reveal that GCNs not only smooth the input signal, but they \textit{oversmooth} compared to the target signal, even when some smoothing is desirable, i.e. the output signal is smoother than the input. In contrast, \ours{} is often initialized in an \textit{undersmooth} state and is able to learn to increase the smoothness of input signals during training to match the smoothness of the target. These insights extend to heat flow on more intricate mesh datasets, as we show in \cref{sec:meshes}.

\paragraph{Ablation.} We analyze the sensitivity of our method to different $\mathbf{T}_{\rm max}$ in \cref{sec:sensitivity_analysis_taylor}, finding that our method is stable across different degrees of relaxation.

\begin{table}[!t]
    \centering
    \tiny
    \begin{tabular}{ccc} 
    \toprule
    Model     & MSE $(\downarrow)$ & MRE $(\downarrow)$ \\ \midrule 
    \hyperlink{cite.kipf2017semisupervised}{GCN} & $1.08 \cdot 10^{-2}$ & $5.99 \cdot 10^{-2}$ \\
    \hyperlink{cite.kiani2024unitary}{Lie Uni}  & $0.14 \cdot 10^{-2}$ & $8.86 \cdot 10^{-2}$ \\ 
    \oursGraph{} \textbf{(Ours)}  & $\mathbf{0.11 \cdot 10^{-2}}$  & $\mathbf{2.07 \cdot 10^{-2}}$\\ \bottomrule
    \end{tabular}
    \caption{MSE and mean Rayleigh quotient error (MRE) of GCN, a Lie unitary convolution network, and \ours{}. The best run for each method out of $5$ runs is shown. The best performance is bold.}
    \label{tab:relaxed_performance}
\end{table}

\begin{figure}[!htb]
    \centering
    \includegraphics[width=1\linewidth]{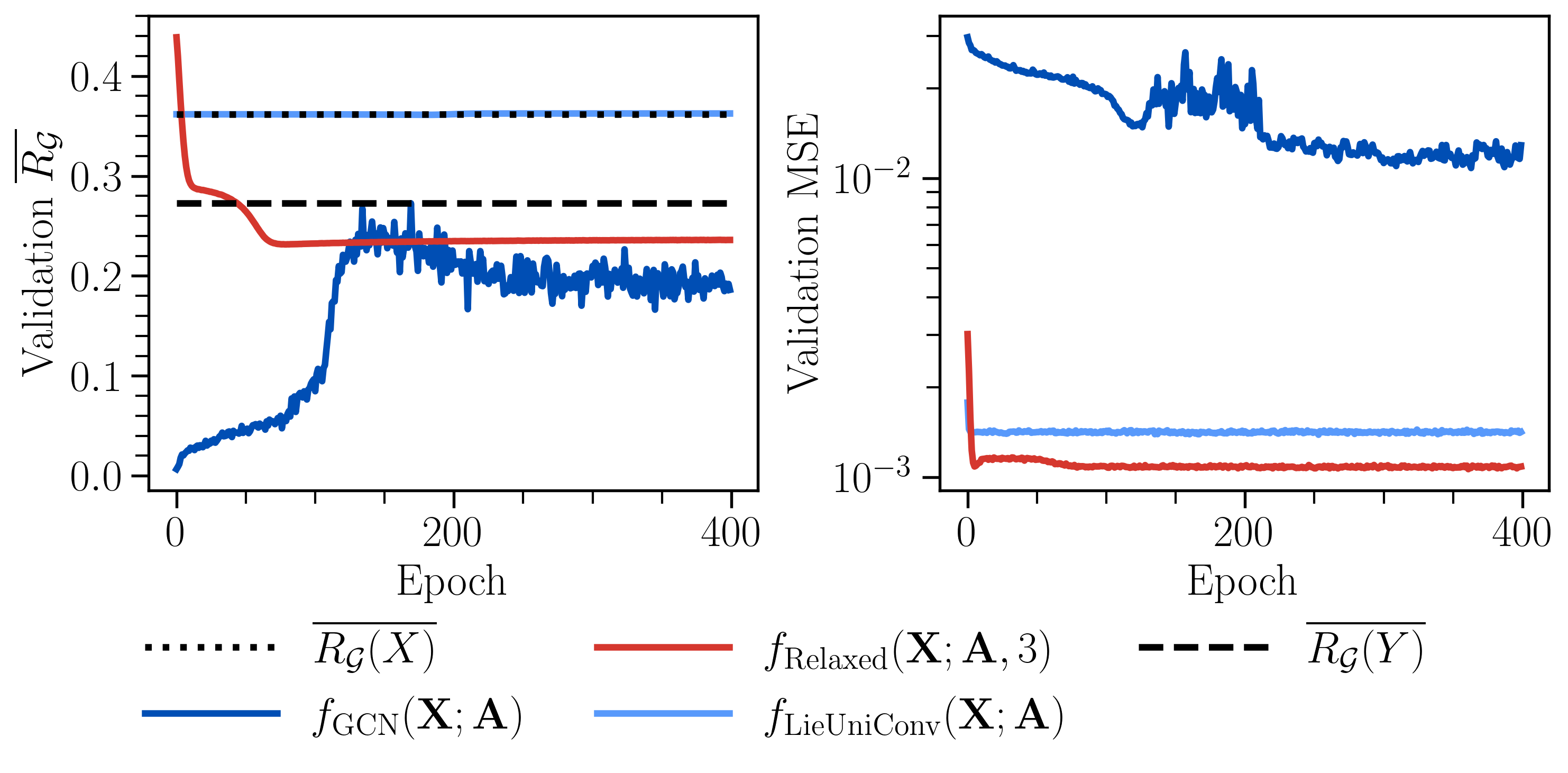}
    \caption{\textbf{Left:} The dotted lines indicate the mean Rayleigh quotient for input heat graphs at $T=3$ and target graphs at $T=4$. We also show the mean Rayleigh quotient for the best performing GCN, \oursGraph{}, and Lie unitary models. \oursGraph{} is best at capturing the true smoothness. \textbf{Right:} Validation MSE of the same three models. \oursGraph{} has the best performance.  Results for the full set of runs are provided in \cref{sec:ensemble}.}  
    \label{fig:loss_and_rq} % The unitary model is trained for less epochs due to compute limitations.
\end{figure}

\subsection{Dynamics on Mesh Manifolds} \label{sec:meshes}

We now consider more challenging and realistic tasks and show that \ours{} is competitive with strong baselines and outperforms all other models on diffusive dynamics. Specifically, our experiments reveal the following practical conclusions: \textbf{(i)} \ours{} performs as well as strong baselines such as mesh-aware transformers and equivariant neural networks; \textbf{(ii)} \ours{} is especially strong on heat diffusion tasks; \textbf{(iii)} geometric inductive biases, such as unitarity or equivariance, are important for strong performance on unseen meshes with complex geometries. We support these conclusions with simulated and real-world dynamics datasets on mesh manifolds, including PDE solving on the \pv{} \cite{sullivan2019pyvista} meshes from \citet{park2023modelingdynamicsmeshesgauge} and weather forecasting on the Earth mesh from WeatherBench$2$ (WB$2$) \cite{rasp2024weatherbench}.

\begin{table*}[!htb]
\centering
\tiny
\begin{tabular}{lcccccccc}
\toprule
& \multicolumn{3}{c}{\textbf{Convolutional}} 
& \multicolumn{2}{c}{\textbf{Attentional}} 
& \multicolumn{3}{c}{\textbf{Message Passing}} \\
\cmidrule(lr){2-4}
\cmidrule(lr){5-6}
\cmidrule(lr){7-9}
\textbf{Metric} &
\hyperlink{cite.kipf2017semisupervised}{GCN} &
\hyperlink{cite.dehaan2021gaugeequivariantmeshcnns}{GemCNN} &
\ours{} \textbf{(Ours)} &
\hyperlink{cite.basu2022equivariant}{EMAN} &
\hyperlink{cite.janny2023eagle}{Transformer} & 
\hyperlink{cite.g2017}{MPNN} &
\hyperlink{cite.satorras2022enequivariantgraphneural}{EGNN} &
\hyperlink{cite.park2023modelingdynamicsmeshesgauge}{Hermes} \\
\midrule
\multicolumn{9}{c}{\textbf{Heat} $(\alpha  = 1)$} \\ \midrule
NRMSE $(\downarrow)$
& -- & -- & $\mathbf{51.9 \pm 3.6}$ & $73.50 \pm 3.8$ & $92.5 \pm 5.6$ & $99.45 \pm 4.8$ & $344.25 \pm 110.5$ & $\underline{73.02 \pm 4.7}$ \\
SMAPE $(\downarrow)$
& --  & $375.4 \pm 0.53$ & $\mathbf{79.7 \pm 5.6}$ & $110.9 \pm 13.3$ & $213.9 \pm 2.7$ & $223.6 \pm 1.5$ & $319.33 \pm 7.59$ & $\underline{107.6 \pm 7.4}$ \\
RE $(\downarrow)$
& -- & $52.21 \pm 9.4$ & $\mathbf{9.1 \pm 7.4}$ & $\underline{14.2 \pm 1.4}$ & $46.0 \pm 3.7$ & $76.06 \pm 3.6$ & $81.5 \pm 8.77$ & $39.76 \pm 4.7$ \\
\midrule
\multicolumn{9}{c}{\textbf{Wave} $(c  = 1)$} \\ \midrule
NRMSE $(\downarrow)$
& -- & -- & $\mathbf{236.5 \pm 6.4}$ & $\underline{281.3 \pm 15.5}$ & $864.9 \pm 184.9 $ & $563.6 \pm 7.75$ & $2280.1 \pm 559.9$ & $458.5 \pm 13.0$ \\
SMAPE $(\downarrow)$
& -- & $318.8 \pm 3.9$ & $385.2 \pm 1.2$ & $\mathbf{301.0 \pm 4.2}$ & $327.0 \pm 4.4$ & $318.0 \pm 2.8$ & $354.3 \pm 11.0$ & $\underline{316.4 \pm 4.5}$ \\
RE $(\downarrow)$
& -- & $107.9 \pm 3.158$ & $93.5 \pm 25.4$ & $73.57 \pm 6.5$ & $\mathbf{48.0 \pm 7.9}$ & $139.3 \pm 10.1$ & $157.2 \pm 14.8$ & $\underline{70.03 \pm 6.1}$ \\
\midrule
\multicolumn{9}{c}{\textbf{Cahn-Hilliard}} \\ \midrule
NRMSE $(\downarrow)$
& -- & $\mathbf{121.2 \pm 1.8}$ & $123.9 \pm 2.6$ & $137.5 \pm 0.69$ & $144.4 \pm 0.8$ & $147.4 \pm 11.36$ & $1001.04 \pm 5.73$ & $\underline{122.0 \pm 7.8}$ \\
SMAPE $(\downarrow)$
& -- & $204.3 \pm 2.4$ & $\underline{167.3 \pm 10.6}$ & $\mathbf{143.7 \pm 2.5}$ & $191.7 \pm 2.0$ & $201.22 \pm 32.79$ & $336.5 \pm 2.777$ & $173.3 \pm 4.3$ \\
RE $(\downarrow)$
& -- & $\mathbf{10.68 \pm 3.3}$ & $18.9 \pm 10.4$ & $48.57 \pm 3.49$ & $27.42 \pm 2.87$ & $23.98 \pm 6.51$ & $41.8 \pm 1.997$ & $\underline{14.38 \pm 11.5}$ \\
\bottomrule
\end{tabular}
\caption{NRMSE, SMAPE, and RE averaged over all rollouts on all test meshes for the heat, wave, and Cahn-Hilliard equations. The best values are in bold and second best are underlined. Errors and standard deviations are reported over all test meshes and initializations. Cells with a dash (--) indicate models that do not converge for a given metric. Errors are scaled by $\times 196$, the number of rollout timesteps. \ours{} is competitive across all tasks and excels at solving the heat equation on unseen meshes. Models are grouped together by flavor: convolutional, attention, or message passing \cite{bronstein2021geometric}.}   
\label{tab:IRE}
\end{table*}

\paragraph{Baselines.} We include as baselines standard GNN models without any specific inductive biases for working on meshes, including a GCN \cite{kipf2017semisupervised} and an MPNN \cite{g2017}. Additionally, we study symmetry preserving equivariant models, including gauge and Euclidean equivariance (formally defined in \cref{sec:symmetries}). Informally, Euclidean equivariant models are invariant to roto-translations of the mesh in Cartesian coordinates and Gauge Equivariant GNNs are invariant to a choice of reference angle for models that work in local coordinates of the mesh-manifold. These models have been shown to be particularly strong for PDE solving on meshes \citep{park2023modelingdynamicsmeshesgauge}. We benchmark a state-of-the-art (SOTA) Euclidean equivariant model \cite{satorras2022enequivariantgraphneural} as well as different types of Gauge Equivariant GNNs, including convolutional with GemCNN \cite{dehaan2021gaugeequivariantmeshcnns}, attentional with EMAN \cite{basu2022equivariant}, and message passing with Hermes \cite{park2023modelingdynamicsmeshesgauge}. We also consider a SOTA mesh transformer \cite{janny2023eagle}. Finally, we include an operator learning method with MeshGraphNets \cite{pfaff2021learningmeshbasedsimulationgraph}. We compare these baselines with \ours{}. 

\paragraph{Datasets.} The first task is to auto-regressively predict the solution to the heat, wave, and Cahn-Hilliard equations on test meshes given an initial condition. These equations are defined formally in \cref{sec:pdes}. Informally, the heat equation describes the dissipation of temperature. The wave equation describes the propagation of oscillations through a medium. The Cahn-Hilliard equation describes phase separation and gives rise to distinct phase interfaces, for example, the separation of oil and water in a mixture. We use the same \pv{} meshes generated in \citet{park2023modelingdynamicsmeshesgauge}. These meshes are highly intricate and test the models' ability to handle nonlinear dynamics on complicated geometries. We use five different initializations for each test mesh. Further dataset and training details are provided in \cref{sec:pdes} and \cref{sec:gegnn}. We will refer to this dataset as MeshPDE in the remainder of the paper. %Sample meshes and initializations can be found in \cref{sec:pdes}. Training details are given in \cref{sec:gegnn}. We will refer to this dataset as MeshPDE in the remainder of the paper.

The second task is weather forecasting using WB$2$ \cite{rasp2024weatherbench}, a widely used benchmark for data-driven global weather forecasting based on historic data. Specifically, we auto-regressively predict future weather conditions on Earth given an initial condition. A formal problem statement is in \cref{sec:train_ps}. We train and evaluate our models on the ERA5 dataset from WB2, which is the curated version of the ERA5 reanalysis data provided by the European Centre for Medium-Range Weather Forecasts (ECMWF) \cite{https://doi.org/10.1002/qj.3803}. We use $1.5$ ($240 \times 120$) degree spatial resolution data with a $6$ hour temporal resolution, consistent with the evaluation performed in WB2. Further details on mesh construction can be found in \cref{sec:earth_dis}. We evaluate on two variables, temperature at pressure level 850 (T850) and geopotential at pressure level 500 (Z500). We take data from 2013-01-01 to 2019-12-31 UTC as training data. We use a smaller subset of the ERA5 data that is commonly used for other large scale data-based weather models due to compute constraints, but remain consistent to WB2 in evaluating on data from 2020-01-01 to 2020-12-31. %Note that unlike many large scale weather forecasting models that predict multiple variables and levels simultaneously, we train one model for each variable and level pair."

%\subsubsection{Evaluation} \label{sec:eval} 

\paragraph{Evaluation.}

For MeshPDE, our metrics include normalized root mean squared error (NRMSE), symmetric mean absolute percentage error (SMAPE), and Rayleigh quotient errors aggregated over all time-steps. In particular, the Rayleigh error (RE) is given by:
\begin{equation*}
\text{RE}(f) = \frac{1}{\mathbf{T_{\rm max}}}\sum_{t} ^{\mathbf{T_{\rm max}}}| R_\mathcal{M}(\mathbf{Y_t}) - R_\mathcal{M}(f(\mathbf{X_t})) |. 
\end{equation*}
Further details on these metrics are provided in \cref{sec:eval_details} and we compare RE with more multiscale and local smoothness metrics in \cref{sec:cf} and \cref{sec:local}. We supplement these metrics with qualitative diagnostic figures in \cref{sec:diagnostics} and with videos on our \href{https://github.com/EdwardBerman/rayleigh_analysis/tree/main/sup_mat}{GitHub repository}, showing in particular that Rayleigh errors are consistent with visual assessments of smoothness. For WB$2$, we report the root mean squared error (RMSE) and the anomaly correlation coefficient (ACC), both latitude weighted as recommended by the benchmark authors. RMSE measures forecast accuracy, while ACC is the Pearson correlation coefficient between forecast anomalies and ground-truth anomalies relative to a climatological baseline. Precise definitions are provided in \cref{sec:wb_deets}. %of the latitude weights, RMSE, ACC, and the climatology computation 

\begin{figure}[!htb]
    \centering
    \includegraphics[width=0.6\linewidth]{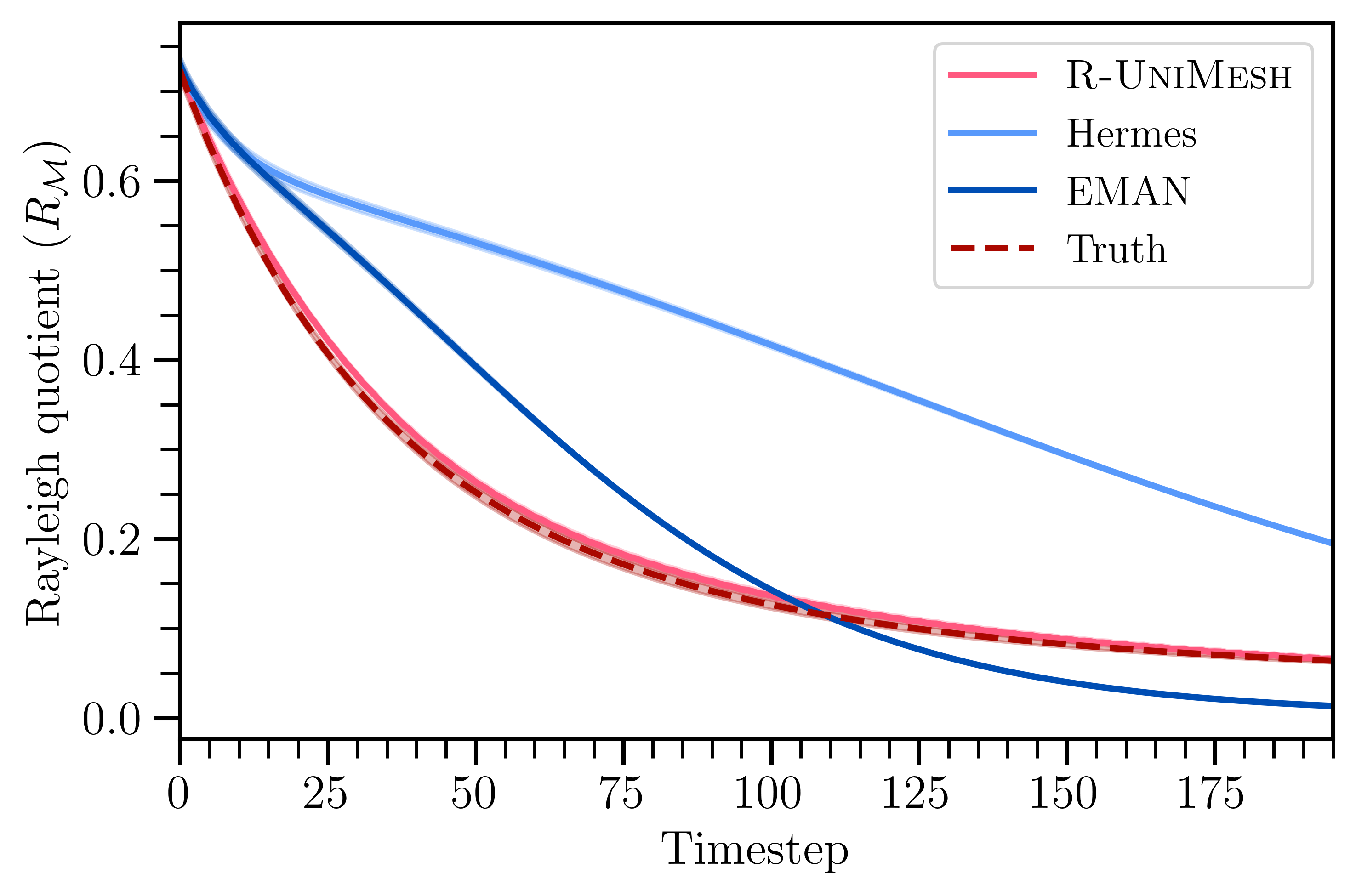}
    \caption{The Rayleigh quotient for each timestep on an unseen mesh for Hermes, EMAN, and \ours{} models. The \ours{} is the best at capturing the true smoothness for heat.}  
    \label{fig:rq_rollout}
\end{figure}

\begin{figure}[!htb]
    \centering
    \includegraphics[width=0.9\linewidth]{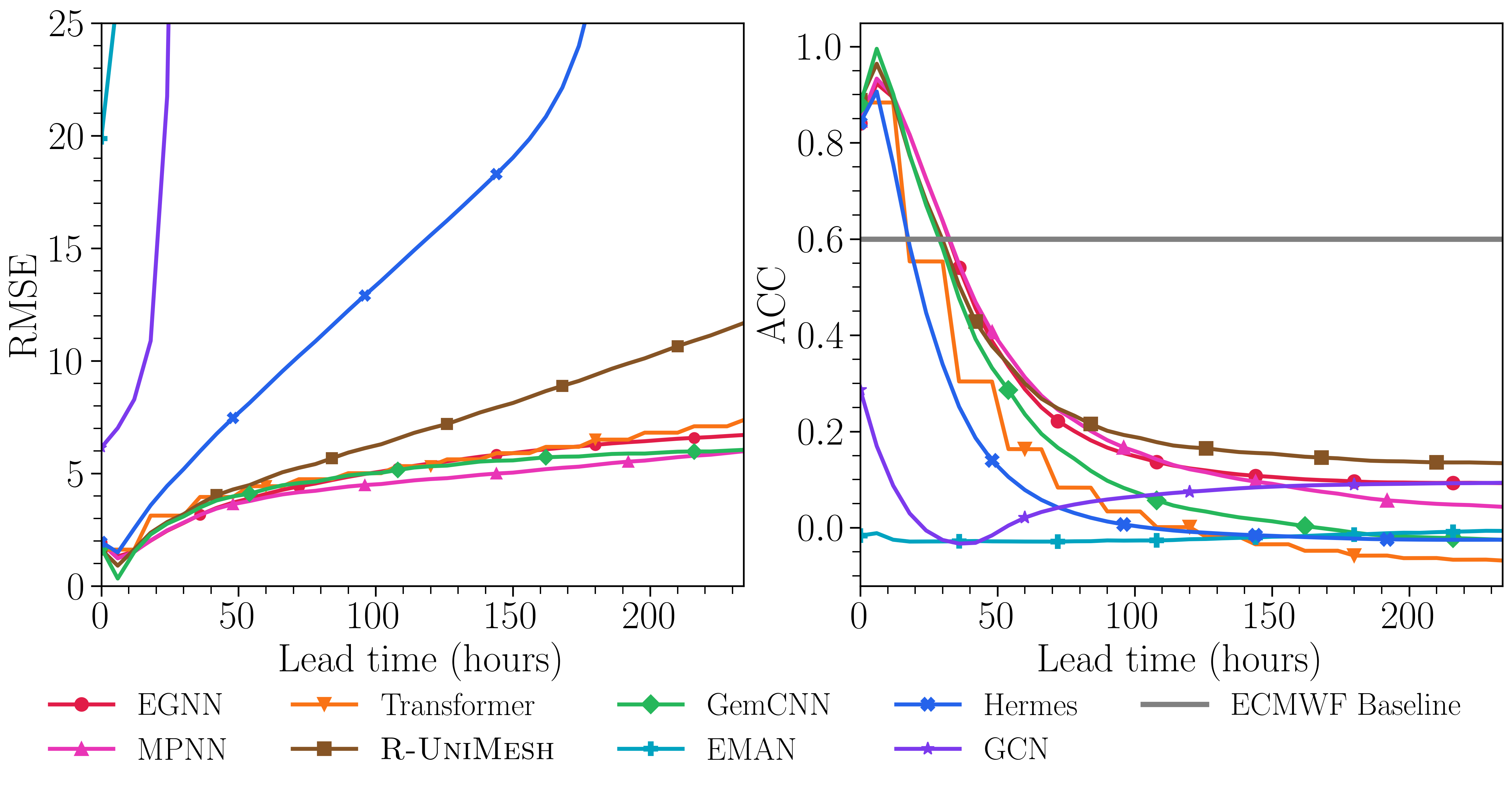}
    \caption{RMSE and ACC as a function of lead time for all models temperature prediction. \ours{} has a competitive RMSE, especially at early lead time. \ours{} also maintains viability for lead times of roughly $2$ days according to the ECMWF baseline. Exact values recorded in \cref{tab:weather_results} (\cref{sec:wb_geo_extended_results}).}  
    \label{fig:wb2}
\end{figure}

%\subsubsection{Results} 
\paragraph{Results.}

Our main result is that our \ours{} model outperforms all baselines at solving highly diffusive PDEs and captures the true smoothness while remaining competitive across all other tasks. On MeshPDE, as shown in \cref{tab:IRE}, \ours{} performs particularly well on heat modeling, where it achieves the lowest NRMSE, SMAPE, and RE. On heat, \ours{} matches the true smoothness almost exactly at every timestep, as seen in \cref{fig:rq_rollout}, illustrating that \ours{} is best at capturing the underlying differential structure of the PDE solution. The convergence and smoothness errors reported by our metrics also agree visually with our qualitative diagnostics in \cref{fig:salesman} and in \cref{sec:diagnostics}. 

Another important finding is that nearly all models are able to perform comparably well on the Cahn-Hilliard equation, where the test mesh (toroid) is simple. The only exceptions are GCN and EGNN, which struggle across all tasks. This suggests that stronger geometric inductive biases such as unitarity or gauge equivariance are necessary for strong performance on unseen meshes with complex geometries. 

In addition to the attention and message-passing based baselines, \ours{} also outperforms operator learning methods on diffusive dynamics. We provide this comparison in \cref{sec:opppp}.

On the real-world WB$2$ benchmark (\cref{fig:wb2}), we see that our best performing models are comparable with the state of the art \citep{rasp2024weatherbench} in RMSE and ACC, coming within a couple of degrees of SOTA even at $10$ lead days. This is despite restricting our training set size and model parameters due to compute limitations. We show in \cref{sec:wb_geo_extended_results} that \ours{} is among the best models for the geopotential prediction task, though all models are below the SOTA in \citet{rasp2024weatherbench}. Our results also support our earlier finding that geometric inductive biases matter for complex geometries: the equivariant and unitary models show no significant advantages in this setting, where there is no cross mesh generalization. Finally, we show in \cref{sec:wb_smooth_extended_results} that \ours{} is competitive in terms of RE on both WB$2$ variables.

\paragraph{Ablation.} Given that \ours{} is not just relaxed unitary convolution and contains other various architectural considerations such as the choice of readout layers in $B$ and choice of activation function, we conduct thorough ablation study in \cref{sec:abl} in order to isolate the effect of the proposed relaxation mechanism and find that it is necessary for faithful modeling. Most importantly, we show that replacing the unitary convolution layers with ordinary graph convolutions causes a significant detriment to performance on diffusive dynamics.

% Our results on WB$2$ (\cref{fig:wb2}) further support this finding: the equivariant and unitary models show no significant advantages in this setting, where there is no cross mesh generalization. We also note that, despite restricting our training set size and model parameters due to compute limitations, our best performing models in \cref{fig:wb2} are comparable with the state of the art \citep{rasp2024weatherbench} according to RMSE and ACC. In terms of RMSE, the best performing models are within a couple of degrees of SOTA even at $10$ lead days. We show in \cref{sec:wb_geo_extended_results} that \ours{} is among the best models for the geopotential prediction task, though all models are below the SOTA in \citet{rasp2024weatherbench}. Finally, we show in \cref{sec:wb_smooth_extended_results} that \ours{} is competitive in terms of RE on both WB$2$ variables.

\section{Discussion}

\paragraph{Limitations.} We note the following limitations in our study. On MeshPDE, our results are strongest for the heat equation, whereas the performance gains are more mixed for wave and Cahn-Hilliard. For wave, we hypothesize that this is because the Rayleigh quotient for wave is highly oscillatory in time. This causes unitarity to be over constraining and our model is therefore more heavily reliant on the unconstrained component to approximate the true dynamics. For Cahn-Hilliard, we hypothesize that our mixed results are because \textit{global} smoothness metrics do not fully capture the underlying dynamics. In particular, the Cahn-Hilliard solution contains \textit{local} and sharp discontinuities in the PDE solution.  We note that our experiments on WB2 are reduced in scale and are not intended as a serious claim to SOTA weather forecasting performance. Rather, we use this application as a testbed to study the behavior and inductive biases of our models. On the theoretical side, our bound in \cref{thm:uni_lb} requires knowledge of the data distribution and the ability to write the true function $f$ in angular coordinates in order to be computed on real datasets. 

\paragraph{Conclusion.} Our work clarifies the approximation limits of smoothness preserving (unitary) functions and unitary convolution networks and shows how constraint relaxations can aid performance on various dynamics modeling tasks on graphs and meshes. We contribute \oursGraph{} and \ours{}, which provide SOTA performance on diffusive dynamics problems and excel at capturing the true smoothness of the system. Future work will explore using approximately unitary networks for solving PDEs under partial observability by using them as backbones for generative models. Additionally, incorporating boundary conditions into the PDE-solving framework would significantly broaden applicability in engineering settings. Finally, an interesting direction involves the usage of a dynamic $\mathbf{T}_{\max}$ in the Taylor-truncation method, which could yield improvements over a static $\mathbf{T}_{\max}$ in systems with oscillatory smoothness, such as the wave equation. 

%Future work will explore using approximately unitary networks for solving PDEs under partial observability by using them as backbones for generative models.

\clearpage

\section*{Acknowledgements}

E.B. thanks \href{https://melanie-weber.com/}{Melanie Weber}, \href{https://lfesser97.github.io/}{Lukas Fesser}, \href{https://bkiani.github.io/}{Bobak Kiani}, and members of the \href{https://weber.seas.harvard.edu/}{Geometric Machine Learning Group} for helpful discussions. L.L. was supported by a Northeastern University Undergraduate Research and Fellowships PEAK Experiences Award. R.W. would like to acknowledge support from NSF Grants $2442658$ and $2134178$. This work is supported by the National Science Foundation under Cooperative Agreement PHY-$2019786$ (The NSF AI Institute for Artificial Intelligence and Fundamental Interactions, \url{http://iaifi.org/}). We also thank Northeastern Research Computing for GPU access through the Explorer cluster. This paper was reviewed on OpenReview at \href{https://openreview.net/forum?id=Uf3SMkP2Wd}{https://openreview.net/forum?id=Uf3SMkP2Wd}.

\section*{Impact Statement}

This paper presents work whose goal is to advance the field of Machine Learning. There are many potential societal consequences of our work, none which we feel must be specifically highlighted here.

% \section LLM Usage
% LLMs were used for small gramatical help and coding assitance. Main ideas, experiments, and writing were contributed by the authors.

\newpage

\bibliography{example_paper}

@article{bronstein2021geometric,
  title={Geometric deep learning: Grids, groups, graphs, geodesics, and gauges},
  author={Bronstein, Michael M and Bruna, Joan and Cohen, Taco and Veli{\v{c}}kovi{\'c}, Petar},
  journal={arXiv preprint arXiv:2104.13478},
  year={2021}
}

@article{kiani2024unitary,
  title={Unitary convolutions for learning on graphs and groups},
  author={Kiani, Bobak and Fesser, Lukas and Weber, Melanie},
  journal={Advances in Neural Information Processing Systems},
  volume={37},
  pages={136922--136961},
  year={2024}
}

@article{rusch2023survey,
  title={A survey on oversmoothing in graph neural networks},
  author={Rusch, T Konstantin and Bronstein, Michael M and Mishra, Siddhartha},
  journal={arXiv preprint arXiv:2303.10993},
  year={2023}
}

@article{cai2020note,
  title={A note on over-smoothing for graph neural networks},
  author={Cai, Chen and Wang, Yusu},
  journal={arXiv preprint arXiv:2006.13318},
  year={2020}
}

@article{bodnar2022neural,
  title={Neural sheaf diffusion: A topological perspective on heterophily and oversmoothing in gnns},
  author={Bodnar, Cristian and Di Giovanni, Francesco and Chamberlain, Benjamin and Lio, Pietro and Bronstein, Michael},
  journal={Advances in Neural Information Processing Systems},
  volume={35},
  pages={18527--18541},
  year={2022}
}

@article{keriven2022not,
  title={Not too little, not too much: a theoretical analysis of graph (over) smoothing},
  author={Keriven, Nicolas},
  journal={Advances in Neural Information Processing Systems},
  volume={35},
  pages={2268--2281},
  year={2022}
}

@incollection{hall2013lie,
  title={Lie groups, Lie algebras, and representations},
  author={Hall, Brian C},
  booktitle={Quantum Theory for Mathematicians},
  pages={333--366},
  year={2013},
  publisher={Springer}
}

@inproceedings{
gruver2024liederivativemeasuringlearned,
title={The Lie Derivative for Measuring Learned Equivariance},
author={Nate Gruver and Marc Anton Finzi and Micah Goldblum and Andrew Gordon Wilson},
booktitle={The Eleventh International Conference on Learning Representations },
year={2023},
url={https://openreview.net/forum?id=JL7Va5Vy15J}
}

@article{wang2023general,
  title={A general theory of correct, incorrect, and extrinsic equivariance},
  author={Wang, Dian and Zhu, Xupeng and Park, Jung Yeon and Jia, Mingxi and Su, Guanang and Platt, Robert and Walters, Robin},
  journal={Advances in Neural Information Processing Systems},
  volume={36},
  pages={40006--40029},
  year={2023}
}

@inproceedings{
arroyo2025vanishing,
title={On Vanishing Gradients, Over-Smoothing, and Over-Squashing in {GNN}s: Bridging Recurrent and Graph Learning},
author={Alvaro Arroyo and Alessio Gravina and Benjamin Gutteridge and Federico Barbero and Claudio Gallicchio and Xiaowen Dong and Michael M. Bronstein and Pierre Vandergheynst},
booktitle={The Thirty-ninth Annual Conference on Neural Information Processing Systems},
year={2025},
url={https://openreview.net/forum?id=N4cyRMuLyl}
}

@inproceedings{xiao2018dynamical,
  title={Dynamical isometry and a mean field theory of cnns: How to train 10,000-layer vanilla convolutional neural networks},
  author={Xiao, Lechao and Bahri, Yasaman and Sohl-Dickstein, Jascha and Schoenholz, Samuel and Pennington, Jeffrey},
  booktitle={International conference on machine learning},
  pages={5393--5402},
  year={2018},
  organization={PMLR}
}

@inproceedings{
kipf2017semisupervised,
title={Semi-Supervised Classification with Graph Convolutional Networks},
author={Thomas N. Kipf and Max Welling},
booktitle={International Conference on Learning Representations},
year={2017},
url={https://openreview.net/forum?id=SJU4ayYgl}
}

@book{chung1997spectral,
  title={Spectral graph theory},
  author={Chung, Fan RK},
  volume={92},
  year={1997},
  publisher={American Mathematical Soc.}
}

@inproceedings{li2018deeper,
  title={Deeper insights into graph convolutional networks for semi-supervised learning},
  author={Li, Qimai and Han, Zhichao and Wu, Xiao-Ming},
  booktitle={Proceedings of the AAAI conference on artificial intelligence},
  volume={32},
  number={1},
  year={2018}
}

@article{leman1968reduction,
  title={A reduction of a graph to a canonical form and an algebra arising during this reduction},
  author={Leman, Andrei and Weisfeiler, Boris},
  journal={Nauchno-Technicheskaya Informatsiya},
  volume={2},
  number={9},
  pages={12--16},
  year={1968}
}

@misc{pygsp,
  title = {PyGSP: Graph Signal Processing in Python},
  author = {Defferrard, Micha\"el and Martin, Lionel and Pena, Rodrigo and Perraudin, Nathana\"el},
  doi = {10.5281/zenodo.1003157},
  url = {https://github.com/epfl-lts2/pygsp/},
  year={2017}
}

@inproceedings{
wang2020incorporating,
title={Incorporating Symmetry into Deep Dynamics Models for Improved Generalization},
author={Rui Wang and Robin Walters and Rose Yu},
booktitle={International Conference on Learning Representations},
year={2021},
url={https://openreview.net/forum?id=wta_8Hx2KD}
}

@book{olver1993applications,
  title={Applications of Lie groups to differential equations},
  author={Olver, Peter J},
  volume={107},
  year={1993},
  publisher={Springer Science \& Business Media}
}

@inproceedings{10.1145/3394486.3403198,
author = {Wang, Rui and Kashinath, Karthik and Mustafa, Mustafa and Albert, Adrian and Yu, Rose},
title = {Towards Physics-informed Deep Learning for Turbulent Flow Prediction},
year = {2020},
isbn = {9781450379984},
publisher = {Association for Computing Machinery},
address = {New York, NY, USA},
url = {https://doi.org/10.1145/3394486.3403198},
doi = {10.1145/3394486.3403198},
booktitle = {Proceedings of the 26th ACM SIGKDD International Conference on Knowledge Discovery \& Data Mining},
pages = {1457–1466},
numpages = {10},
location = {Virtual Event, CA, USA},
series = {KDD '20}
}

@article{cai2021physics,
  title={Physics-informed neural networks for heat transfer problems},
  author={Cai, Shengze and Wang, Zhicheng and Wang, Sifan and Perdikaris, Paris and Karniadakis, George Em},
  journal={Journal of Heat Transfer},
  volume={143},
  number={6},
  pages={060801},
  year={2021},
  publisher={American Society of Mechanical Engineers}
}

@article{maurizi2022predicting,
  title={Predicting stress, strain and deformation fields in materials and structures with graph neural networks},
  author={Maurizi, Marco and Gao, Chao and Berto, Filippo},
  journal={Scientific reports},
  volume={12},
  number={1},
  pages={21834},
  year={2022},
  publisher={Nature Publishing Group UK London}
}

@inproceedings{li2020neural,
  title={Neural operator: Graph kernel network for partial differential equations},
  author={Anandkumar, Anima and Azizzadenesheli, Kamyar and Bhattacharya, Kaushik and Kovachki, Nikola and Li, Zongyi and Liu, Burigede and Stuart, Andrew},
  booktitle={ICLR 2020 workshop on integration of deep neural models and differential equations},
  year={2020}
}

@article{park2023modelingdynamicsmeshesgauge,
  title={Modeling dynamics over meshes with gauge equivariant nonlinear message passing},
  author={Park, Jung Yeon and Wong, Lawson and Walters, Robin},
  journal={Advances in Neural Information Processing Systems},
  volume={36},
  pages={15277--15302},
  year={2023}
}

@inproceedings{
dehaan2021gaugeequivariantmeshcnns,
title={Gauge Equivariant Mesh CNNs: Anisotropic convolutions on geometric graphs},
author = {{de Haan}, Pim and Weiler, Maurice and Cohen, Taco and Welling, Max},
booktitle={International Conference on Learning Representations},
year={2021},
url={https://openreview.net/forum?id=Jnspzp-oIZE}
}

@inproceedings{cohen2019gaugeequivariantconvolutionalnetworks,
  title={Gauge equivariant convolutional networks and the icosahedral CNN},
  author={Cohen, Taco and Weiler, Maurice and Kicanaoglu, Berkay and Welling, Max},
  booktitle={International conference on Machine learning},
  pages={1321--1330},
  year={2019},
  organization={PMLR}
}

@inproceedings{satorras2022enequivariantgraphneural,
  title={E (n) equivariant graph neural networks},
  author={Satorras, V{\i}ctor Garcia and Hoogeboom, Emiel and Welling, Max},
  booktitle={International conference on machine learning},
  pages={9323--9332},
  year={2021},
  organization={PMLR}
}

@inproceedings{
pfaff2021learningmeshbasedsimulationgraph,
title={Learning Mesh-Based Simulation with Graph Networks},
author={Tobias Pfaff and Meire Fortunato and Alvaro Sanchez-Gonzalez and Peter Battaglia},
booktitle={International Conference on Learning Representations},
year={2021},
url={https://openreview.net/forum?id=roNqYL0_XP}
}

@article{
basu2022equivariant,
title={Equivariant Mesh Attention Networks},
author={Sourya Basu and Jose Gallego-Posada and Francesco Vigan{\`o} and James Rowbottom and Taco Cohen},
journal={Transactions on Machine Learning Research},
issn={2835-8856},
year={2022},
url={https://openreview.net/forum?id=3IqqJh2Ycy},
note={Expert Certification}
}

@inproceedings{borovitskiy2021matern,
  title={Mat{\'e}rn Gaussian processes on graphs},
  author={Borovitskiy, Viacheslav and Azangulov, Iskander and Terenin, Alexander and Mostowsky, Peter and Deisenroth, Marc and Durrande, Nicolas},
  booktitle={International Conference on Artificial Intelligence and Statistics},
  pages={2593--2601},
  year={2021},
  organization={PMLR}
}

@article{daniels2025uncertainty,
  title={Uncertainty-Aware Diagnostics for Physics-Informed Machine Learning},
  author={Daniels, Mara and Hodgkinson, Liam and Mahoney, Michael},
  journal={arXiv preprint arXiv:2510.26121},
  year={2025}
}

@inproceedings{suinterplay,
  title={On the Interplay between Graph Structure and Learning Algorithms in Graph Neural Networks},
  author={Su, Junwei and Wu, Chuan},
  booktitle={Forty-second International Conference on Machine Learning},
year={2025}
}

@article{sullivan2019pyvista,
  title={PyVista: 3D plotting and mesh analysis through a streamlined interface for the Visualization Toolkit (VTK)},
  author={Sullivan, C and Kaszynski, Alexander},
  journal={Journal of Open Source Software},
  volume={4},
  number={37},
  pages={1450},
  year={2019},
  publisher={The Open Journal}
}

@article{reuter2009discrete,
  title={Discrete Laplace--Beltrami operators for shape analysis and segmentation},
  author={Reuter, Martin and Biasotti, Silvia and Giorgi, Daniela and Patan{\`e}, Giuseppe and Spagnuolo, Michela},
  journal={Computers \& Graphics},
  volume={33},
  number={3},
  pages={381--390},
  year={2009},
  publisher={Elsevier}
}

@article{ferrandi2022homogeneous,
  title={A homogeneous Rayleigh quotient with applications in gradient methods},
  author={Ferrandi, Giulia and Hochstenbach, Michiel E},
  journal={Journal of Computational and Applied Mathematics},
  volume={437},
  pages={115440},
  year={2024},
  publisher={Elsevier}
}

@article{pandya2025iaemu,
  title        = {IAEmu: Learning {Galaxy} {Intrinsic} {Alignment} {Correlations}},
  author       = {Pandya, Sneh and Yang, Yuanyuan and Van Alfen, Nicholas and Blazek, Jonathan and Walters, Robin},
  journal      = {The Open Journal of Astrophysics},
  volume       = {8},
  year         = {2025},
  month        = {dec 2},
  doi          = {10.33232/001c.151749},
  publisher    = {Maynooth Academic Publishing}
}

@inproceedings{
dong2023rayleigh,
title={Rayleigh Quotient Graph Neural Networks for Graph-level Anomaly Detection},
author={Xiangyu Dong and Xingyi Zhang and Sibo Wang},
booktitle={The Twelfth International Conference on Learning Representations},
year={2024},
url={https://openreview.net/forum?id=4UIBysXjVq}
}

@article{rowan2025solving,
  title={Solving engineering eigenvalue problems with neural networks using the Rayleigh quotient},
  author={Rowan, Conor and Doostan, Alireza and Maute, Kurt and Evans, John},
  journal={International Journal for Numerical Methods in Engineering},
  volume={126},
  number={24},
  pages={e70209},
  year={2025},
  publisher={Wiley Online Library}
}

@article{kashinath2021physics,
  title={Physics-informed machine learning: case studies for weather and climate modelling},
  author={Mustafa, M and Wu, JL and Jiang, C and Wang, R and others},
  journal={Philosophical Transactions of the Royal Society A},
  volume={379},
  number={2194},
  pages={20200093},
  year={2021},
  publisher={The Royal Society Publishing}
}

@inproceedings{sharp2020laplacian,
  title={A laplacian for nonmanifold triangle meshes},
  author={Sharp, Nicholas and Crane, Keenan},
  booktitle={Computer Graphics Forum},
  volume={39},
  number={5},
  pages={69--80},
  year={2020},
  organization={Wiley Online Library}
}

@misc{
shao2024unifying,
title={Unifying over-smoothing and over-squashing in graph neural networks: A physics informed approach and beyond},
author={Zhiqi Shao and Dai Shi and Andi Han and Yi Guo and Qibin Zhao and Junbin Gao},
year={2024},
url={https://o1. TAG-DS Submission Instructions1. TAG-DS Submission Instructions1. TAG-DS Submission Instructions1. TAG-DS Submission Instructionspenreview.net/forum?id=swPf2hwKl8}
}

@inproceedings{
kulick2025investigating,
title={Investigating Zero-Shot Size Transfer of Graph Neural Differential Equations for Learning Graph Diffusion Dynamics},
author={Charles Kulick and Bj{\"o}rn Birnir and Sui Tang},
booktitle={Topology, Algebra, and Geometry in Data Science},
year={2025},
url={https://openreview.net/forum?id=qgbyLknKXy}
}

@article{marisca2025over,
  title={Over-squashing in Spatiotemporal Graph Neural Networks},
  author={Marisca, Ivan and Bamberger, Jacob and Alippi, Cesare and Bronstein, Michael M},
  journal={arXiv preprint arXiv:2506.15507},
  year={2025}
}

@article{suk2024mesh,
  title={Mesh neural networks for SE (3)-equivariant hemodynamics estimation on the artery wall},
  author={Suk, Julian and de Haan, Pim and Lippe, Phillip and Brune, Christoph and Wolterink, Jelmer M},
  journal={Computers in biology and medicine},
  volume={173},
  pages={108328},
  year={2024},
  publisher={Elsevier}
}

@inproceedings{
janny2023eagle,
title={{EAGLE}: Large-scale Learning of Turbulent Fluid Dynamics with Mesh Transformers},
author={Steeven Janny and Aur{\'e}lien B{\'e}n{\'e}teau and Madiha Nadri and Julie Digne and Nicolas Thome and Christian Wolf},
booktitle={The Eleventh International Conference on Learning Representations },
year={2023},
url={https://openreview.net/forum?id=mfIX4QpsARJ}
}

@article{rasp2024weatherbench,
  title={WeatherBench 2: A benchmark for the next generation of data-driven global weather models},
  author={Rasp, Stephan and Hoyer, Stephan and Merose, Alexander and Langmore, Ian and Battaglia, Peter and Russell, Tyler and Sanchez-Gonzalez, Alvaro and Yang, Vivian and Carver, Rob and Agrawal, Shreya and others},
  journal={Journal of Advances in Modeling Earth Systems},
  volume={16},
  number={6},
  pages={e2023MS004019},
  year={2024},
  publisher={Wiley Online Library}
}

@inproceedings{wang2022approximately,
  title={Approximately equivariant networks for imperfectly symmetric dynamics},
  author={Wang, Rui and Walters, Robin and Yu, Rose},
  booktitle={International Conference on Machine Learning},
  pages={23078--23091},
  year={2022},
  organization={PMLR}
}

@inproceedings{gao2025discretization,
  title={Discretization-invariance? on the discretization mismatch errors in neural operators},
  author={Gao, Wenhan and Xu, Ruichen and Deng, Yuefan and Liu, Yi},
  booktitle={The Thirteenth International Conference on Learning Representations},
  year={2025}
}

@article{gupta2023generalized,
  title={Generalized neural closure models with interpretability},
  author={Gupta, Abhinav and Lermusiaux, Pierre FJ},
  journal={Scientific Reports},
  volume={13},
  number={1},
  pages={10634},
  year={2023},
  publisher={Nature Publishing Group UK London}
}

@inproceedings{
li2021fourier,
title={Fourier Neural Operator for Parametric Partial Differential Equations},
author={Zongyi Li and Nikola Borislavov Kovachki and Kamyar Azizzadenesheli and Burigede liu and Kaushik Bhattacharya and Andrew Stuart and Anima Anandkumar},
booktitle={International Conference on Learning Representations},
year={2021},
url={https://openreview.net/forum?id=c8P9NQVtmnO}
}

@inproceedings{
schlaginhaufen2021learning,
title={Learning Stable Deep Dynamics Models for Partially Observed or Delayed Dynamical Systems},
author={Andreas Schlaginhaufen and Philippe Wenk and Andreas Krause and Florian D{\"o}rfler},
booktitle={Advances in Neural Information Processing Systems},
editor={A. Beygelzimer and Y. Dauphin and P. Liang and J. Wortman Vaughan},
year={2021},
url={https://openreview.net/forum?id=u8HmtBBSVJS}
}

@incollection{schneider2006weak,
  title={Weak gravitational lensing},
  author={Schneider, Peter},
  booktitle={Gravitational lensing: strong, weak and micro},
  pages={269--451},
  year={2006},
  publisher={Springer}
}

@article{jarvis2004skewness,
  title={The skewness of the aperture mass statistic},
  author={Jarvis, Mike and Bernstein, Gary and Jain, Bhuvnesh},
  journal={Monthly Notices of the Royal Astronomical Society},
  volume={352},
  number={1},
  pages={338--352},
  year={2004},
  publisher={Blackwell Science Ltd Oxford, UK}
}

@article{cahn1958free,
  title={Free energy of a nonuniform system. I. Interfacial free energy},
  author={Cahn, John W and Hilliard, John E},
  journal={The Journal of chemical physics},
  volume={28},
  number={2},
  pages={258--267},
  year={1958},
  publisher={American Institute of Physics}
}

@inproceedings{g2017,
  title={Neural message passing for quantum chemistry},
  author={Gilmer, Justin and Schoenholz, Samuel S and Riley, Patrick F and Vinyals, Oriol and Dahl, George E},
  booktitle={International conference on machine learning},
  pages={1263--1272},
  year={2017},
  organization={Pmlr}
}

@misc{nvidia2025h200,
  title={NVIDIA H200 Tensor Core GPU Datasheet},
  author={NVIDIA},
  year={2025},
  note={Retrieved from NVIDIA website},
  url={https://resources.nvidia.com/en-us-data-center-overview-mc/en-us-data-center-overview/hpc-datasheet-sc23-h200}
}

@article{huang2018robust,
  title={Robust watertight manifold surface generation method for shapenet models},
  author={Huang, Jingwei and Su, Hao and Guibas, Leonidas},
  journal={arXiv preprint arXiv:1802.01698},
  year={2018}
}

@article{li2024tutorials,
  title={Tutorials: Physics-informed machine learning methods of computing 1D phase-field models},
  author={Li, Wei and Fang, Ruqing and Jiao, Junning and Vassilakis, Georgios N and Zhu, Juner},
  journal={APL Machine Learning},
  volume={2},
  number={3},
  year={2024},
  publisher={AIP Publishing}
}

@book{constantin1988navier,
  title={Navier-stokes equations},
  author={Constantin, Peter and Foia{\c{s}}, Ciprian},
  year={1988},
  publisher={University of Chicago press}
}

@incollection{ghil2020geophysical,
  title={Geophysical fluid dynamics, nonautonomous dynamical systems, and the climate sciences},
  author={Ghil, Michael and Simonnet, Eric},
  booktitle={Mathematical Approach to Climate Change and its Impacts: MAC2I},
  pages={3--81},
  year={2020},
  publisher={Springer}
}

@inproceedings{tonshoffdid,
  title={Where Did the Gap Go? Reassessing the Long-Range Graph Benchmark},
  author={T{\"o}nshoff, Jan and Ritzert, Martin and Rosenbluth, Eran and Grohe, Martin},
  booktitle={The Second Learning on Graphs Conference},
  year={2023}
}

@article{tonshoffdid_tmlr,
  title={Where Did the Gap Go? Reassessing the Long-Range Graph Benchmark},
  author={T{\"o}nshoff, Jan and Ritzert, Martin and Rosenbluth, Eran and Grohe, Martin},
  journal={Transactions on Machine Learning Research},
  year={2024}
}

@inproceedings{mitchel2021field,
  title={Field convolutions for surface cnns},
  author={Mitchel, Thomas W and Kim, Vladimir G and Kazhdan, Michael},
  booktitle={Proceedings of the IEEE/CVF International Conference on Computer Vision},
  pages={10001--10011},
  year={2021}
}

@book{Artin1998,
  author = {Artin, Michael},
  publisher = {Birkh{\"a}user},
  title = {Algebra},
  username = {algebradresden},
  year = 1998
}

@article{esteves2020theoretical,
  title={Theoretical aspects of group equivariant neural networks},
  author={Esteves, Carlos},
  journal={arXiv preprint arXiv:2004.05154},
  year={2020}
}

@article{
doi:10.1073/pnas.2311808121,
author = {Rose Yu  and Rui Wang },
title = {Learning dynamical systems from data: An introduction to physics-guided deep learning},
journal = {Proceedings of the National Academy of Sciences},
volume = {121},
number = {27},
pages = {e2311808121},
year = {2024},
doi = {10.1073/pnas.2311808121},
URL = {https://www.pnas.org/doi/abs/10.1073/pnas.2311808121},
eprint = {https://www.pnas.org/doi/pdf/10.1073/pnas.2311808121}}

@article{https://doi.org/10.1002/qj.3803,
author = {Hersbach, Hans and Bell, Bill and Berrisford, Paul and Hirahara, Shoji and Horányi, András and Muñoz-Sabater, Joaquín and Nicolas, Julien and Peubey, Carole and Radu, Raluca and Schepers, Dinand and Simmons, Adrian and Soci, Cornel and Abdalla, Saleh and Abellan, Xavier and Balsamo, Gianpaolo and Bechtold, Peter and Biavati, Gionata and Bidlot, Jean and Bonavita, Massimo and De Chiara, Giovanna and Dahlgren, Per and Dee, Dick and Diamantakis, Michail and Dragani, Rossana and Flemming, Johannes and Forbes, Richard and Fuentes, Manuel and Geer, Alan and Haimberger, Leo and Healy, Sean and Hogan, Robin J. and Hólm, Elías and Janisková, Marta and Keeley, Sarah and Laloyaux, Patrick and Lopez, Philippe and Lupu, Cristina and Radnoti, Gabor and de Rosnay, Patricia and Rozum, Iryna and Vamborg, Freja and Villaume, Sebastien and Thépaut, Jean-Noël},
title = {The ERA5 global reanalysis},
journal = {Quarterly Journal of the Royal Meteorological Society},
volume = {146},
number = {730},
pages = {1999-2049},
doi = {https://doi.org/10.1002/qj.3803},
url = {https://rmets.onlinelibrary.wiley.com/doi/abs/10.1002/qj.3803},
eprint = {https://rmets.onlinelibrary.wiley.com/doi/pdf/10.1002/qj.3803},
year = {2020}
}

@misc{lam2023graphcastlearningskillfulmediumrange,
      title={GraphCast: Learning skillful medium-range global weather forecasting}, 
      author={Remi Lam and Alvaro Sanchez-Gonzalez and Matthew Willson and Peter Wirnsberger and Meire Fortunato and Ferran Alet and Suman Ravuri and Timo Ewalds and Zach Eaton-Rosen and Weihua Hu and Alexander Merose and Stephan Hoyer and George Holland and Oriol Vinyals and Jacklynn Stott and Alexander Pritzel and Shakir Mohamed and Peter Battaglia},
      year={2023},
      eprint={2212.12794},
      archivePrefix={arXiv},
      primaryClass={cs.LG},
      url={https://arxiv.org/abs/2212.12794}, 
}

@inproceedings{anil2019sorting,
  title={Sorting out Lipschitz function approximation},
  author={Anil, Cem and Lucas, James and Grosse, Roger},
  booktitle={International conference on machine learning},
  pages={291--301},
  year={2019},
  organization={PMLR}
}

@article{huang2024diffusionpde,
  title={DiffusionPDE: Generative PDE-solving under partial observation},
  author={Huang, Jiahe and Yang, Guandao and Wang, Zichen and Park, Jeong Joon},
  journal={Advances in Neural Information Processing Systems},
  volume={37},
  pages={130291--130323},
  year={2024}
}

@inproceedings{morelpredicting,
  title={Predicting partially observable dynamical systems via diffusion models with a multiscale inference scheme},
  author={Morel, Rudy and Ramunno, Francesco Pio and Shen, Jeff and Bietti, Alberto and Cho, Kyunghyun and Cranmer, Miles and Golkar, Siavash and GUGNIN, OLEXANDR and Krawezik, Geraud and Marwah, Tanya and others},
  booktitle={The Thirty-ninth Annual Conference on Neural Information Processing Systems},
  year={2025}
}

@inproceedings{cranmer2020lagrangian,
  title={Lagrangian Neural Networks},
  author={Cranmer, Miles and Greydanus, Sam and Hoyer, Stephan and Battaglia, Peter and Spergel, David and Ho, Shirley},
  booktitle={ICLR 2020 Workshop on Integration of Deep Neural Models and Differential Equations},
  year={2020}
}

@inproceedings{liukan,
  title={KAN: Kolmogorov--Arnold Networks},
  author={Liu, Ziming and Wang, Yixuan and Vaidya, Sachin and Ruehle, Fabian and Halverson, James and Soljacic, Marin and Hou, Thomas Y and Tegmark, Max},
  booktitle={The Thirteenth International Conference on Learning Representations},
  year={2024}
}

@article{daniels2025splat,
  title={Splat Regression Models},
  author={Daniels, Mara and Rigollet, Philippe},
  journal={arXiv preprint arXiv:2511.14042},
  year={2025}
}

@article{wang2022data,
  title={Data augmentation vs. equivariant networks: A theory of generalization on dynamics forecasting},
  author={Wang, Rui and Walters, Robin and Yu, Rose},
  journal={International Conference on Machine Learning (ICML) Principles of Distribution Shift Workshop},
  year={2022}
}

@book{baron1822theorie,
  title={Th{\'e}orie analytique de la chaleur},
  author={baron de Fourier, Jean Baptiste Joseph},
  year={1822},
  publisher={Firmin Didot}
}

@misc{dalembert1747wave,
  author       = {d'Alembert, Jean le Rond},
  title        = {Recherches sur la courbe que forme une corde tendu{\"e} mise en vibration},
  year         = {1747},
  note         = {Presented at the ORESME Reading Group Meeting, September 30, 2017},
  howpublished = {\url{https://www.exhibit.xavier.edu/oresme_2017Sept/4}},
}

@article{cui2024transformer,
  title={Transformer based deep learning accelerated numerical simulation for incompressible flow},
  author={Cui, Qingjie and Zhang, Meina and Xiao, Min and Ni, Guoxi},
  journal={Physics of Fluids},
  volume={36},
  number={12},
  year={2024},
  publisher={AIP Publishing}
}

@inproceedings{mitchel2022mobius,
  title={M{\"o}bius convolutions for spherical cnns},
  author={Mitchel, Thomas W and Aigerman, Noam and Kim, Vladimir G and Kazhdan, Michael},
  booktitle={ACM SIGGRAPH 2022 Conference Proceedings},
  pages={1--9},
  year={2022}
}

@inproceedings{mitchel2024single,
  title={Single mesh diffusion models with field latents for texture generation},
  author={Mitchel, Thomas W and Esteves, Carlos and Makadia, Ameesh},
  booktitle={Proceedings of the IEEE/CVF Conference on Computer Vision and Pattern Recognition},
  pages={7953--7963},
  year={2024}
}

@article{bobenko2007discrete,
  title={A discrete Laplace--Beltrami operator for simplicial surfaces},
  author={Bobenko, Alexander I and Springborn, Boris A},
  journal={Discrete \& Computational Geometry},
  volume={38},
  number={4},
  pages={740--756},
  year={2007},
  publisher={Springer}
}

@inproceedings{Crane2013DGP,
  author    = {Crane, Keenan and de Goes, Fernando and Desbrun, Mathieu and Schr{\"o}der, Peter},
  title     = {Digital Geometry Processing with Discrete Exterior Calculus},
  booktitle = {ACM SIGGRAPH 2013 Courses},
  year      = {2013},
  location  = {Anaheim, California},
  publisher = {ACM},
  address   = {New York, NY, USA},
  numpages  = {126}
}

@article{crane2017heat,
  title={The heat method for distance computation},
  author={Crane, Keenan and Weischedel, Clarisse and Wardetzky, Max},
  journal={Communications of the ACM},
  volume={60},
  number={11},
  pages={90--99},
  year={2017},
  publisher={ACM New York, NY, USA}
}

@incollection{meyer2003discrete,
  title={Discrete differential-geometry operators for triangulated 2-manifolds},
  author={Meyer, Mark and Desbrun, Mathieu and Schr{\"o}der, Peter and Barr, Alan H},
  booktitle={Visualization and mathematics III},
  pages={35--57},
  year={2003},
  publisher={Springer}
}

@inproceedings{balduzzi2017shattered,
  title={The shattered gradients problem: If resnets are the answer, then what is the question?},
  author={Balduzzi, David and Frean, Marcus and Leary, Lennox and Lewis, JP and Ma, Kurt Wan-Duo and McWilliams, Brian},
  booktitle={International conference on machine learning},
  pages={342--350},
  year={2017},
  organization={PMLR}
}

@misc{
fesser2026unitary,
title={Unitary Convolutions for Message-passing and Positional Encodings on Directed Graphs},
author={Lukas Fesser and Bobak Kiani and Melanie Weber},
year={2026},
url={https://openreview.net/forum?id=xfwlgpe6st}
}
\bibliographystyle{icml2026}

%%%%%%%%%%%%%%%%%%%%%%%%%%%%%%%%%%%%%%%%%%%%%%%%%%%%%%%%%%%%%%%%%%%%%%%%%%%%%%%
%%%%%%%%%%%%%%%%%%%%%%%%%%%%%%%%%%%%%%%%%%%%%%%%%%%%%%%%%%%%%%%%%%%%%%%%%%%%%%%
% APPENDIX
%%%%%%%%%%%%%%%%%%%%%%%%%%%%%%%%%%%%%%%%%%%%%%%%%%%%%%%%%%%%%%%%%%%%%%%%%%%%%%%
%%%%%%%%%%%%%%%%%%%%%%%%%%%%%%%%%%%%%%%%%%%%%%%%%%%%%%%%%%%%%%%%%%%%%%%%%%%%%%%
\newpage
\appendix
\onecolumn

%\phantomsection
\section*{Appendix Table of Contents}
\begin{itemize}
  \item[\ref{app:deferred_theory}] \hyperref[app:deferred_theory]{Deferred Theory} \dotfill \pageref{app:deferred_theory}
  \item[\ref{app:heat_diffusion_sim}] \hyperref[app:heat_diffusion_sim]{Simulated Heat Diffusion Further Details and Results} \dotfill \pageref{app:heat_diffusion_sim}
  \item[\ref{app:MeshPDE_app}] \hyperref[app:MeshPDE_app]{MeshPDE Further Details and Results} \dotfill \pageref{app:MeshPDE_app}
  \item[\ref{sec:wb_deets}] \hyperref[sec:wb_deets]{WeatherBench2 Further Details and Results} \dotfill \pageref{sec:wb_deets}
\end{itemize}

\section{Deferred Theory} 
\label{app:deferred_theory}

This section provides both theoretical background and deferred proofs from the main text.

\subsection{Lie Algebras and the Exponential Map} \label{sec:exp}

In this section we review the formalism behind Lie algebras and the exponential map. A group is a mathematical structure that formalizes what it means for something to be \textit{symmetric}. We say that a group is a
matrix \textit{Lie group}, if it is a differentiable manifold and a subgroup of the set of invertible $n \times n$ matrices. Lie groups are equipped with a \textit{Lie algebra}, which is the tangent space at the identity element. Our work encounters the orthogonal and unitary Lie groups 
\begin{equation*}
    \mathrm{O}(n) = \{O \in \mathbb{R}^{n \times n} \colon OO^T = I\}, \qquad \mathrm{U}(n) = \{U \in \mathbb{C}^{n \times n} \colon UU^\dagger = I\}
\end{equation*}
as well as the special unitary group 
\begin{equation*}
    \mathrm{SU}(n) =  \{U \in \mathbb{C}^{n \times n} \colon \det (U) = 1\}.
\end{equation*}
The associated Lie algebras for $O(n)$ and $U(n)$ are given by 
\begin{equation*}
    \mathfrak{o}(n) = \{M \in \mathbb{R}^{n\times n}\colon M + M^T = 0 \}, \qquad \mathfrak{u}(n) = \{M \in \mathbb{C}^{n\times n}\colon M + M^\dagger = 0\}.
\end{equation*}
The exponential map provides a mechanism of parameterizing Lie groups with elements in the Lie algebra. For matrix Lie groups, the exponential map is simply the matrix exponential:
\begin{equation*}
    \exp (\mathbf{X}) = \overset{\infty }{\underset{i}{\sum}}\frac{1}{i!}\mathbf{X}^i.
\end{equation*}
Applying the exponential map to a linear operator is given by 
\begin{equation*}
    \exp (\mathbf{L}) (\mathbf{X}) = \overset{\infty }{\underset{i}{\sum}}\frac{1}{i!}\mathbf{L}^i(\mathbf{X}) = \mathbf{X} +\mathbf{L}(\mathbf{X}) + \frac{1}{2}\mathbf{L} \circ \mathbf{L}(\mathbf{X}) + \frac{1}{6}\mathbf{L} \circ \mathbf{L} \circ \mathbf{L}(\mathbf{X}) + \dots 
\end{equation*}
In the case of \cref{eq:lie}, $\mathbf{L}$ is graph convolution,  $\mathbf{L}(\mathbf{X)} = \mathbf{AXW}$. Further background on group theory and abstract algebra can be found in \citet{Artin1998}, \citet{hall2013lie}, and \citet{esteves2020theoretical}.

\subsection{Convolutional oversmoothing} \label{sec:GCN_smoothing}

This section provides a result from \citet{kiani2024unitary} which establishes that Graph Convolution Networks \cite{kipf2017semisupervised} have a high probability to exhibit smoothing.

\begin{proposition}[Proposition 7 in \citet{kiani2024unitary}] \label{thm:gcns_init}
    Given a simple undirected graph $\mathcal{G}$ on $n$ nodes with normalized adjacency matrix $\widetilde{\mathbf{A}} = \mathbf{D}^{-1/2}\mathbf{A}\mathbf{D}^{-1/2}$ and node degree bounded by $D$, let $\mathbf{X} \in \mathbb{R}^{n \times d}$ have rows drawn i.i.d. from the uniform distribution on the hypersphere in dimension $d$. Let $f_{conv}(\mathbf{X}) = \widetilde{\mathbf{A}} \mathbf{X}  \mathbf{W}$ denote convolution with orthogonal feature transformation matrix $\mathbf{W} \in O(d)$. Then, the event below holds with probability $1-\exp(-\Omega(\sqrt{n}))$:
    \begin{equation*}
        R_{\mathcal{G}}(\mathbf{X}) \geq 1-O\left(\frac{1}{n^{1/4}} \right)
        \quad \text{and} 
        \quad
        R_{\mathcal{G}}(f_{conv}(\mathbf{X})) \leq 1 -\frac{\Tr(\widetilde{\mathbf{A}}^3)}{\Tr(\widetilde{\mathbf{A}}^2)} + O\left(\frac{1}{n^{1/4}} \right).
    \end{equation*}
\end{proposition}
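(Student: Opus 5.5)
The statement immediately above is \cref{thm:gcns_init}, Proposition 7 of \citet{kiani2024unitary}. The plan is to control the numerator and denominator of $R_{\mathcal{G}}$ separately by concentration, using the matrix form $R_{\mathcal{G}}(\mathbf{X}) = 1 - \Tr(\mathbf{X}^\top\widetilde{\mathbf{A}}\mathbf{X})/\lVert\mathbf{X}\rVert_F^2$.

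\textbf{Step 1: the input quotient.} Since each row lies on the unit sphere, $\lVert \mathbf{X}\rVert_F^2 = n$ exactly. The numerator term $\Tr(\mathbf{X}^\top\widetilde{\mathbf{A}}\mathbf{X}) = \sum_{u,v}\widetilde{\mathbf{A}}_{uv}\langle x_u,x_v\rangle$ is a quadratic form in independent isotropic rows. Because the graph is simple, $\widetilde{\mathbf{A}}$ has zero diagonal, so the mean of this sum is $0$. Its variance is of order $\lVert\widetilde{\mathbf{A}}\rVert_F^2/d \le n/d$. A Hanson--Wright-type bound for sub-Gaussian vector rows then gives a deviation of order $n^{3/4}$ with probability $1-\exp(-\Omega(\sqrt n))$. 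Dividing by $n$ yields $R_{\mathcal{G}}(\mathbf{X}) \ge 1 - O(n^{-1/4})$.

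\textbf{Step 2: the convolved quotient.} Let $\mathbf{Y} = \widetilde{\mathbf{A}}\mathbf{X}\mathbf{W}$. Orthogonality of $\mathbf{W}$ gives $\mathbf{W}\mathbf{W}^\top = \mathbf{I}$, so
\[
R_{\mathcal{G}}(\mathbf{Y}) = 1 - \frac{\Tr(\mathbf{X}^\top\widetilde{\mathbf{A}}^3\mathbf{X})}{\Tr(\mathbf{X}^\top\widetilde{\mathbf{A}}^2\mathbf{X})}.
\]
The weight matrix drops out, and only two quadratic forms $\Tr(\mathbf{X}^\top M\mathbf{X})$ remain, with $M=\widetilde{\mathbf{A}}^2$ and $M=\widetilde{\mathbf{A}}^3$. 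Each form has mean $\sum_u M_{uu}\,\mathbb{E}\lVert x_u\rVert^2 = \Tr(M)$. Applying the same Hanson--Wright concentration, each form equals $\Tr(M) \pm O(n^{3/4})$ on the good event.

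\textbf{Step 3: normalising the error.} To turn these additive errors into the claimed $O(n^{-1/4})$ relative error, I need lower bounds on both traces. The bound $\Tr(\widetilde{\mathbf{A}}^2) = \sum_{(u,v)} 1/(d_ud_v) \ge n/D$ holds because each non-isolated vertex contributes at least $1/D$. I also need $\lvert\Tr(\widetilde{\mathbf{A}}^3)\rvert \le \Tr(\widetilde{\mathbf{A}}^2)$, which follows from $\lVert\widetilde{\mathbf{A}}\rVert_{op}\le 1$. Expanding the ratio,
\[
\frac{\Tr\widetilde{\mathbf{A}}^3 + O(n^{3/4})}{\Tr\widetilde{\mathbf{A}}^2 + O(n^{3/4})} = \frac{\Tr\widetilde{\mathbf{A}}^3}{\Tr\widetilde{\mathbf{A}}^2} + O(n^{-1/4}),
\]
with constants depending on $D$. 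A union bound over the three concentration events keeps the probability at $1-\exp(-\Omega(\sqrt n))$. This gives the upper bound on $R_{\mathcal{G}}(f_{conv}(\mathbf{X}))$.

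\textbf{Main obstacle.} The hard step is the concentration for quadratic forms whose entries are vector inner products. The rows are not Gaussian, and the coordinates within a row are dependent. I would handle this by representing each row as $x_u = g_u/\lVert g_u\rVert$ with $g_u$ standard Gaussian. I would then apply Gaussian Hanson--Wright to the $g$'s, separately control the norms $\lVert g_u\rVert$, and absorb the norm fluctuations into the $n^{3/4}$ slack. Choosing the deviation level $t = n^{3/4}$ against variance $\Theta(n)$ is exactly what produces the $\exp(-\Omega(\sqrt n))$ failure rate.

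If isolated vertices are allowed, the lower bound $\Tr(\widetilde{\mathbf{A}}^2)\ge n/D$ fails and should instead be assumed. I would therefore assume no isolated nodes, as is implicit in defining $\mathbf{D}^{-1/2}$.
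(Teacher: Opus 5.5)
This paper gives no proof of the statement; it only quotes Proposition 7 of \citet{kiani2024unitary} for completeness. Your reduction is correct and is the natural one. First, $\lVert\mathbf{X}\rVert_F^2=n$. Second, $\mathbf{W}$ cancels, leaving $1-\Tr(\mathbf{X}^\top\widetilde{\mathbf{A}}^3\mathbf{X})/\Tr(\mathbf{X}^\top\widetilde{\mathbf{A}}^2\mathbf{X})$. Third, each form $\Tr(\mathbf{X}^\top\widetilde{\mathbf{A}}^k\mathbf{X})$ splits into a deterministic diagonal part $\Tr(\widetilde{\mathbf{A}}^k)$ (because $\lVert x_u\rVert=1$) plus a mean-zero off-diagonal part. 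Finally, $\Tr(\widetilde{\mathbf{A}}^2)\ge n/D$ together with $\lvert\Tr(\widetilde{\mathbf{A}}^3)\rvert\le\Tr(\widetilde{\mathbf{A}}^2)$ converts additive $O(n^{3/4})$ errors into $O(n^{-1/4})$. Your no-isolated-vertex caveat is genuinely needed. Take a single edge plus $n-2$ isolated nodes, with isolated rows of $\widetilde{\mathbf{A}}$ set to zero. Then $R_{\mathcal{G}}(f_{conv}(\mathbf{X}))=1-\langle x_1,x_2\rangle$ while $\Tr(\widetilde{\mathbf{A}}^3)=0$, so the claimed bound fails with constant probability.

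The one place you diverge from the standard argument is the concentration tool, and the step you single out as the main obstacle is easier than you make it. Since $\lVert x_u\rVert=1$, replacing one row $x_u$ by $x_u'$ changes $\Tr(\mathbf{X}^\top M\mathbf{X})$ by $2\sum_{v\ne u}M_{uv}\langle x_u'-x_u,x_v\rangle$. For $M=\widetilde{\mathbf{A}}^k$ this is at most $4\sum_v\lvert M_{uv}\rvert\le 4D^{k/2}$ in absolute value, because the row sums of $\widetilde{\mathbf{A}}$ are $\sum_{v\sim u}(d_ud_v)^{-1/2}\le\sqrt{D}$. McDiarmid's inequality then gives $\Pr(\lvert Q-\mathbb{E}Q\rvert\ge t)\le 2\exp(-t^2/(8nD^k))$. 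Taking $t=n^{3/4}$ yields exactly the stated $\exp(-\Omega(\sqrt{n}))$ failure rate, with no need to handle dependence within a row.

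Your Gaussian-normalization plan, by contrast, does not work as described for fixed $d$ (e.g.\ $d=1$, where $x_u=\mathrm{sign}(g_u)$). There $\lVert g_u\rVert^2/d$ does not concentrate, and the correction $\sum_{u\ne v}\widetilde{\mathbf{A}}_{uv}\bigl(\langle x_u,x_v\rangle-\langle g_u,g_v\rangle/d\bigr)$ has $\Theta(1)$-size summands. Bounding it termwise after controlling the norms gives only order $n$, not $n^{3/4}$; doing better requires exactly the cancellation you set out to prove.

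If you prefer a Hanson--Wright route, there are three sound options:
\begin{itemize}
\item cite a version for independent sub-Gaussian vector blocks (decoupling plus conditioning);
\item cite a version for vectors with the convex concentration property (the uniform measure on the sphere satisfies a log-Sobolev inequality, which tensorizes);
\item apply conditional Jensen to the moment generating function, using that $x_u$ is independent of $\lVert g_u\rVert$.
\end{itemize}
That route buys a $d$-dependent rate and concentration constants depending only on $\lVert\widetilde{\mathbf{A}}\rVert_{\mathrm{op}}\le 1$, with $D$ entering only through $\Tr(\widetilde{\mathbf{A}}^2)\ge n/D$. Bounded differences is more elementary and matches the statement as given.
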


\subsection{Gauge and Euclidean Equivariance} \label{sec:symmetries}

In this section, we introduce the necessary background and formal definitions for the equivariance constraints commonly applied to tasks defined on meshes. While working with arbitrary meshes, many commonly used network architectures compute distances between node positions. One has the option of computing these distances in either global Cartesian coordinates or in local tangent spaces of the mesh. In both cases, we may exploit the symmetry of these coordinate systems by enforcing equivariance with respect to transformations from a certain symmetry group into the network architecture, which allows the network to automatically generalize across orbits.

We now give precise definitions of equivariance and invariance. 

\begin{definition}
    Let $f: \mathcal{X} \rightarrow \mathcal{Y}$ be a map between input and output vector spaces $\mathcal{X}$ and $\mathcal{Y}$.
Let $G$ be a group with representations $\rho^{\mathcal{X}}$ and $\rho^{\mathcal{Y}}$ which transform vectors in $\mathcal{X}$ and $\mathcal{Y}$ respectively. Representations are group homomorphisms which map group elements to invertible linear transformations. The map $f: \mathcal{X} \rightarrow \mathcal{Y}$ is \emph{equivariant} if 
    \begin{align*}
    \label{eq:equivariance_app}
        \rho^{\mathcal{Y}}(g)f(x)=f(\rho^{\mathcal{X}}(g)x) \ , \ \text{for all } g\in G, x \in \mathcal{X} \ .
    \end{align*}
\end{definition}

Invariance is a special case of equivariance in which $\rho^{\mathcal{Y}} = \rm{Id}^{\mathcal{Y}}$ for all $g \in G$. With an invariant operator, the output of $f$ is unaffected by the transformations applied to the input. 

\begin{definition}
    A map $f: \mathcal{X} \rightarrow \mathcal{Y}$ is \emph{invariant} if
    \begin{align*}
        f(x)=f(\rho^{\mathcal{X}}(g)x) \ , \ \text{for all } g\in G, x \in \mathcal{X} .
    \end{align*}
\end{definition}

\subsubsection{Euclidean Equivariance}

For a mesh defined over a global coordinate system, a common choice of symmetry constraint is equivariance to the Euclidean group in $n$ dimensions, $E(n)$. In this setting, the mesh is treated as a graph with positional encodings, and the equivariance constraint ensures generalization to different roto-translations of the mesh. 

\begin{definition}
\label{thm:en_equiv_const}
Let $t \in \mathbb{R}^n$ be a translation vector and $Q \in \mathbb{R}^{n \times n}$ an orthogonal matrix representing a rotation or reflection. A function $f$ is equivariant to the Euclidean group $E(n)$ if for any $t \in \mathbb{R}^n$ and $Q \in \mathbb{R}^{n \times n}$ we have
    \begin{equation*}
        f(Qx + t) = Qf(x) + t.
    \end{equation*}
\end{definition}

\subsubsection{Gauge Equivariance}

We may also choose to embed coordinates locally, using coordinates that are intrinsic to the 2D mesh rather than the extrinsic 3D coordinates of the embedding space. This approach arises from the desire for a general convolution-like operator over arbitrary manifolds discretized as a mesh. To encode data over a mesh it is still necessary to make a choice of local coordinate frame at each vertex. In order to guarantee the equivalence of the features resulting from different choices of reference frames, the model should be invariant to change of coordinates frame at each vertex, i.e. gauge equivariant. 

We specifically adapt the strategy described in \citet{dehaan2021gaugeequivariantmeshcnns} and define the local coordinate frame at each vertex in terms of a reference neighboring vertex. Denote $v_a$ as the reference neighbor for gauge $A$, in which the neighbors have angles $\theta_A$, and denote $v_b$ as the reference neighbor for gauge $B$ with angles $\theta_B$. Comparing the two gauges, we see that they are related by a rotation of angle $\phi$, so that $\theta_B = \theta_A - \phi$. This change of gauge is called a gauge transformation of angle $g := \phi$.  

\begin{definition}[Equations 3 and 4 in \citet{dehaan2021gaugeequivariantmeshcnns}] \label{thm:gauge_equiv_constraint}
Let $\rho_{\text{in}}$ and $\rho_{\text{out}}$ be input and output types with dimensions $C_{\text{in}}$ and $C_{\text{out}}$. Let $K_{\text{self}} \in \mathbb{R^{C_{\text{out}} \times C_{\text{in}}}}$ and $K_{\text{neigh}} \colon [0, 2\pi) \rightarrow \mathbb{R}^{C_{\text{out}} \times C_{\textit{in}}}$ be two kernels. We say the kernels are \textit{gauge equivariant} if for any gauge transformation $g \in [0, 2\pi) $ and for any angle $\theta \in [0, 2\pi)$ we have
    \begin{equation*}
        K_{\text{neigh}}(\theta - g) = \rho_{\text{out}}(-g)K_{\text{neigh}}(\theta)\rho_{\text{in}}(g)
        , \qquad K_{\text{self}} = \rho_{\text{out}} (-g)K_{\text{self}} \, \rho_{\text{in}}(g).
    \end{equation*}
\end{definition}

Finally, as features at different nodes live in different tangent spaces and thus have different gauges, it is invalid to sum them directly. Let $f_p$ and $f_q$ be node features of a pair of neighboring nodes $p$ and $q$. Before performing gauge equivariant convolution, we must parallel transport each $f_q$ to $T_pM$ along the mesh edge that connects the two vertices for them to be in the same gauge. For more details, we refer the reader to  \citet{dehaan2021gaugeequivariantmeshcnns}. 

\subsection{Unitary Learning Framework} \label{sec:unitary_learning}

This section provides rigorous definitions for the mathematical tools used in the main text and additionally clarifies necessary hypotheses.

We start with the fundamental domain. Assume $X$ has dimension $n$. Let $d$ be the dimension of a generic orbit of $G$ in $X$.  Let $\nu$ be the $(n-d)$ dimensional Hausdorff measure in $X$.

\begin{definition}[Fundamental Domain, Definition 4.1 in \citet{wang2023general}] A closed subset $F$ of $X$ is called a fundamental domain of $G$ in $X$ if $X$ is the union of conjugates of $F$, i.e., $X = \cup_{g \in G}gF$, and the intersection of any two conjugates has measure $0$ under $\nu$.    
\end{definition}

Next, we note that our proof of \cref{thm:uni_lb} satisfies the integrability assumption on the fundamental domain $F$ and orbits $Gz$ established in \citet{wang2023general}:

\begin{assumption}[Integrability Hypothesis, Sec. A in \citet{wang2023general}] The fundamental domain $F$ and orbit $Gx$ are differentiable manifolds and the union of all pairwise intersections $\cap_{g_1 \neq g_2}(g_1F \cap g_2F)$ has measure zero. 
\end{assumption}

We now provide more formal definitions for $\mathbb{E}_{Gx}[f]$ and $\mathbb{V}_{Gx}[f]$ used in \cref{prop:inv}. Denote by $q(z) = \frac{p(z)}{p(Gx)}$ the density of the orbit $Gx$ so that $\int_{Gx}q(z)dz = 1$. The mean and variance of a function $f$ on $Gx$ are given by 
\begin{equation*}
 \mathbb{E}_{Gx}[f] = \int_{Gx}q(z)f(z)dz  , \qquad \mathbb{V}_{Gx}[f] =  \int_{Gx} q(z) \lVert \mathbb{E}_{Gx}[f] - f(z)\rVert_2^2dz. 
\end{equation*}
\subsection{Proof of Main Theorem} \label{sec:proof}

We now provide proof of our main theoretical result in the main text. We repeat the theorem here for convenience.

\begin{theorem*}
Let $F$ be a fundamental domain of $\mathrm{SU}(n)$ in $Z$. In particular, $F = \{ t e \colon t \in \mathbb{R}_+\}$ where $e$ is a standard basis vector of $\mathbb{C}^n$. The approximation error lower bound can be expressed as $$\int_{Z}p(z)\lVert u(z) - f(z)\rVert_2^2dz \geq \int_{F} p(\lVert te \rVert) \mathbb{V}_{Gz}[\rVert f\lVert]dz.$$
\end{theorem*}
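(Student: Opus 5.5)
The plan is to reduce the unitary approximation problem to an invariant one and then invoke \cref{prop:inv}. The key observation is that a unitary function $u$ preserves norms, so $\lVert u(z)\rVert = \lVert z\rVert$ for every $z \in Z$. The scalar function $h(z) := \lVert u(z)\rVert = \lVert z\rVert$ is therefore invariant under $G = \mathrm{SU}(n)$ acting on $Z = \mathbb{C}^n$. This holds because $\lVert gz\rVert = \lVert z\rVert$ for every $g \in \mathrm{SU}(n)$.

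First, I apply the reverse triangle inequality pointwise:
\[
\lVert u(z) - f(z)\rVert_2 \geq \bigl|\, \lVert u(z)\rVert - \lVert f(z)\rVert \,\bigr| = \bigl|\, h(z) - \lVert f(z)\rVert \,\bigr|.
\]
Squaring and integrating against $p$ then gives $\errreg{u} \geq \int_Z p(z)\,|h(z) - \lVert f(z)\rVert|^2\,dz$. The right-hand side is exactly the regression error of the $G$-invariant real-valued function $h$ approximating the target $\lVert f\rVert$.

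Second, I identify the orbit structure needed for \cref{prop:inv}. For $n \geq 2$, $\mathrm{SU}(n)$ acts transitively on each sphere $\{\lVert z\rVert = t\} \cong S^{2n-1}$. To see this, take any unit vector $v$, complete it to an orthonormal basis, and rescale the last column by a phase to force determinant $1$. Consequently the orbits are the concentric spheres, together with the fixed point $0$, which has measure zero. The ray $F = \{te : t \in \mathbb{R}_+\}$ meets each orbit exactly once, so it is a fundamental domain. The integrability hypothesis holds because both $F$ and the spheres are smooth manifolds and distinct conjugates $gF$ intersect only at $0$ or along a null set. The orbit mass $p(Gz)$ for $z = te$ is the integral of $p$ over the sphere of radius $\lVert te\rVert$, which is the quantity the statement abbreviates as $p(\lVert te\rVert)$.

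Third, I apply \cref{prop:inv} to $h$ with target $\lVert f\rVert$. This yields
\[
\int_Z p\,|h - \lVert f\rVert|^2 \geq \int_F p(Gz)\,\mathbb{V}_{Gz}[\lVert f\rVert]\,dz,
\]
and chaining this with the first step gives the claim.

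The main obstacle is bookkeeping rather than depth. \cref{prop:inv} is stated for real-valued targets with an invariant approximator, and here the approximator $h$ is forced to equal $\lVert z\rVert$ rather than being an arbitrary invariant function. That is harmless, since the proposition's lower bound holds for every invariant $h$, in particular the optimal one (the orbit mean of $\lVert f\rVert$). I also need the orbit and fundamental-domain measure conventions (the Hausdorff measure $\nu$ on $F$ and the normalized density $q$ on each sphere) to match the definitions in \cref{sec:unitary_learning}. To settle this, I would verify the disintegration $\int_Z = \int_F \int_{Gz}$ in polar coordinates.
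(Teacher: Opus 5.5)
Your proposal is correct and follows the paper's proof essentially step for step. The reverse triangle inequality reduces the error to that of the $\mathrm{SU}(n)$-invariant scalar function $\lVert u(z)\rVert$ against the target $\lVert f\rVert$; transitivity of $\mathrm{SU}(n)$ on the concentric spheres justifies the ray $F$ as a fundamental domain; and \cref{prop:inv} finishes the argument. Your version also makes explicit three things the paper leaves implicit: norm preservation gives $\lVert u(z)\rVert = \lVert z\rVert$, which is the actual source of the invariance; transitivity requires $n \geq 2$; and the orbit mass $p(Gz)$ is what the notation $p(\lVert te\rVert)$ denotes.
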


\begin{proof}[Proof of \cref{thm:uni_lb}]
By the reverse triangle inequality, 
\begin{align*}
\int_{Z}p(z)\lVert u(z) - f(z)\rVert_2^2dz  &\geq \int_{Z}p(z)\left( \lVert u(z)\rVert - \lVert f(z)\rVert \right)^2dz.
\end{align*}
Notice that $\lVert u(z)\rVert$ is invariant under the action of $\mathrm{SU}(n)$ on the sphere $S^{2n-1}$ with radius $\lVert te \rVert$ and recall that $\mathrm{SU}(n)$ acts transitively on the sphere. Thus, $F = \{ t e \colon t \in \mathbb{R}_+\}$ is a valid fundamental domain that indexes each orbit $Gz$, the spheres with radii $\lVert te \rVert$. Our theorem then follows from \cref{prop:inv}.
\end{proof}

In the following example, we show how this bound may be computed.

\begin{example}[Variance on the Unit Disk]
Denote by $D_2$ the unit disk $D_2 = \{ (x,y) \colon x^2 + y^2 \leq 1\}$. Let $Z = \mathbb{R}^2$ be a domain with density $$p = \begin{cases}
    \frac{1}{\pi }, & (x,y) \in D_2\\
    0,& \text{Otherwise}.
\end{cases}$$ 
Denote by $f$ a function in polar coordinates given by
\begin{align*}
    f \colon (\theta,r) &\to \mathbb{R}^2\\
    \theta &\mapsto (\sin \theta + r, \cos \theta + r) .
\end{align*}
On each orbit, we compute 
\begin{equation*}
\mathbb{E}_{G_z}[f] =( r,  r), \qquad \mathbb{V}_{G_z}[f] = 1, \qquad   p(r) = \begin{cases}
        2 r, & r \leq 1\\
        0 , & \text{otherwise}.
    \end{cases}  
\end{equation*}
The approximation error bound of a unitary function $u$ of $f$ is then  
\begin{align*}
   \int_{Z}p(z)\lVert u(z) - f(z)\rVert_2^2dz  \geq  \int_F p(r) \mathbb{V}_{G_z}[f] dr &= \int_0^1 2r (1) dr = 1.
\end{align*}
\end{example}

\subsection{Unitary Convolution on Meshes} \label{sec:mesh_conv}

In this section, we prove \cref{cor:main_cor} stated in \cref{sec:relaxed_meshes} and repeated here for convenience.

\begin{corollary*}[Corollary to \cref{prop:rq_preserved}] 
Given a mesh $\mathcal{M}$ with normalized adjacency matrix $\mathbf{\tilde{A}} = \mathbf{D^{-1/2}}(\mathcal{W} \odot \mathbf{A})\mathbf{D^{-1/2}}$ that satisfies \cref{ass:assumption}, the mesh Rayleigh quotient is invariant under normalized unitary or orthogonal graph convolution, i.e. $R_{\mathcal{M}} (\mathbf{X}) = R_\mathcal{M} (f_{\rm UniMeshConv} (\mathbf{X}))$ where $f_{\rm UniMeshConv}$ is either separable or Lie.
\end{corollary*}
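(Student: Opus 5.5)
The plan is to reduce the corollary to the same linear-algebraic mechanism that underlies \cref{prop:rq_preserved}. The key fact is that the mesh Rayleigh quotient can be written as
\[
R_{\mathcal M}(\mathbf X)=\frac{\Tr(\mathbf X^\dagger \mathbf L_{\mathcal M}\mathbf X)}{\lVert\mathbf X\rVert_F^2},
\qquad
\mathbf L_{\mathcal M} = \mathbf I-\tilde{\mathbf A},
\]
where $\tilde{\mathbf A}=\mathbf D^{-1/2}(\mathcal W\odot\mathbf A)\mathbf D^{-1/2}$ is a real symmetric matrix. By \cref{ass:assumption}, every off-diagonal cotangent weight $\mathcal W_{ij}$ is nonnegative. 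Hence $\mathbf D_{ii}=\sum_{j\neq i}\mathcal W_{ij}>0$ for every non-isolated vertex, so $\mathbf D^{-1/2}$ is well defined. Expanding the edge sum in \cref{def:mesh_rq} in the usual way then gives the quadratic form of $\mathbf I-\tilde{\mathbf A}$. This is the step where the Delaunay hypothesis is genuinely used: it makes the weighted Laplacian a legitimate symmetric positive semidefinite normalized Laplacian. I will therefore check it explicitly, rather than relying on the cotangent normalization of $\tilde{\mathbf L}$ with vertex areas. I expect this identification, namely that the mesh Rayleigh quotient is the quotient for the weighted normalized Laplacian built from $\tilde{\mathbf A}$, to be the main obstacle. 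Once it holds, what remains is the same mechanism as in \cref{prop:rq_preserved}.

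\emph{Separable case.} Since $\tilde{\mathbf A}$ is Hermitian, $\mathbf V=\exp(i\tilde{\mathbf A})$ is unitary and commutes with $\tilde{\mathbf A}$, so it also commutes with $\mathbf L_{\mathcal M}$. For $\mathbf Y=\mathbf V\mathbf X\mathbf U$ with $\mathbf U\mathbf U^\dagger=\mathbf I$, unitary invariance of the Frobenius norm gives $\lVert\mathbf Y\rVert_F=\lVert\mathbf X\rVert_F$. For the numerator, I will compute
\[
\Tr(\mathbf U^\dagger\mathbf X^\dagger\mathbf V^\dagger\mathbf L_{\mathcal M}\mathbf V\mathbf X\mathbf U)=\Tr(\mathbf X^\dagger\mathbf L_{\mathcal M}\mathbf X\,\mathbf U\mathbf U^\dagger)=\Tr(\mathbf X^\dagger\mathbf L_{\mathcal M}\mathbf X),
\]
using $\mathbf V^\dagger\mathbf L_{\mathcal M}\mathbf V=\mathbf L_{\mathcal M}$ and cyclicity of the trace.

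\emph{Lie case.} Interpret $\exp(\tilde{\mathbf A}\mathbf X\mathbf W)$ as in \cref{sec:exp}: the exponential of the linear operator $\mathcal L:\mathbf X\mapsto\tilde{\mathbf A}\mathbf X\mathbf W$ on $\mathbb C^{n\times d}$. The argument has three steps.
\begin{enumerate}
\item Under the Frobenius inner product $\langle\mathbf X,\mathbf Y\rangle=\Tr(\mathbf X^\dagger\mathbf Y)$, the adjoint of $\mathcal L$ is $\mathbf Y\mapsto\tilde{\mathbf A}\mathbf Y\mathbf W^\dagger=-\mathcal L(\mathbf Y)$. So $\mathcal L$ is skew-adjoint and $\exp(\mathcal L)$ is unitary on this space, which preserves $\lVert\cdot\rVert_F$.
\item The operator $\mathcal Q:\mathbf X\mapsto\mathbf L_{\mathcal M}\mathbf X$ commutes with $\mathcal L$, because $\mathbf L_{\mathcal M}$ commutes with $\tilde{\mathbf A}$ and acts on the left while $\mathbf W$ acts on the right.
\item The numerator equals $\langle\mathbf X,\mathcal Q\mathbf X\rangle$. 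Since $\exp(\mathcal L)$ is unitary and commutes with $\mathcal Q$, this quantity is unchanged under $\mathbf X\mapsto\exp(\mathcal L)\mathbf X$.
\end{enumerate}
In the real orthogonal case the same argument applies with $\dagger$ replaced by transpose.

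Combining the two cases, both numerator and denominator are invariant, so $R_{\mathcal M}$ is preserved.
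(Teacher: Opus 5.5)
Your proposal is correct and is essentially the paper's proof. In the separable case you use the same cyclic-trace argument, with $\exp(i\tilde{\mathbf A})$ unitary and commuting with $\mathbf I-\tilde{\mathbf A}$. In the Lie case, your observation that $\mathbf X\mapsto\tilde{\mathbf A}\mathbf X\mathbf W$ is skew-adjoint for the Frobenius inner product and commutes with $\mathbf X\mapsto(\mathbf I-\tilde{\mathbf A})\mathbf X$ is the coordinate-free form of the paper's vectorized statement that $\tilde{\mathbf A}\otimes\mathbf W^T$ is skew-Hermitian and commutes with $(\mathbf I-\tilde{\mathbf A})\otimes\mathbf I$.

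Two points you make explicit that the paper uses only tacitly are worth keeping: identifying the edge sum in \cref{def:mesh_rq} (with weighted degrees $d_u=\mathbf D_{uu}$) as the quadratic form of $\mathbf I-\tilde{\mathbf A}$, and noting that nonnegative weights are what make $\mathbf D^{-1/2}$ real and $\tilde{\mathbf A}$ Hermitian. One small caveat: nonnegativity only gives $\mathbf D_{ii}\geq 0$, so invertibility of $\mathbf D$ also needs each vertex to have a strictly positive incident weight, which the paper assumes implicitly as well.
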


Our proof follows the same structure as the proof of \cref{prop:rq_preserved} in \citet{kiani2024unitary} with modifications to account for the weighted adjacency matrix. Namely, we invoke \cref{ass:assumption} which ensures that $f_{\rm UniMeshConv}$ is norm preserving and therefore the the strategy in \citet{kiani2024unitary} still holds.

\begin{proof} We first prove invariance for \cref{eq:msep}. By the circulant property of the trace,
\begin{equation*}
 \Tr \left(\left( \exp (i \tilde{\mathbf{A}}) \mathbf{XU}\right)^\dagger\left(\mathbf{I - \tilde{A}}  \right) \left( \exp (i \tilde{\mathbf{A}}) \mathbf{XU}\right) \right) = \Tr \left( \mathbf{X}^\dagger\exp (-i \tilde{\mathbf{A}})\left(\mathbf{I - \tilde{A}}  \right)  \exp (i \tilde{\mathbf{A}}) \mathbf{X} \right).
\end{equation*}
Because $\exp(-i\mathbf{\tilde{A}})$, $\exp(i\mathbf{\tilde{A}})$, and $(\mathbf{I - \tilde{A}})$ share an eigenbasis, they commute, so $$\Tr \left(\left( \exp (i \tilde{\mathbf{A}}) \mathbf{XU}\right)^\dagger\left(\mathbf{I - \tilde{A}}  \right) \left( \exp (i \tilde{\mathbf{A}}) \mathbf{XU}\right) \right) = \Tr \left(\mathbf{X}^\dagger \left(\mathbf{I - \tilde{A}}\right)\mathbf{X}\right).$$
For the denominator, we need to show that $\lVert \exp (i \mathbf{\tilde{A}})\mathbf{XU}\rVert_F^2 = \lVert \mathbf{X}\rVert_F^2$. By \cref{ass:assumption} we have that $\mathcal{W}$ is symmetric. Because $\mathbf{A}$ is also symmetric, we have that $i\tilde{\mathbf{A}}$ is skew hermitian and therefore $\exp (i \tilde{\mathbf{A}}) \in SU(n)$. Thus, $\lVert \exp (i \mathbf{\tilde{A}})\mathbf{XU}\rVert_F^2 = \lVert \mathbf{X}\rVert_F^2$ and finally $R_{\mathcal{M}} (\mathbf{X}) = R_\mathcal{M} (f_{\rm UniMeshConv} (\mathbf{X}))$.

We now show that \cref{eq:mlie} also preserves the Rayleigh quotient. First, we need to show that $\lVert \exp ( \mathbf{AXW)} \rVert_F^2 = \lVert \mathbf{X} \rVert_F^2$. To do this, we note that \cref{eq:mlie} can equivalently be viewed us a function that acts on a vector in $\mathbb{C}^{nd}$. By properties of the Kroneckor tensor product, 
\begin{equation*}
f_{\rm UniMeshConv}(\mathbf{X; A}) = \exp (\mathbf{AXW}) \iff \text{vec}\left(f_{\rm UniMeshConv}(\mathbf{X; A})\right) = \exp\left( \mathbf{A \otimes W^T} \right)\text{vec}(\mathbf{X}).
\end{equation*}
Since $$\left( \mathbf{A \otimes W^T} \right) + \left( \mathbf{A \otimes W^T} \right)^\dagger = \mathbf{A} \otimes \left(\mathbf{W + W^\dagger}\right)^T = 0,$$ we have that $\left( \mathbf{A \otimes W^T} \right)$ is in the lie algebra of the unitary group and therefore preserves the norm of $\text{vec}(\mathbf{X})$. This holds for any symmetric edge weighting $\tilde{\mathbf{A}} = \mathcal{W} \odot \mathbf{A}$, which is guaranteed by \cref{ass:assumption}. Thus, $\lVert \exp ( \mathbf{AXW)} \rVert_F^2 = \lVert \mathbf{X} \rVert_F^2$. Next, note that $\exp \left(  \mathbf{\tilde{A} \otimes W^T} \right)$ commutes with $(\mathbf{\tilde{A} \otimes I})$. Thus,
\begin{align*}
\Tr &\left( f_{\rm UniMeshConv}(\mathbf{X; \tilde{A}}) ^\dagger (\mathbf{I - \tilde{A}})f_{\rm UniMeshConv}(\mathbf{X; \tilde{A}})\right) \\
&= \text{vec}(\mathbf{X})^\dagger \exp (\mathbf{\tilde{A}\otimes W^T})^\dagger \left[(\mathbf{I - \tilde{A}} ) \otimes \mathbf{I}\right] \exp (\mathbf{\tilde{A} \otimes W^T}) \text{vec}(\mathbf{X})\\
&= \text{vec}(\mathbf{X})^\dagger \left[(\mathbf{I - \tilde{A}} ) \otimes \mathbf{I}\right]  \text{vec}(\mathbf{X}).
\end{align*}
Multipliying the above by $\lVert \mathbf{X}\rVert_F^{-2}$ recovers $R_{\mathcal{M}}(\mathbf{X})$. We conclude that $R_{\mathcal{M}}(\mathbf{X}) = R_{\mathcal{M}}(f(\mathbf{X}))$.
\end{proof}

\begin{remark}
\cref{cor:main_cor} was applied to convolution with the symmetric cotangent weights in \cref{eq:cot_weights}, but the proof extends without loss of generality to any set of symmetric weights.
\end{remark}

\subsection{Discrete Differential Geometry} \label{sec:mesh_laplacian}

We provide further details and visualizations for concepts in discrete differential geometry, including the mesh manifold condition, the cotangent Laplacian, and Delaunay criterion. We also review results from the literature that suggests that the mesh edge rewiring algorithm used by the Robust Laplacian is a safe choice for the task of PDE solving. For additional reference on these topics, see \citet{meyer2003discrete} and \citet{Crane2013DGP}.

\paragraph{The Manifold Condition.}

Our work makes use of the assumption that a given mesh is manifold. We recall the following definition of the mesh manifold condition:

\begin{definition}[Manifold Condition, \citep{sharp2020laplacian}] \label{def:man_cond}
An interior (or boundary) edge $ij$ is manifold if it is contained in exactly two (or one) triangles; an interior (or boundary) vertex $i$ is manifold if the boundary of all triangles incident on $i$ forms a single loop (or path) of edges. 
\end{definition}

Alternative definitions for the manifold condition often state that a mesh is manifold if interior vertices have local neighborhoods that are homeomorphic to the unit disk and boundary vertices are homeomorphic to the half disk. We refer specifically to \cref{def:man_cond} when we talk about a mesh being manifold in the paper.

\paragraph{Cotangent Laplacian Area Normalization.}

Recall that for a scalar function $s$ on mesh vertices we the cotangent Laplacian is defined as
\begin{equation*}
    (\mathbf{\tilde{L}}(s))_i = \frac{1}{2A_i}\underset{j \in \mathcal{N}(i)}{\sum} \left(\cot \alpha_{ij} + \cot \beta_{ij} \right)(s_j - s_i)
    \label{eq:laplacian_copy}
\end{equation*}
where $\mathcal{N}(i)$ denotes the adjacent vertices of $i$, $\alpha_{ij}$ and $\beta_{ij}$ are the angles opposite edge $(i, j)$, and $A_i$ is the vertex area of $i$ and that we use the barycentric cell area for $A_i$. In particular, let $\mathcal{A}_{abc}$ be the area of a triangular face with vertices $abc$ and let $\mathcal{F}(i)$ be the set of faces containing vertex $i$. The barycentric cell area is defined
\begin{equation*}
    A_i = \underset{abc \in \mathcal{F}(i)}{\sum} \mathcal{A}_{abc}/3.
\end{equation*}
Normalization by the cell area was used for the dataset construction in \citet{park2023modelingdynamicsmeshesgauge}, but it is not used in the definition of the Robust Laplacian \citep{sharp2020laplacian}. In fact, the cell area introduces asymmetry in the edge weights. This is undesirable, as unitary mesh convolution depends on symmetric edge weights in order to preserve the Rayleigh quotient. 

\paragraph{Cotangent Laplacian Edge Weights and Robust Rewiring.} As noted in \cref{sec:relaxed_meshes}, an arbitrary mesh may have negative cotangent weights. These cotangent weights have the following geometric meaning. For vertices $ij$ connected by an edge, we say that the edge is \textit{primal}. For manifold meshes, for each primal edge connecting two triangle there exists a \textit{dual edge} that connects the triangle circumcenters. The cotangent weights correspond to the ratio of the primal and dual edge lengths for vertices $ij$  \citep{Crane2013DGP}. The weights are positive when the angles $\alpha_{ij} + \beta_{ij} \leq \pi$. The Robust Laplacian applies two edge rewiring algorithms sequentially, the tufted cover algorithm and the Delaunay edge flip algorithm. The tufted cover algorithm ensures the mesh is manifold so that the Delaunay edge flip algorithm can be applied. The Delaunay edge flip algorithm then ensures that the mesh satisfies the intrinsic Delaunay criterion (\cref{def:delaunay}). A sample edge flip is illustrated in \cref{fig:edge_flip}.

\begin{figure}[!htb]
    \centering
    \includegraphics[width=0.5\linewidth]{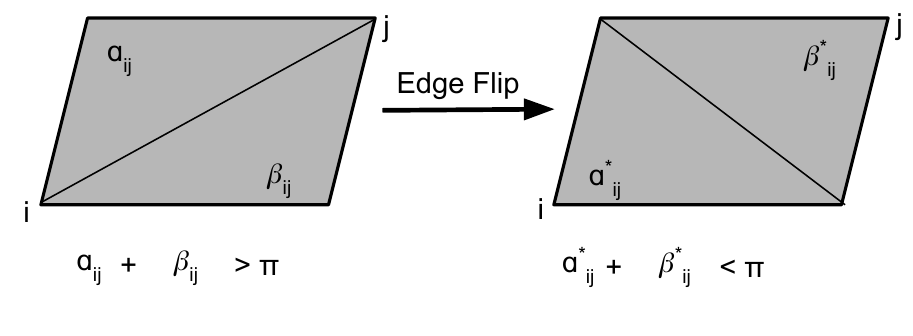}
    \caption{Illustration of an intrinsic Delaunay edge flip performed by the Robust Laplacian edge rewiring algorithm. This figure is a reproduced version of Figure 7, \citet{sharp2020laplacian}.}
    \label{fig:edge_flip}
\end{figure}

\citet{sharp2020laplacian} note that from a finite element perspective, changing the triangulation via Delaunay edge-flipping effectively just provides a different set of linear basis functions for the same polyhedral domain. Thus, the only practical concern is whether the tufted cover algorithm, which ensures that the mesh is manifold, does not dramatically change the connectivity. Empirical results from \citet{sharp2020laplacian} find that the Robust Laplacian greatly \textit{improves} performance computing geodesic distances with the heat method \citep{crane2017heat}, which depends on numerical PDE solving on highly nonmanifold meshes with the cotangent Laplacian. This provides confidence that the tufted cover algorithm improves the edge-weighting scheme for unitary mesh convolution. See \citet{crane2017heat} for additional reference on numerical PDE solving on nonmanifold meshes.

\subsection{Rayleigh Quotient Sensitivity} \label{sec:sensitivity_more_details}

We include results from \citet{ferrandi2022homogeneous} and \citet{dong2023rayleigh} that illustrate the sensitivity of the Rayleigh quotient to small perturbations of the input, such as Taylor series truncation errors. While the hypotheses are stronger than what we may actually see in practice, the following proposition provides an intuition for the Rayleigh quotient sensitivity.

\begin{proposition}[Proposition 4 in \citet{ferrandi2022homogeneous}] \label{prop:sensitivity_f}
Suppose $\mathbf{u = x +e}$ is an approximate eigenvector corresponding to a simple eigenvalue $\lambda \neq 0$ of a symmetric $A$, with $\lVert \mathbf{x} \rVert = 1$, $\mathbf{e\perp x}$, and $\varepsilon = \lVert \mathbf{e}\rVert$. Then, up to $\mathcal{O}(\varepsilon^4)$-terms, for the sensitivity of the Rayleigh quotient (as a function of $\mathbf{u}$) it holds that $$\underset{\lambda_i \neq \lambda}{\min}\frac{|\lambda_i - \lambda|}{|\lambda_i|} \varepsilon^2\lesssim \frac{|R_{\mathcal{G}}(\mathbf{u}) - \lambda|}{|\lambda|}\lesssim \underset{\lambda_i \neq \lambda}{\max}\frac{|\lambda_i - \lambda|}{|\lambda_i|} \varepsilon^2.$$  
\end{proposition}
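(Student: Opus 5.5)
The plan is to work entirely in an orthonormal eigenbasis of the symmetric matrix $A$ and carry out a second-order expansion in $\varepsilon$. Let $A = \sum_j \lambda_j \mathbf{v}_j\mathbf{v}_j^T$ with $\mathbf{v}_1 = \mathbf{x}$ and $\lambda_1=\lambda$. Because $\lambda$ is simple and $\mathbf{e}\perp\mathbf{x}$, we can write $\mathbf{e} = \sum_{i\neq 1} c_i \mathbf{v}_i$ with $\sum_{i\neq1} c_i^2 = \varepsilon^2$. Every quadratic form of $\mathbf{u}$ in a function of $A$ then diagonalises. For example,
\begin{equation*}
\mathbf{u}^T g(A)\mathbf{u} = g(\lambda) + \sum_{i\neq 1} g(\lambda_i)\, c_i^2 ,
\end{equation*}
with no cross terms, since $\mathbf{x}\perp\mathbf{e}$ and the eigenvectors are orthonormal.

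The first real step is to pin down which quotient the sensitivity statement refers to; I expect this to be the main obstacle. For the ordinary quotient $\mathbf{u}^TA\mathbf{u}/\mathbf{u}^T\mathbf{u}$ the expansion gives
\begin{equation*}
R - \lambda = \sum_{i\neq1}(\lambda_i-\lambda)c_i^2 + \mathcal{O}(\varepsilon^4),
\end{equation*}
so the relative error carries the factor $1/|\lambda|$ rather than $1/|\lambda_i|$. The stated weights $|\lambda_i-\lambda|/|\lambda_i|$ are instead exactly what the homogeneous (harmonic-type) quotient of \citet{ferrandi2022homogeneous} produces. Its model form is $\mathbf{u}^T\mathbf{u}/\mathbf{u}^TA^{-1}\mathbf{u}$, restricted to the invertible part if needed; this is where the hypothesis $\lambda\neq 0$ enters. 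I would therefore adopt that quotient, following their definition, and verify the expansion for it.

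For that quotient the computation gives
\begin{equation*}
\frac{1+\varepsilon^2}{\lambda^{-1}+\sum_{i\neq1} c_i^2/\lambda_i} = \lambda\Big(1+\sum_{i\neq 1} c_i^2\big(1-\tfrac{\lambda}{\lambda_i}\big)\Big) + \mathcal{O}(\varepsilon^4).
\end{equation*}
This uses $(1+\delta)^{-1} = 1-\delta+\mathcal{O}(\delta^2)$ with $\delta = \lambda\sum_{i\neq1} c_i^2/\lambda_i = \mathcal{O}(\varepsilon^2)$. It follows that
\begin{equation*}
\frac{R(\mathbf{u})-\lambda}{\lambda} = \sum_{i\neq1} c_i^2\,\frac{\lambda_i-\lambda}{\lambda_i} + \mathcal{O}(\varepsilon^4).
\end{equation*}

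The final step bounds this weighted sum, whose weights $c_i^2\ge 0$ total $\varepsilon^2$. The upper bound is immediate from the triangle inequality: the absolute value is at most $\max_{i\neq1}|\lambda_i-\lambda|/|\lambda_i|\cdot\varepsilon^2$. The lower bound by the minimum is valid only when the terms $(\lambda_i-\lambda)/\lambda_i$ do not cancel, for instance when $\lambda$ is extreme in the relevant sense so that they share a sign. Otherwise it holds only in the loose sense signalled by $\lesssim$. I would flag this sign condition explicitly as the delicate point, state the lower bound under it, and absorb all remainders into the $\mathcal{O}(\varepsilon^4)$ term.
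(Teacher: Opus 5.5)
The paper does not prove this proposition; it is quoted from \citet{ferrandi2022homogeneous}, so there is no internal argument to compare with. Your eigenbasis expansion therefore stands as a self-contained proof. The computation is correct, and both places where you depart from the literal statement are forced, not optional.

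Read literally, with $R_{\mathcal G}$ the ordinary quotient of \cref{def:RQ}, the claim is false. Take $A=\mathrm{diag}(1,100)$, $\lambda=1$, $\mathbf e=\varepsilon\mathbf v_2$: then $|R(\mathbf u)-\lambda|/|\lambda|=99\varepsilon^2/(1+\varepsilon^2)$, far above the claimed $0.99\,\varepsilon^2$. Moreover, the graph Laplacian always has eigenvalue $0$, so for it the weights $1/|\lambda_i|$ are not even defined.

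The lower bound also genuinely needs your sign condition. Take $A=\mathrm{diag}(2,1,4)$, $\lambda=2$, $\mathbf e=\varepsilon(\mathbf v_2/\sqrt3+\sqrt{2/3}\,\mathbf v_3)$. Your quotient then equals $(1+\varepsilon^2)/(\tfrac12+\tfrac12\varepsilon^2)=\lambda$ exactly, while the stated lower bound is $\varepsilon^2/2$.

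Two refinements would tighten the write-up.

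First, state the quotient intrinsically rather than attaching it to the homogeneous quotient of that reference: $\mathbf u^T\mathbf u/\mathbf u^TA^{-1}\mathbf u=1/\rho_{A^{-1}}(\mathbf u)$, the reciprocal of the ordinary Rayleigh quotient of $A^{-1}$. Seen this way, no series is needed for the core identity, since exactly $\rho_{A^{-1}}(\mathbf u)-\lambda^{-1}=\mathbf e^T(A^{-1}-\lambda^{-1}I)\mathbf e/(1+\varepsilon^2)$. The stated weights are just the relative gaps $|\lambda_i^{-1}-\lambda^{-1}|/|\lambda^{-1}|=|\lambda_i-\lambda|/|\lambda_i|$ of $A^{-1}$. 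Passing from $\rho$ to $1/\rho$ changes the relative error only by a factor $1+\mathcal O(\varepsilon^2)$. This view also makes your sign condition precise: all $1-\lambda/\lambda_i$ share a sign if and only if $1/\lambda$ is the largest or smallest eigenvalue of $A^{-1}$. That is the usual extremality requirement for the lower half of the classical Rayleigh-quotient bound.

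Second, the hypothesis actually needed is that $A$ is nonsingular, not merely $\lambda\neq0$; this is already implicit in the weights $1/|\lambda_i|$. Your escape clause of restricting to the invertible part (via a pseudo-inverse) assigns kernel directions weight $1$ and does not reproduce the stated bounds. Replace it with the nonsingularity assumption.
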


This indicates that the Rayleigh quotient sensitivity is quadratic in perturbations $\varepsilon$. For $\varepsilon < 1$, this means that the sensitivity of the Rayleigh quotient is even less than the truncation error. We also have the following results from \citet{dong2023rayleigh}:

\begin{proposition}[Theorem 1 in \citet{dong2023rayleigh}] \label{prop:dong_1}
For any given graph $G$, if there exists a perturbation $\Delta$ on $\mathbf{L}$, the change of Rayleigh quotient can be bounded by $\lVert \Delta \rVert_2$.
\end{proposition}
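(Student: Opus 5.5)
The plan is to work directly with the matrix form of the Rayleigh quotient from \cref{def:RQ}, $R_{\mathcal{G}}(\mathbf{X}) = \Tr(\mathbf{X}^\dagger \mathbf{L}\mathbf{X})\,\lVert \mathbf{X}\rVert_F^{-2}$. I read the statement as follows: the perturbed Rayleigh quotient is the same expression with $\mathbf{L}$ replaced by $\mathbf{L}+\Delta$, evaluated at the same signal $\mathbf{X}\neq 0$. The claim to prove is then
\[
\bigl|R_{\mathcal{G}+\Delta}(\mathbf{X}) - R_{\mathcal{G}}(\mathbf{X})\bigr| \le \lVert \Delta\rVert_2 .
\]
The denominator $\lVert\mathbf{X}\rVert_F^2$ does not depend on $\mathbf{L}$. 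So the difference of the two quotients collapses to the single term $\Tr(\mathbf{X}^\dagger\Delta\mathbf{X})\,\lVert\mathbf{X}\rVert_F^{-2}$, and the whole proof reduces to bounding this trace.

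Next I expand the trace column by column. Writing $\mathbf{x}_1,\dots,\mathbf{x}_d\in\mathbb{C}^n$ for the columns of $\mathbf{X}$, we have $\Tr(\mathbf{X}^\dagger\Delta\mathbf{X}) = \sum_{j=1}^d \mathbf{x}_j^\dagger \Delta \mathbf{x}_j$. For each column, Cauchy--Schwarz and the definition of the operator norm give
\[
|\mathbf{x}_j^\dagger\Delta\mathbf{x}_j| \le \lVert\mathbf{x}_j\rVert_2\,\lVert\Delta\mathbf{x}_j\rVert_2 \le \lVert\Delta\rVert_2\,\lVert\mathbf{x}_j\rVert_2^2 .
\]
This step needs no symmetry of $\Delta$. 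If $\Delta$ happens to be Hermitian, one can instead cite the Rayleigh--Ritz bound $\lambda_{\min}(\Delta)\le \mathbf{x}^\dagger\Delta\mathbf{x}/\lVert\mathbf{x}\rVert^2\le\lambda_{\max}(\Delta)$ and get the same conclusion. Summing over $j$ with the triangle inequality and using $\sum_j \lVert\mathbf{x}_j\rVert_2^2 = \lVert\mathbf{X}\rVert_F^2$, the factor $\lVert\mathbf{X}\rVert_F^2$ cancels against the denominator, which gives the claimed bound by $\lVert\Delta\rVert_2$.

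The main obstacle is interpretive rather than computational: it is pinning down what a perturbation ``on $\mathbf{L}$'' means. If $\Delta$ instead arises from perturbing $\mathbf{A}$, the degree matrix $\mathbf{D}$ also changes, and hence so does the normalization $\mathbf{D}^{-1/2}(\cdot)\mathbf{D}^{-1/2}$. I would handle this by defining $\Delta$ as the total change in the normalized Laplacian, $\Delta = \mathbf{L}' - \mathbf{L}$. The argument above then applies verbatim, because the denominator of $R_{\mathcal{G}}$ never involves $\mathbf{D}$.

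I would also briefly remark on the spectral version of the result. Since the extreme values of $R_{\mathcal{G}}$ are the extreme eigenvalues of $\mathbf{L}$, the same estimate recovers Weyl's inequality $|\lambda_i(\mathbf{L}+\Delta)-\lambda_i(\mathbf{L})|\le\lVert\Delta\rVert_2$ for symmetric $\Delta$.
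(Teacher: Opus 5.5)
Your argument is correct. Because the denominator $\lVert\mathbf{X}\rVert_F^2$ does not involve $\mathbf{L}$, the change is exactly $\Tr(\mathbf{X}^\dagger\Delta\mathbf{X})/\lVert\mathbf{X}\rVert_F^2$, which Cauchy--Schwarz bounds by $\lVert\Delta\rVert_2$. The paper gives no proof of its own and only imports the result from \citet{dong2023rayleigh}; your argument is the standard direct computation for such a bound, here extended to multi-column $\mathbf{X}$. Your closing Weyl remark needs the Courant--Fischer min--max characterization, not just the extreme values of $R_{\mathcal{G}}$, to reach intermediate $\lambda_i$, but that aside is not needed for the proposition.
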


\begin{proposition}[Theorem 2 in \citet{dong2023rayleigh}] \label{prop:dong_2}
For any given graph $G$, if there exists a perturbation $\delta$ on $\mathbf{x}$, the change of Rayleigh quotient can be bounded by $2\mathbf{x}^T\mathbf{L}\delta +o(\delta)$. If $\delta$ is small enough, in which case $o(\delta)$ can be ignored,
the change can be further bounded by $2\mathbf{x}^T \mathbf{L}\delta$.
\end{proposition}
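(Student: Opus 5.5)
The plan is to treat the Rayleigh quotient as a smooth function of the signal and Taylor-expand it to first order in the perturbation $\delta$. Throughout, $\mathbf{L}$ is the real symmetric (positive semidefinite) Laplacian, so $R_{\mathcal{G}}(\mathbf{x}) = \mathbf{x}^T\mathbf{L}\mathbf{x}/\mathbf{x}^T\mathbf{x}$. I would first fix conventions in the way \citet{dong2023rayleigh} use them. Either $\mathbf{x}$ is normalized, $\lVert \mathbf{x}\rVert = 1$, and the perturbation is norm-preserving to first order ($\mathbf{x}^T\delta = 0$). Or one tracks the numerator $\mathbf{x}^T\mathbf{L}\mathbf{x}$, which is the quantity whose change the proposition states.

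\textbf{Step 1 (numerator).} Expand exactly, using the symmetry of $\mathbf{L}$:
\begin{equation*}
(\mathbf{x}+\delta)^T\mathbf{L}(\mathbf{x}+\delta) - \mathbf{x}^T\mathbf{L}\mathbf{x} = 2\mathbf{x}^T\mathbf{L}\delta + \delta^T\mathbf{L}\delta .
\end{equation*}
The remainder satisfies $|\delta^T\mathbf{L}\delta| \le \lVert \mathbf{L}\rVert_2\lVert\delta\rVert^2 = o(\delta)$. For the normalized Laplacian, $\lVert \mathbf{L}\rVert_2 \le 2$, so the constant is harmless.

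\textbf{Step 2 (denominator).} Write
\begin{equation*}
\lVert \mathbf{x}+\delta\rVert^{-2} = \lVert\mathbf{x}\rVert^{-2}\bigl(1 - 2\mathbf{x}^T\delta/\lVert\mathbf{x}\rVert^2 + O(\lVert\delta\rVert^2)\bigr).
\end{equation*}
Multiplying the two expansions gives the first-order change
\begin{equation*}
\nabla R_{\mathcal{G}}(\mathbf{x})^T\delta = \frac{2}{\lVert\mathbf{x}\rVert^2}\bigl(\mathbf{L}\mathbf{x} - R_{\mathcal{G}}(\mathbf{x})\,\mathbf{x}\bigr)^T\delta .
\end{equation*}
Under the normalization convention ($\lVert\mathbf{x}\rVert=1$, $\mathbf{x}^T\delta=0$), the second term vanishes. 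What remains is $2\mathbf{x}^T\mathbf{L}\delta + o(\delta)$, as claimed.

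\textbf{Step 3 (drop the remainder).} For $\delta$ small enough that the $o(\delta)$ remainder is dominated, the change is bounded by $2\mathbf{x}^T\mathbf{L}\delta$. I would state this via Cauchy--Schwarz as $|\Delta R| \le 2\lVert\mathbf{L}\mathbf{x}\rVert\,\lVert\delta\rVert$, so that the bound is meaningful as a magnitude.

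The main obstacle is the bookkeeping in Step 2. Without the normalization assumption, the first-order change contains the extra term $-2R_{\mathcal{G}}(\mathbf{x})\,\mathbf{x}^T\delta/\lVert\mathbf{x}\rVert^2$, so the bound exactly as stated holds only in the normalized, norm-preserving setting. I would therefore make the convention explicit. This setting is the relevant one here, since unitary layers and their Taylor truncations from \cref{sec:Taylor} perturb signals approximately along the sphere. In the general case I would present the gradient form above instead.
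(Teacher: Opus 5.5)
The paper gives no proof of its own here; it imports Theorem~2 of \citet{dong2023rayleigh}, so your argument has to stand on its own. Steps~1 and~2 do. The expansion and the gradient $\frac{2}{\lVert\mathbf{x}\rVert^2}(\mathbf{L}\mathbf{x}-R_{\mathcal{G}}(\mathbf{x})\mathbf{x})$ are correct. You are also right that the first sentence holds only under a convention such as $\lVert\mathbf{x}\rVert=1$, $\mathbf{x}^T\delta=0$, or for the numerator alone. The true linear term differs from $2\mathbf{x}^T\mathbf{L}\delta$ by the factor $\lVert\mathbf{x}\rVert^{-2}$ and by the radial term $-2R_{\mathcal{G}}(\mathbf{x})\mathbf{x}^T\delta/\lVert\mathbf{x}\rVert^2$, and neither discrepancy is higher order.

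The gap is Step~3. Dropping $o(\delta)$ gives a first-order approximation, not an inequality. No smallness condition on $\lVert\delta\rVert$ makes the linear term dominate, because it vanishes for every $\delta\perp\mathbf{L}\mathbf{x}$.

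\emph{Counterexample.} On a connected graph let $\mathbf{x}=\mathbf{D}^{1/2}\mathbf{1}/\lVert\mathbf{D}^{1/2}\mathbf{1}\rVert$, so $\mathbf{L}\mathbf{x}=0$ and $R_{\mathcal{G}}(\mathbf{x})=0$. Take any nonzero $\delta\perp\mathbf{x}$, which satisfies your normalization. The change is $\delta^T\mathbf{L}\delta/(1+\lVert\delta\rVert^2)>0$. Yet $2\mathbf{x}^T\mathbf{L}\delta$ and your Cauchy--Schwarz bound $2\lVert\mathbf{L}\mathbf{x}\rVert\lVert\delta\rVert$ are both $0$, however small $\delta$ is. In the numerator reading the bound is even reversed: the change is $2\mathbf{x}^T\mathbf{L}\delta+\delta^T\mathbf{L}\delta\ge 2\mathbf{x}^T\mathbf{L}\delta$ because $\mathbf{L}\succeq 0$.

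\emph{Repair.} Keep the remainder explicitly. Under your normalization,
\[
R_{\mathcal{G}}(\mathbf{x}+\delta)-R_{\mathcal{G}}(\mathbf{x})=\frac{2\mathbf{x}^T\mathbf{L}\delta+\delta^T\mathbf{L}\delta-R_{\mathcal{G}}(\mathbf{x})\lVert\delta\rVert^2}{1+\lVert\delta\rVert^2},
\]
so $|\Delta R|\le 2\lVert\mathbf{L}\mathbf{x}\rVert\lVert\delta\rVert+\lambda_{\max}(\mathbf{L})\lVert\delta\rVert^2\le 2\lVert\mathbf{L}\mathbf{x}\rVert\lVert\delta\rVert+2\lVert\delta\rVert^2$. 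This is the rigorous content of the second sentence.

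Separately, your remark that the normalized setting is the relevant one for Taylor truncations does not hold. For a real skew-symmetric generator $\mathbf{M}$, $\lVert\mathbf{x}+\mathbf{M}\mathbf{x}\rVert^2=\lVert\mathbf{x}\rVert^2+\lVert\mathbf{M}\mathbf{x}\rVert^2$. Hence the $\mathbf{T}_{\rm max}=1$ truncation error $\delta=(\mathbf{I}+\mathbf{M}-e^{\mathbf{M}})\mathbf{x}$ has radial component $\langle e^{\mathbf{M}}\mathbf{x},\delta\rangle=\tfrac12(\lVert\mathbf{M}\mathbf{x}\rVert^2-\lVert\delta\rVert^2)$. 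That is second order in $\mathbf{M}$, just like $\lVert\delta\rVert$ itself, so in that application you need your general gradient form.
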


The results from \citet{dong2023rayleigh} state fewer hypotheses than \citet{ferrandi2022homogeneous}.  \cref{prop:dong_1} outlines a bound similar to \cref{prop:sensitivity_f} in that they are both related to the norm of the perturbing vector. \cref{prop:dong_2} states an alternative bound related to a perturbation on the input node features instead of the graph's Laplacian. Importantly, one can estimate the truncation error in a unitary convolution layer using Taylor's theorem and substitute this value for $\delta$ in \cref{prop:dong_2}. Comparing the deviation in the Rayleigh quotient with the expected energy dissipation of the PDE gives a model selection criterion for choosing $\mathbf{T}_{\rm max}$.

\clearpage

\section{Simulated Heat Diffusion Further Details and Results}

\label{app:heat_diffusion_sim}

\subsection{Simulated Heat Diffusion Dataset} \label{sec:dataset_details}

This section details dataset generation specifications for our experiment in \cref{sec:motivate}. We generate grid-graphs with an average of $10$ nodes and a standard deviation of $2$ nodes. On the grid we randomly set $20$ nodes to be heat sources. They are given a heat value of $1$ and all other nodes start at $0$. Using \texttt{PyGSP}, We simulate heat flow on $10,000$ graphs for training, and the task is to predict the next time step given the previous one. The simulation proceeds until time $T=10$ in increments of $\Delta T = 0.5$ time steps.  A sample graph data point is given in \cref{fig:3x3grid}.

\begin{figure}[!htb]
  \centering
    \includegraphics[width=0.2\linewidth]{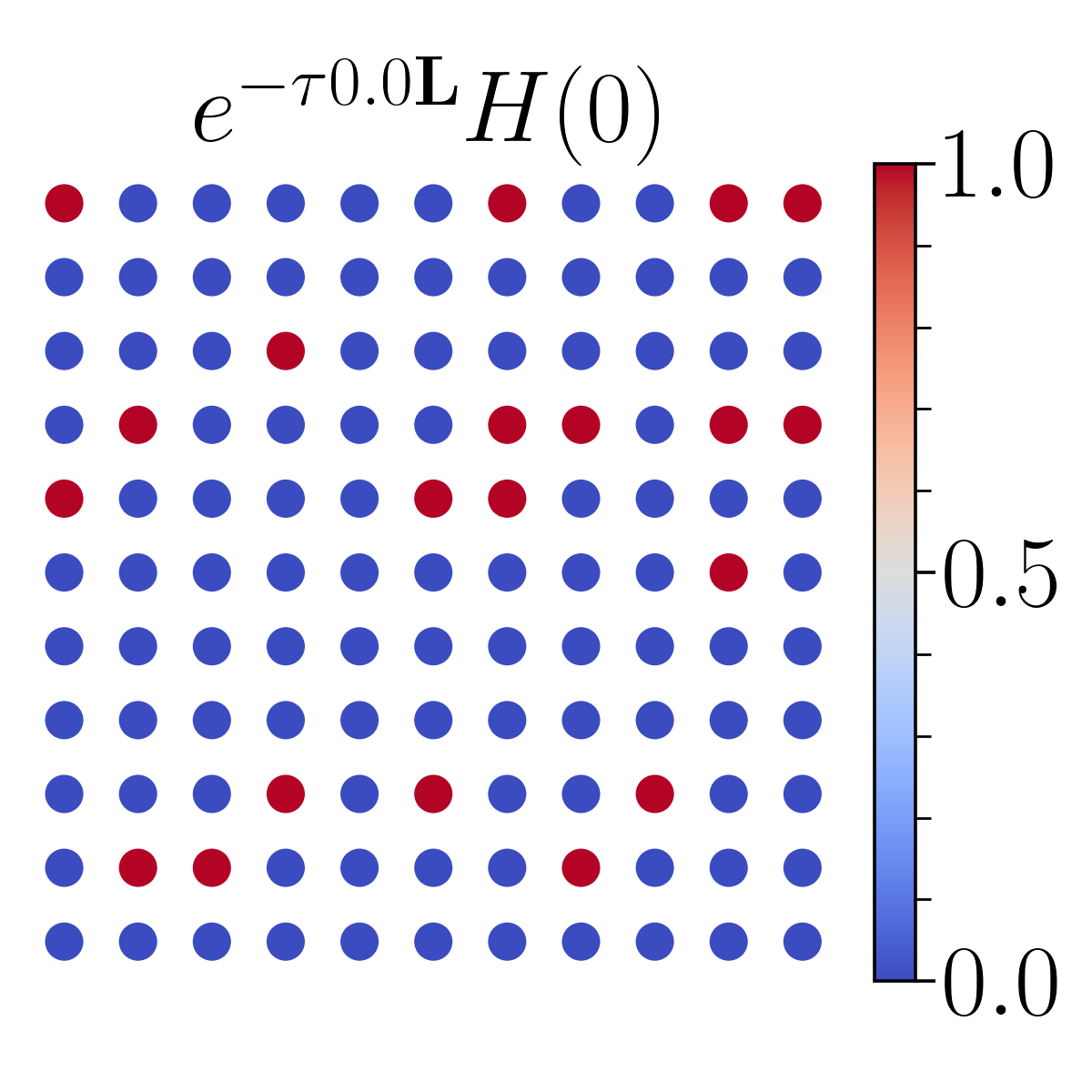}
    \label{fig:sub1}
    \includegraphics[width=0.2\linewidth]{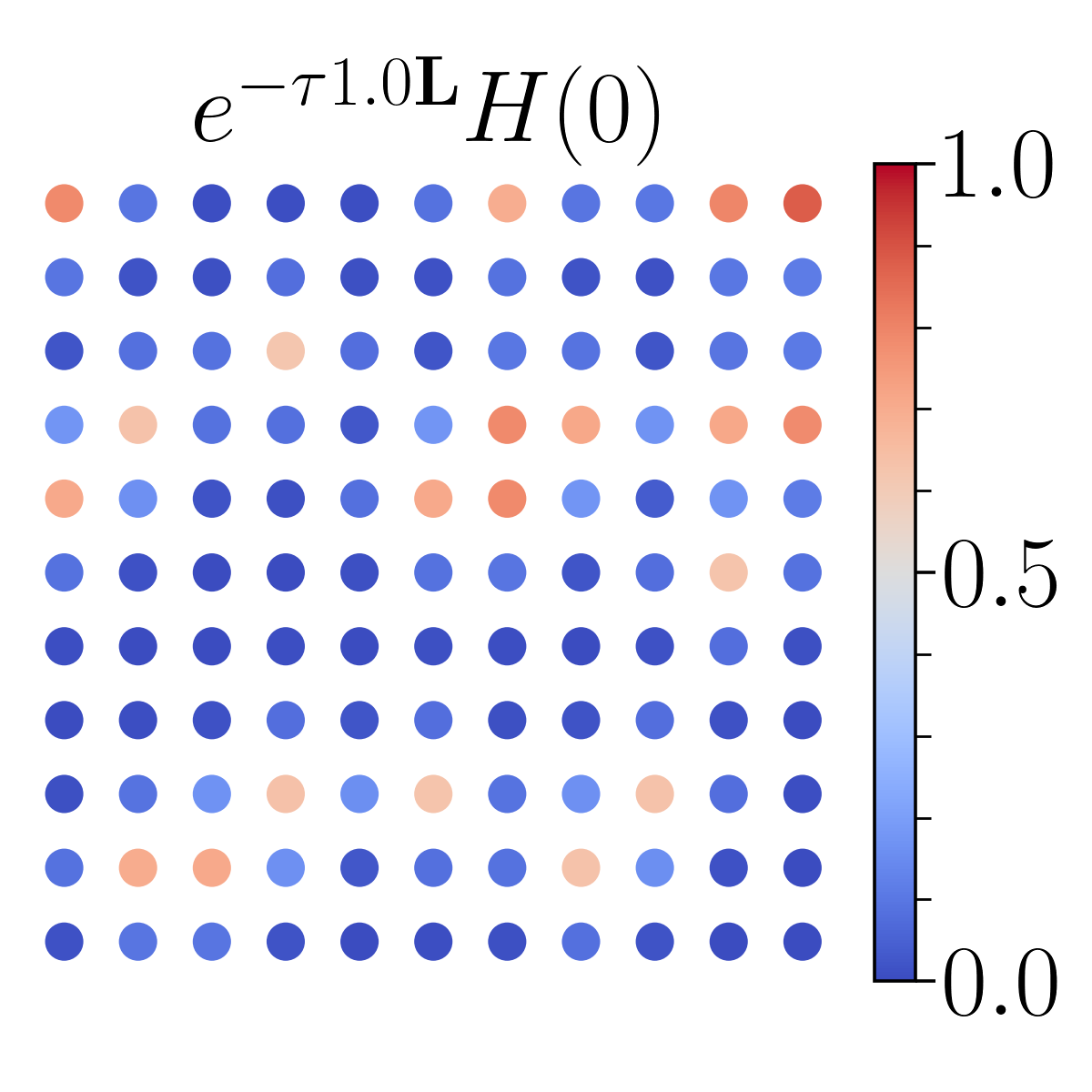}
    \includegraphics[width=0.2\linewidth]{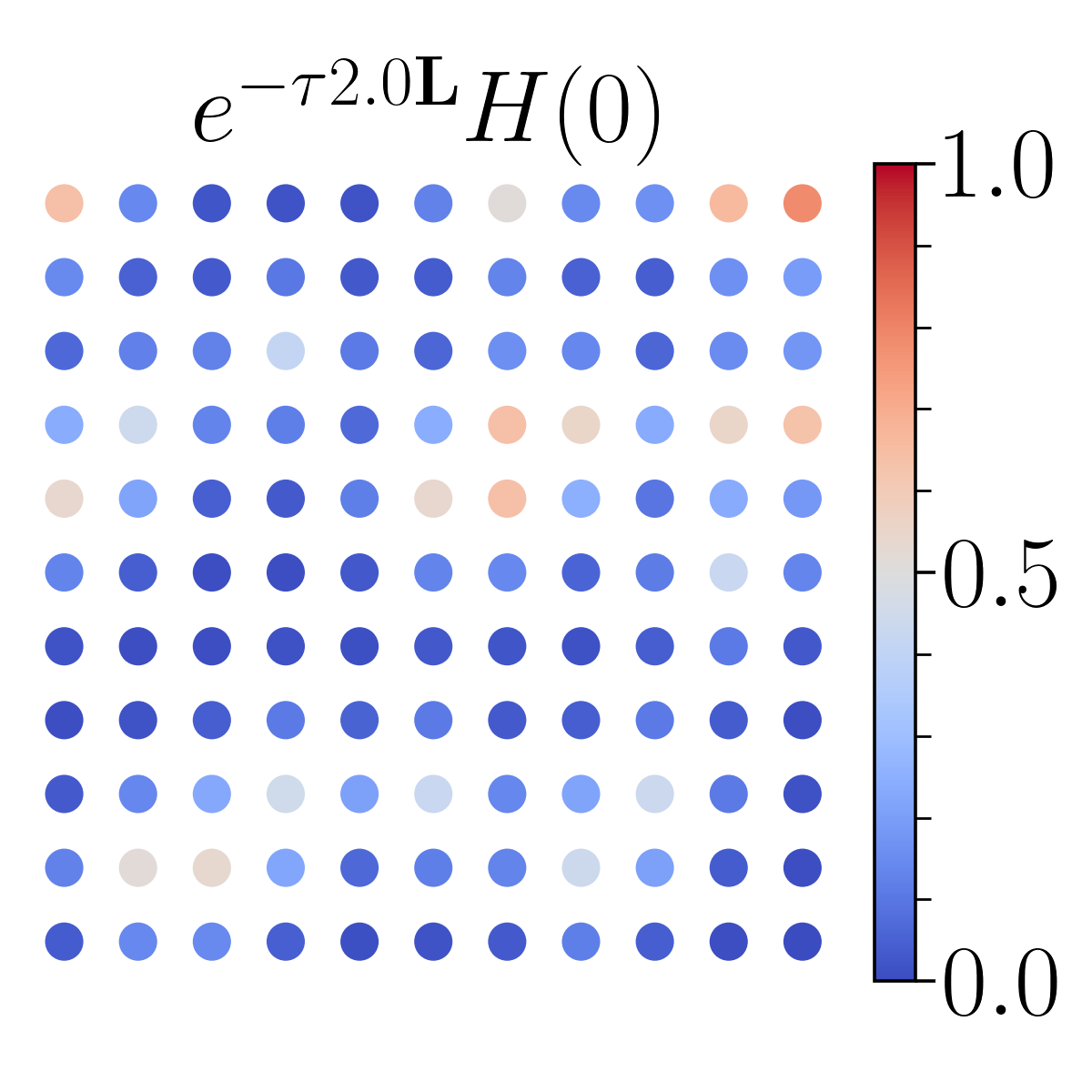}
    \includegraphics[width=0.2\linewidth]{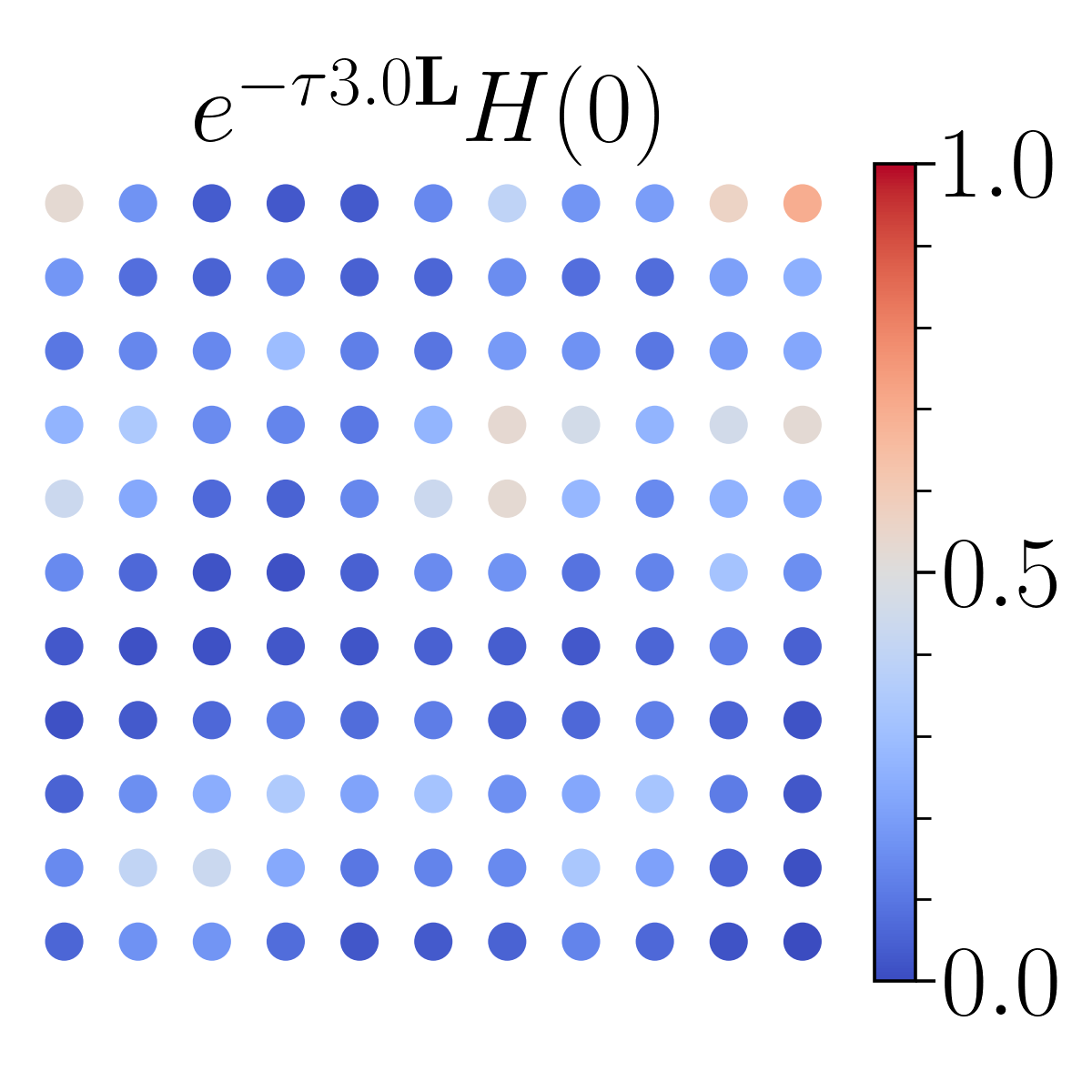}
  \caption{Sample heat diffusion process on a grid discretized as a graph. Node neighbors are the nodes that sit adjacent in the grid.}
  \label{fig:3x3grid}
\end{figure}

\subsection{Taylor Series Sensitivity Analysis} \label{sec:sensitivity_analysis_taylor}

We conduct a sensitivity analysis of \oursGraph{} to different Taylor series truncations. For completeness, we also compare with standard GCNs and Separable unitary networks. First, we study these tendencies at initialization for the heat diffusion dataset that is used for the experiment in \cref{sec:motivate}, described further in \cref{sec:dataset_details}. Our analysis echos a theme similar to \citet{gruver2024liederivativemeasuringlearned} and \citet{gao2025discretization} that practitioners should be more thorough in evaluating when numerical approximations break strict theoretical guarantees. Secondly, we study the impact of different truncation terms on downstream performance.

\begin{figure}[!htb]
    \centering
    \href{https://anonymous.4open.science/r/rayleigh_analysis-BD52/assets/LieUni_timelapse_loop.gif}{
    \includegraphics[width=0.5\linewidth]{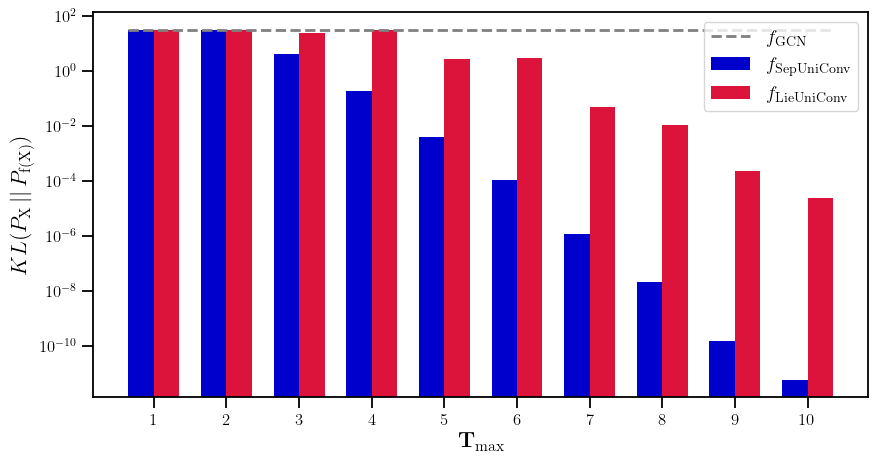}
    }
    \caption{KL divergence between distribution of Rayleigh quotients before and after applying the model. Results are averaged over 10 runs.}
    \label{fig:truncation}
\end{figure}

\paragraph{Experimental Setup.} We simulate heat diffusion on a grid graph and use time step $3$ to conduct the sensitivity analysis. We evaluate on the models $f_{\rm GCN}$, $f_{\rm Sep Uni Conv}$, and $f_{\rm Lie Uni Conv}$. For each model $f$ and truncation length $\mathbf{T_{\rm max}} \in \{1, \dots, 10\}$, we compute the Rayleigh quotients $R_\mathcal{G}(\mathbf{X})$ and $R_\mathcal{G}(f(\mathbf{X}))$ for all graph mini batches $\mathbf{X}$. We denote the distribution of Rayleigh quotients before applying the model by $\PX$ and after applying the model by $\PfX$. To quantify the deviation between these distributions, we compute the KL divergence $\kl{\PX}{\PfX}$, which measures the change in the distribution of Rayleigh quotients caused by the model at initialization. We then examine the impact of these truncations on downstream performance in terms of MSE and Rayleigh error for \oursGraph{}.

\paragraph{Results.} We see in \cref{fig:truncation} the effect of Taylor series truncation on the unitarity of the network. In particular, we observe that the KL divergence between the two distributions decreases exponentially with the number of terms. This is to be expected, we know from Taylor's theorem that a truncation at term $t$ gives truncation error $\mathcal{O}\left(\frac{\left[\lVert \mathbf{AXW}\rVert_{\mathbf{O}}\right]^{t+1} \lVert\mathbf{X}\rVert_2}{(t+1)!}\right)$ where $\mathbf{\lVert \cdot \rVert}_{\mathbf{O}}$ is the operator norm. Furthermore, works such as \citet{ferrandi2022homogeneous} and \citet{dong2023rayleigh} show theoretically that small truncation errors will not compound into large deviations in the Rayleigh quotient. For details on the relevant propositions from \citet{ferrandi2022homogeneous} and \citet{dong2023rayleigh}, see \cref{sec:sensitivity_more_details}. 

In our \href{https://github.com/EdwardBerman/rayleigh_analysis/tree/main/sup_mat}{GitHub repository} we include a video that shows the evolving Rayleigh quotient distribution as we increase $\mathbf{T_{\rm max}}$. \cref{fig:truncation} is also clickable and links to the same video in our anonymous artifact.

We quantify the effect of these truncations on downstream performance in \cref{fig:tmax_abl}. We see that $\mathbf{T}_{\rm max} = 3$ achieves the best balance of convergence under both MSE loss and MRE. Moreover, we see that \oursGraph{} is not overly sensitive to different choices of $\mathbf{T}_{\rm max}$ in terms of MSE. Interestingly, the $\mathbf{T}_{\rm max} = 1$ model still performs well in terms of MSE despite oversmoothing the target signal. The $\mathbf{T}_{\rm max} = 1$ model can be thought of as a GCN with a hermitian projection of the weights. While the smoothness behavior is similar to what we see for the unconstrained GCN (cf. \cref{fig:loss_and_rq_ensemble}), the convergence under MSE loss is not. This suggests that there exists a local minimum where the predicted node features are mostly constant that is close to the true heat field. Moreover, it raises the possibility that the benefits of approximately unitary networks lie not only in smoothness preservation but in gradient stability properties. For example, unitary convolutions are known to be perfectly dynamically isometric and $1$-Lipschitz \cite{xiao2018dynamical, kiani2024unitary}. Networks with these properties tend to avoid vanishing or exploding gradients.

\begin{figure}[!htb]
    \centering
    \includegraphics[width=1\linewidth]{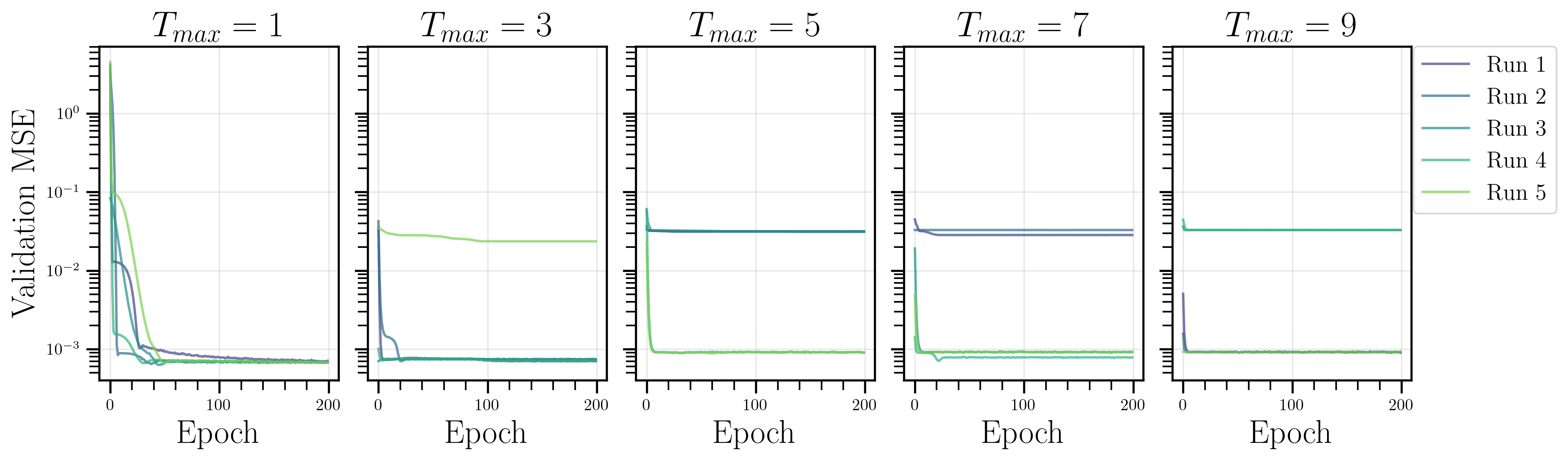}
    \includegraphics[width=1\linewidth]{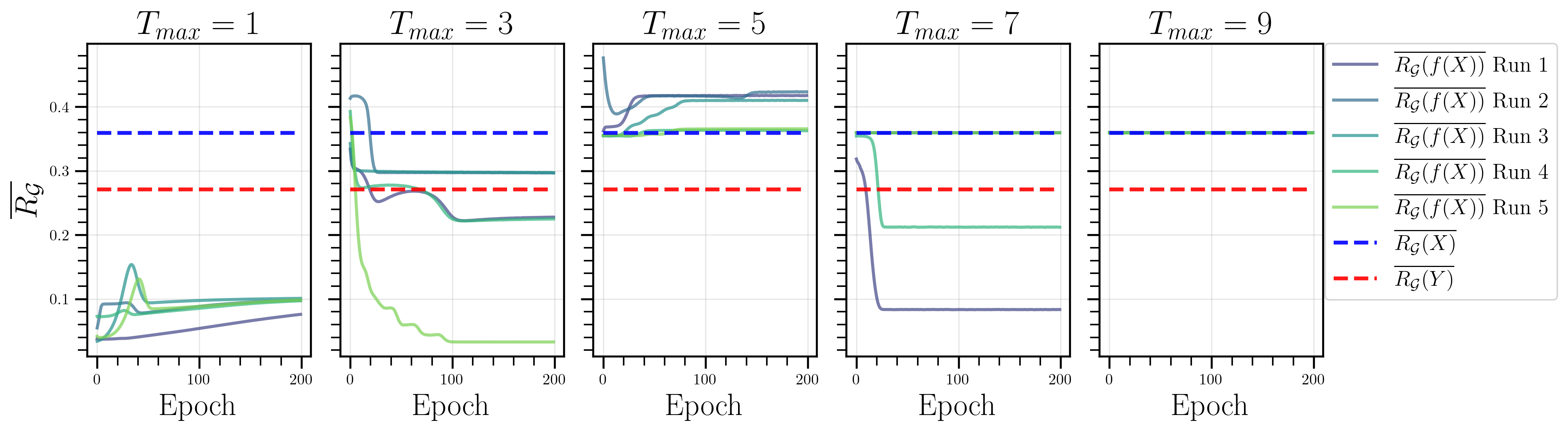}
    \caption{Validation MSE and mean Rayleigh quotient for $5$ different Lie Unitary Convolution runs. MSE tends to increase as $\mathbf{T}_{\rm max}$ increases. $\mathbf{T}_{\rm max}=3$ offers the best balance between accuracy under MSE loss and smoothness errors.}
    \label{fig:tmax_abl}
\end{figure}

\subsection{Heat Diffusion Model Ensembles} \label{sec:ensemble}

This section provides the results for \cref{sec:motivate} for a larger ensemble of models. In \cref{fig:loss_and_rq_ensemble}, we see that the behavior in the training runs in \cref{fig:loss_and_rq} occur frequently, with only a couple runs diverging.

\begin{figure}[!htb]
    \centering
    \includegraphics[width=0.75\linewidth]{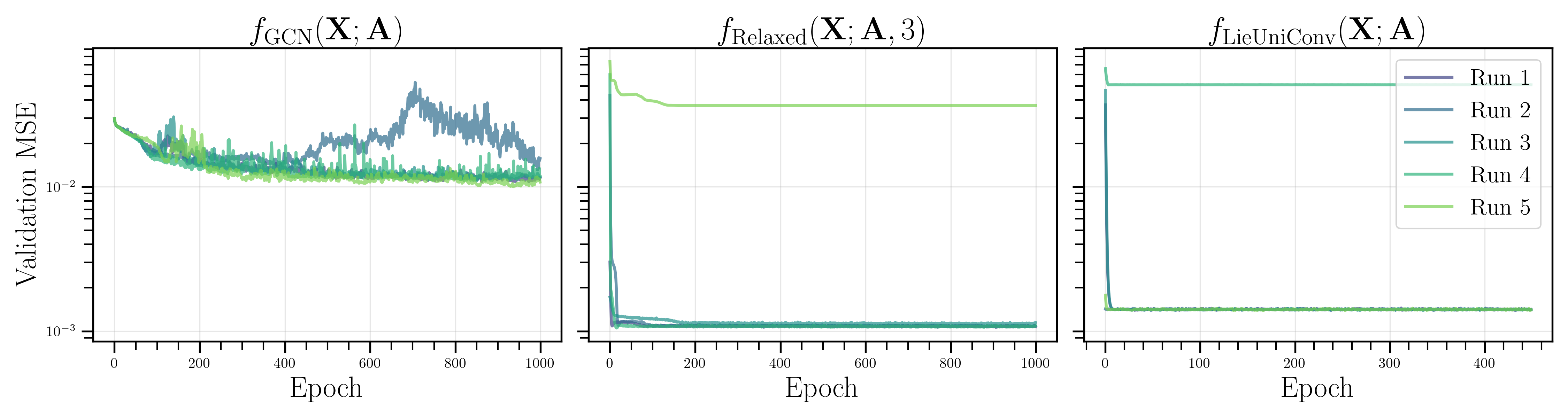}
    \includegraphics[width=0.75\linewidth]{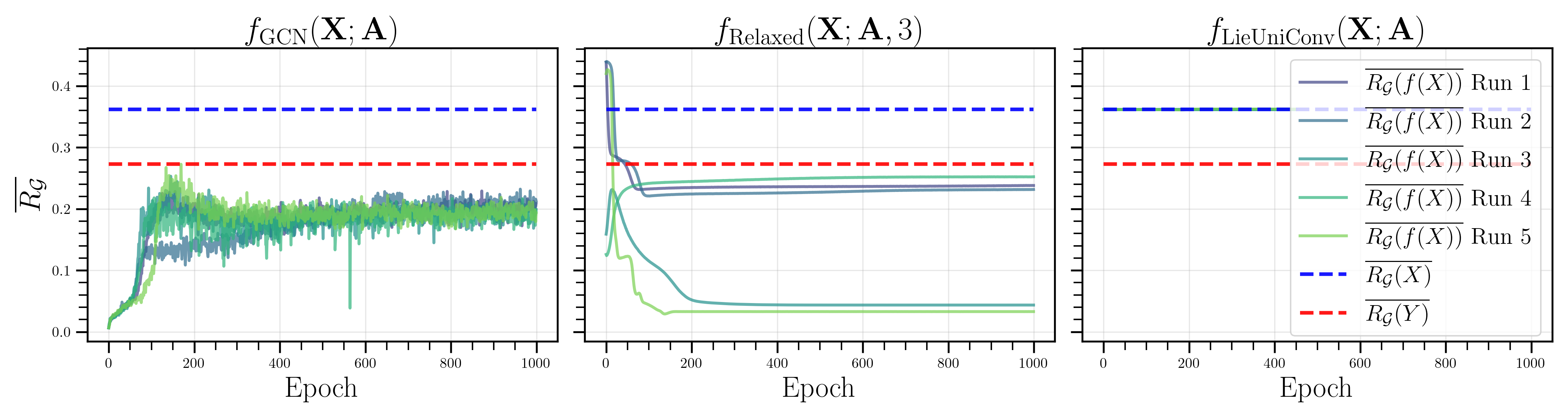}
    \caption{\textbf{Top:} Validation MSE for an ensemble of $5$ runs for a GCN (left), \oursGraph{} (middle), and a Lie unitary convolution network (right) at timestep $t=3$. \oursGraph{} significantly outperforms the GCN and also outperforms the Lie unitary network. \textbf{Bottom:} The average Rayleigh quotient over all graphs for an ensemble of $5$ runs for the same models at timestep $t=3$. The GCN is under constrained and biased towards oversmoothing at initialization. \oursGraph{} is able to roughly match the true smoothness of the labeled graphs. The Lie unitary network is overconstrained and can not model the Rayleigh quotient of the labels because it is forced to preserve the Rayleigh quotient of the input graphs.}
    \label{fig:loss_and_rq_ensemble}
\end{figure}

\clearpage 

\section{MeshPDE Further Details and Results} 

\label{app:MeshPDE_app}

This section provides extra experimental details and results for our dynamical systems modeling on \pv{} meshes in MeshPDE.

\subsection{MeshPDE Dynamical Systems} \label{sec:pdes}

This section provides details on the PDEs to be solved on the \pv{} meshes in MeshPDE. Before defining the PDEs to be solved, let us establish notation. Let $\alpha$ be the thermal diffusivity, and $c$ a constant. The heat and wave equations on the mesh are then given by
\begin{align*}
    \frac{\partial u}{\partial t}= \alpha \mathbf{\tilde{L}}u, \quad \text{(Heat)} &&  \frac{\partial^2 u}{\partial t^2}= c^2 \mathbf{\tilde{L}}u, \quad \text{(Wave)}. \label{eq:dynamics}
\end{align*}
We now define the Cahn-Hilliard equation \citep{cahn1958free}. Let $c$ be the fluid concentration, $M$ the diffusion coefficient, $\mu$ the chemical potential, $f$ the double-free energy function, and $\lambda$ a positive constant. The Cahn-Hilliard equation is often represented by the following two coupled second order equations:
\begin{equation*}
  \frac{\partial c}{\partial t} - M\mathbf{\tilde{L}}(\mu) = 0 ,\quad   \mu - \frac{\partial f}{\partial c} + \lambda \mathbf{\tilde{L}}(c) = 0. \label{eq:ch} 
\end{equation*}
Here, the double-free energy function is given by $f(c) = 100c^2(1-c^2)$. Sample initial conditions for each equation on the \pv{} meshes are shown in \cref{tab:sample} and \cref{tab:ch_sample} in \cref{sec:SampleInit}.

\subsection{MeshPDE Training Details} \label{sec:gegnn}

We extend the publicly available code base from \citet{park2023modelingdynamicsmeshesgauge} to train our baselines for the \pv{} and WeatherBench$2$ datasets: \url{https://github.com/jypark0/hermes/}. For GemCNN, EMAN, and Hermes, we use the already available pretrained model checkpoints. For GCN, \ours, Mesh Transformer, MPNN, and EGNN we train our own models.  We use the same train and test splits for the meshes as in \citet{park2023modelingdynamicsmeshesgauge}. Crucially, test meshes are completely held out during training.  We performed ablations over learning rate and latent space sizes. Following \citet{park2023modelingdynamicsmeshesgauge}  we keep models within a $\sim 40,000 - 50,000$ parameter budget. We note that this budget is relatively small, and that models that diverge in our experiments could potentially perform better under a more forgiving budget. All runs were performed on a single H$200$ GPU \cite{nvidia2025h200}. We use the previous $5$ time steps as input node feature vectors and backpropagate through $3$ steps of auto-regressive inference.

Hyperparameters are given in our configuration files on \href{https://github.com/EdwardBerman/rayleigh_analysis}{GitHub}. Defaults are taken from \citet{park2023modelingdynamicsmeshesgauge} if provided and otherwise optimized via grid search. Considered hyperparameters include learning rate, optimizer, training epochs, latent size, and skip connections. We also consider z-scoring of normed edge lengths for EGNN, different smoothness-breaking heads for \ours, and the number of clusters for the mesh transformer. 

As observed in \citet{park2023modelingdynamicsmeshesgauge}, we notice that residual connections can be key for performance with the gauge equivariant models. For \ours, we found that using a MLP readout with sinusoidal activation functions was a key ingredient for strong performance on MeshPDE. This supports previous work on how to train GNNs for long range tasks \citep{tonshoffdid, tonshoffdid_tmlr}. However, the GCN head exhibited the best performance on WB$2$.

\subsection{MeshPDE Evaluation Details} \label{sec:eval_details}

In order to aggregate smoothness errors over all time steps, we introduce a new metric. Define the Rayleigh Error (RE) by $\int_0^\infty | R_\mathcal{M}(\mathbf{Y_t}) - R_\mathcal{M}(f(\mathbf{X_t})) |dt$. In practice we approximate this by summing over the time steps where we are able to perform inference and normalize to the max timestep: 
\begin{equation*}
\text{RE}(f) = \frac{1}{\mathbf{T_{\rm max}}}\sum_{t} ^{\mathbf{T_{\rm max}}}| R_\mathcal{M}(\mathbf{Y_t}) - R_\mathcal{M}(f(\mathbf{X_t})) |.   
\end{equation*}
Following \citet{janny2023eagle} and \citet{pandya2025iaemu}, we also consider the scale invariant metrics NRMSE and SMAPE averaged over the entire rollout:
\begin{align*}
 \text{NRMSE}(f) &= \frac{1}{\mathbf{T_{\rm max}}}\sum_{t}^{\mathbf{T_{\rm max}}} \sqrt{\frac{\frac{1}{n} \sum_{i=1}^n\left(f(\mathbf{X_t})_i - (\mathbf{Y_t})_i\right)^2}{\frac{1}{n}\sum_{i=1}^n (\mathbf{Y_t})_i^2}}\\
 \text{SMAPE}(f) &= \frac{1}{\mathbf{T_{\rm max}}}\sum_{t}^{\mathbf{T_{\rm max}}} \frac{1}{n}\overset{n}{\underset{i=1}{\sum}}\frac{2|(\mathbf{Y_t})_i - f(\mathbf{X_t})_i|}{|(\mathbf{Y_t})_i| + |f(\mathbf{X_t})_i| + \varepsilon}
\end{align*}
where $\varepsilon = 10^{-8}$ is a stability constant. SMAPE is generally more robust than NMRSE in that it is less sensitive to outliers, but it is also more sensitive to small values. The scale invariant property of these metrics is crucial especially for heat diffusivity because solutions tend to decrease proportionally to $e^{-t}$. Thus, we need to consider deviations across several orders of magnitude in order to see how accurately we are modeling the decay.

\subsection{MeshPDE Qualitative Diagnostics} \label{sec:diagnostics}

In this section, we validate the superior performance of \ours{} on solving the heat equation with qualitative diagnostics. In \cref{tab:diagnostic} we show that \ours{} is the best at capturing the true smoothness of an unseen mesh during each step of the rollout. In our \href{https://github.com/EdwardBerman/rayleigh_analysis/tree/main/sup_mat}{GitHub repository} we include a video corresponding to the rollout in \cref{tab:diagnostic} over all timesteps.

\begin{table*}[!htb]
    \centering
    \begin{tabular}{ccccc}
    Time  & Truth & \ours{} \textbf{(Ours)} & EMAN & Hermes \\ \toprule 
$10$ &       \includegraphics[width=0.2\textwidth]{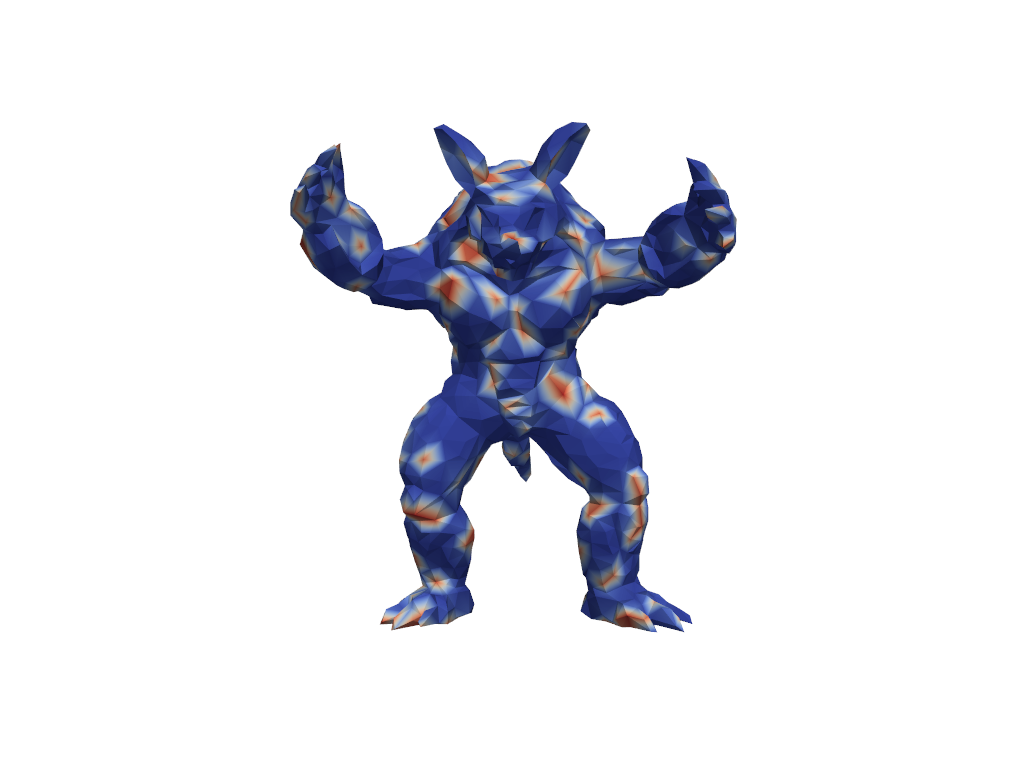}  &  \includegraphics[width=0.2\textwidth]{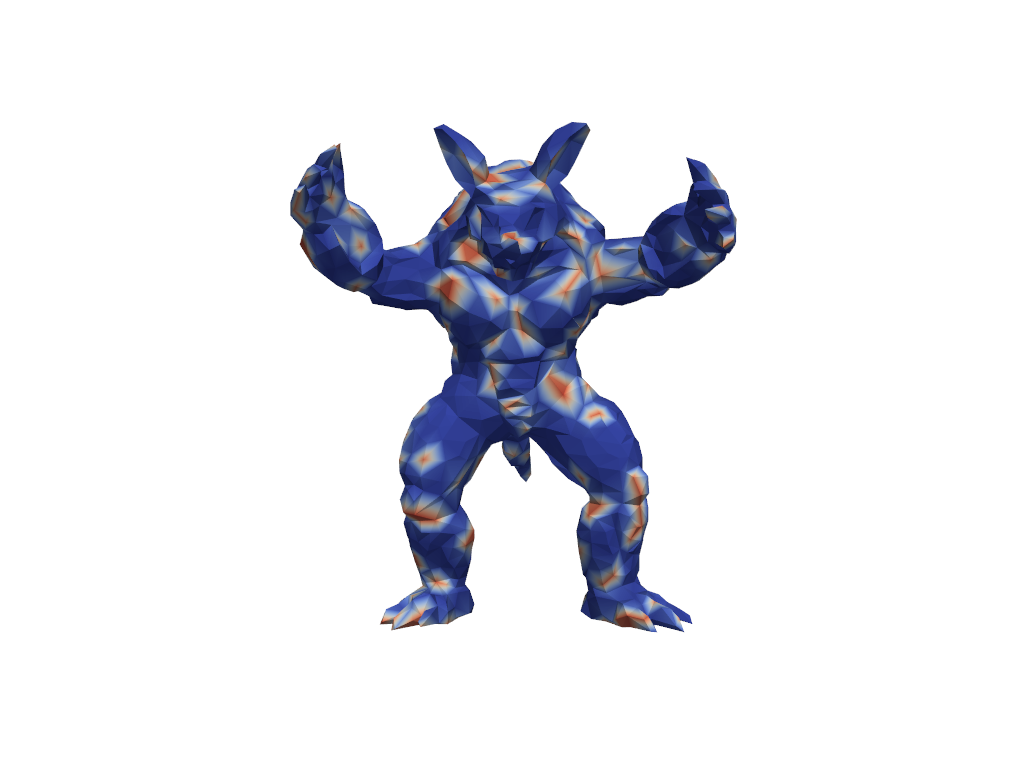}  & \includegraphics[width=0.2\textwidth]{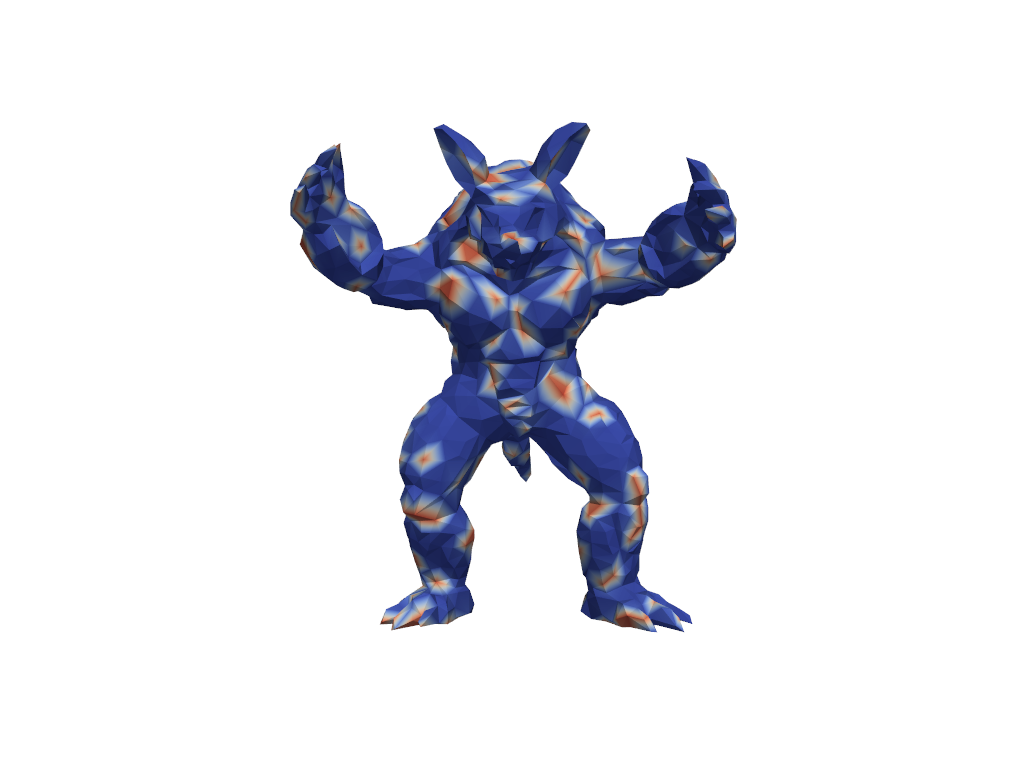} &  \includegraphics[width=0.2\textwidth]{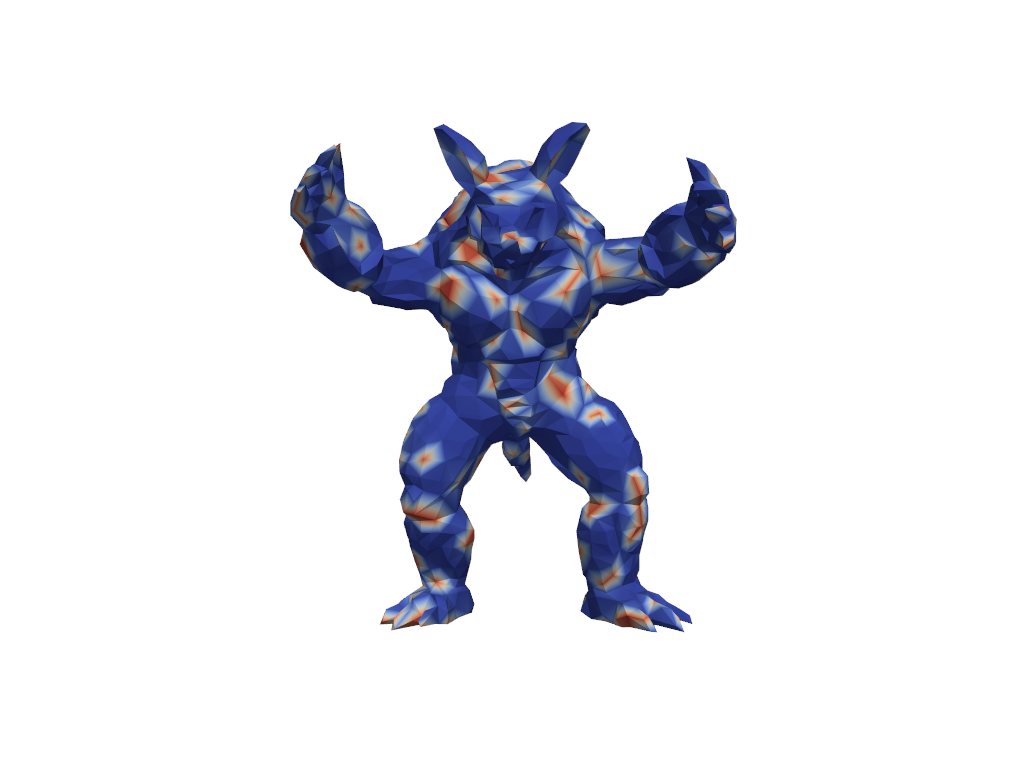} \\ \midrule
 $50$ &     \includegraphics[width=0.2\textwidth]{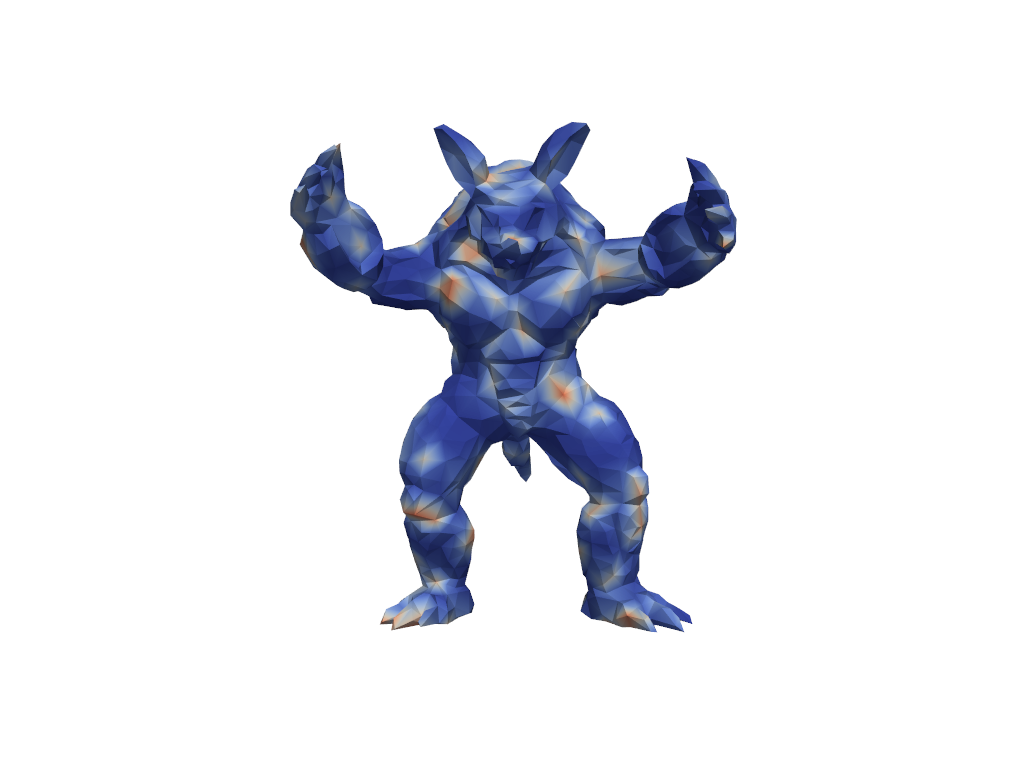}  &  \includegraphics[width=0.2\textwidth]{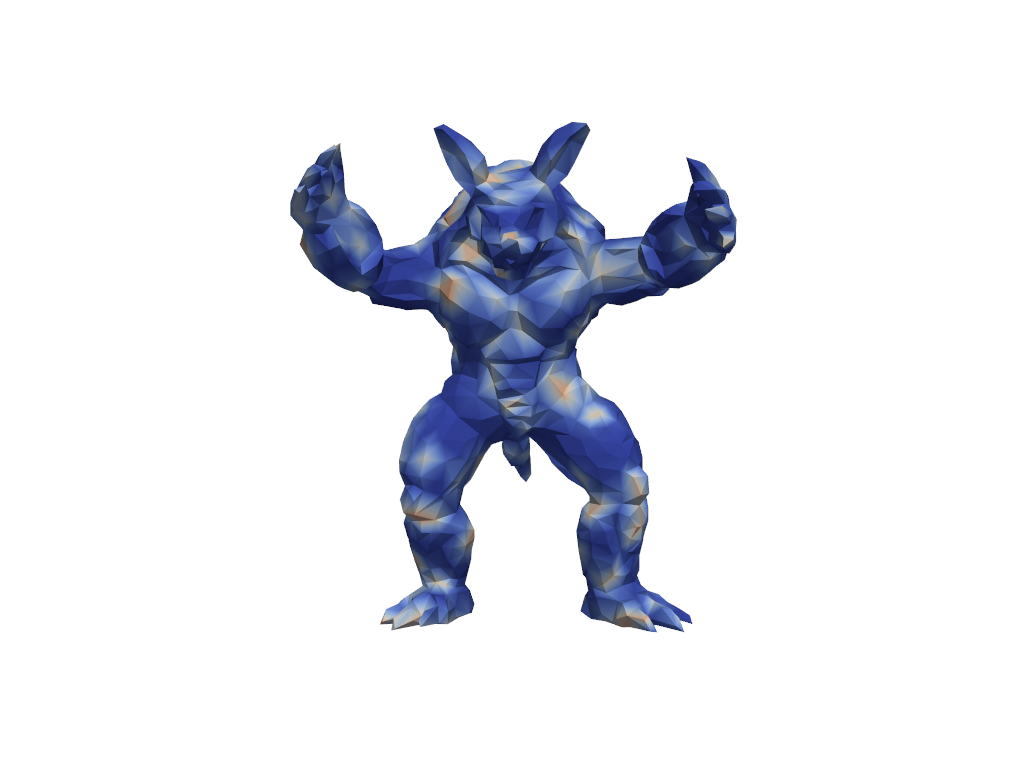}  & \includegraphics[width=0.2\textwidth]{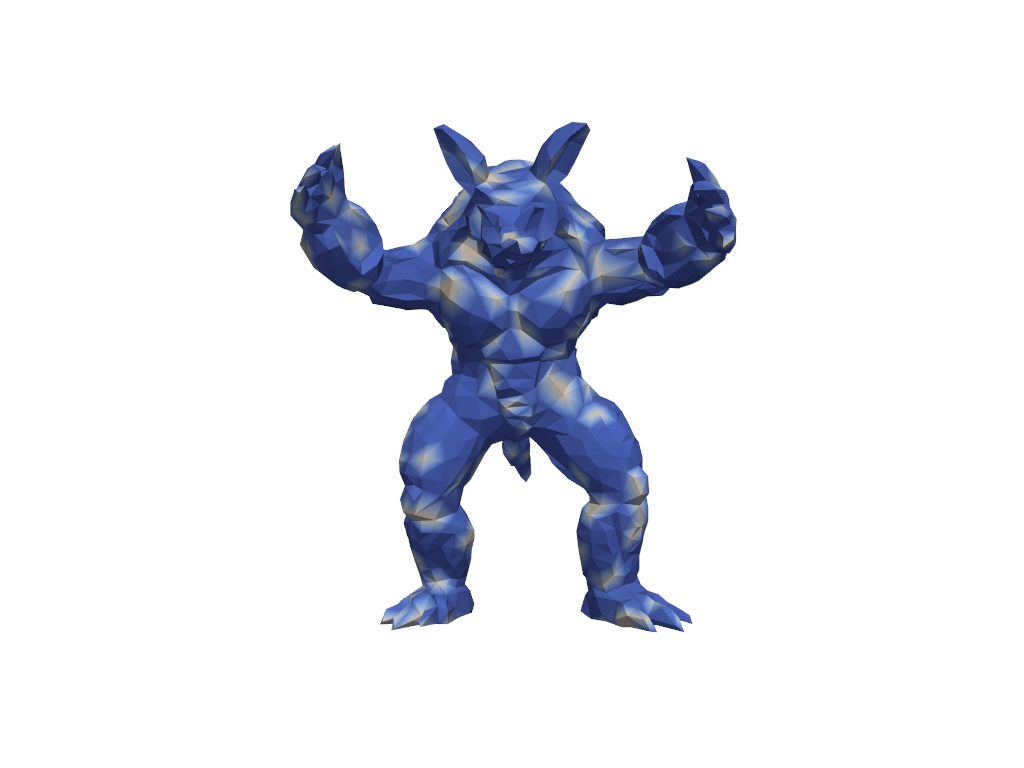} & \includegraphics[width=0.2\textwidth]{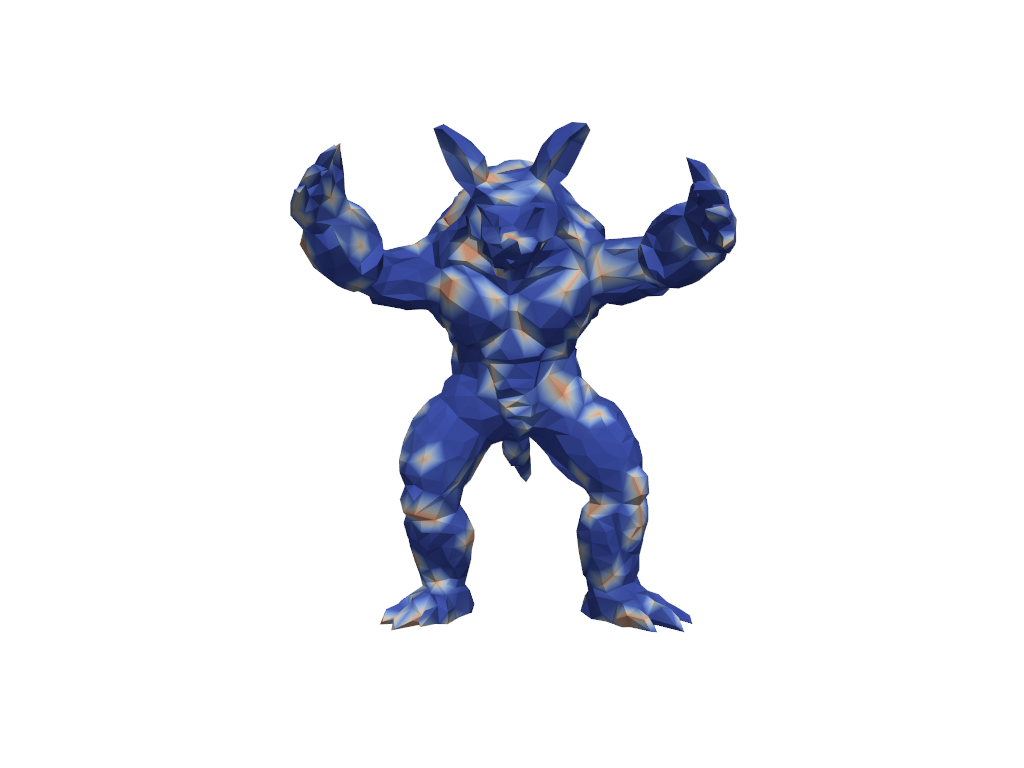} \\ \midrule
 $100$ &     \includegraphics[width=0.2\textwidth]{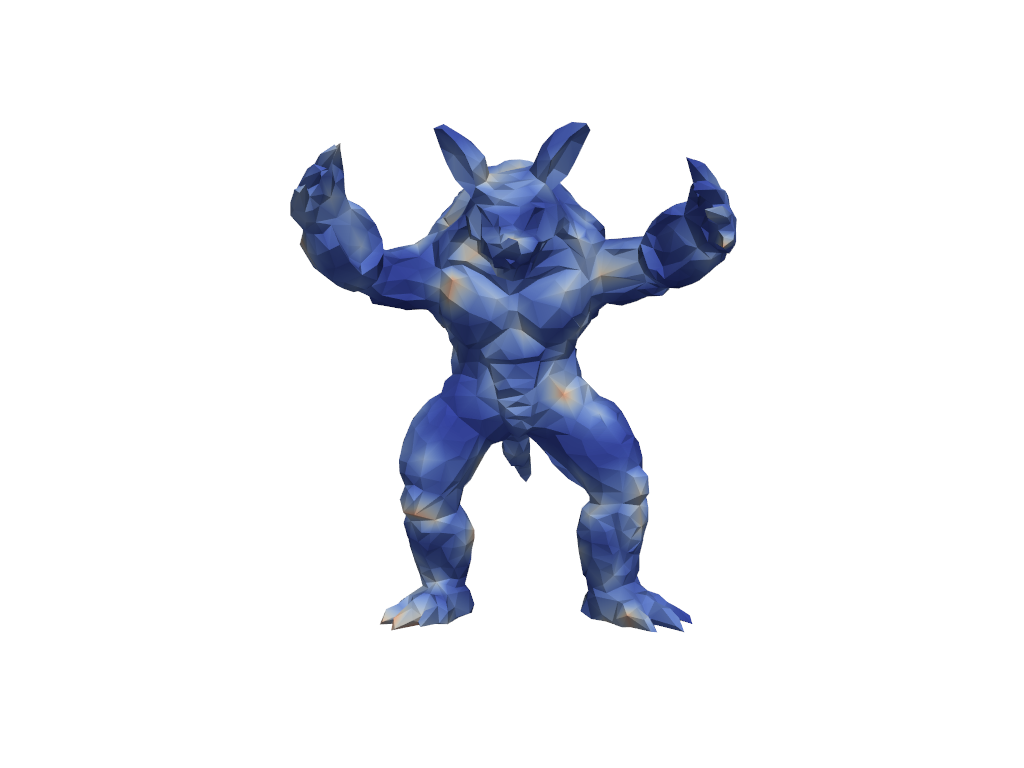}  &  \includegraphics[width=0.2\textwidth]{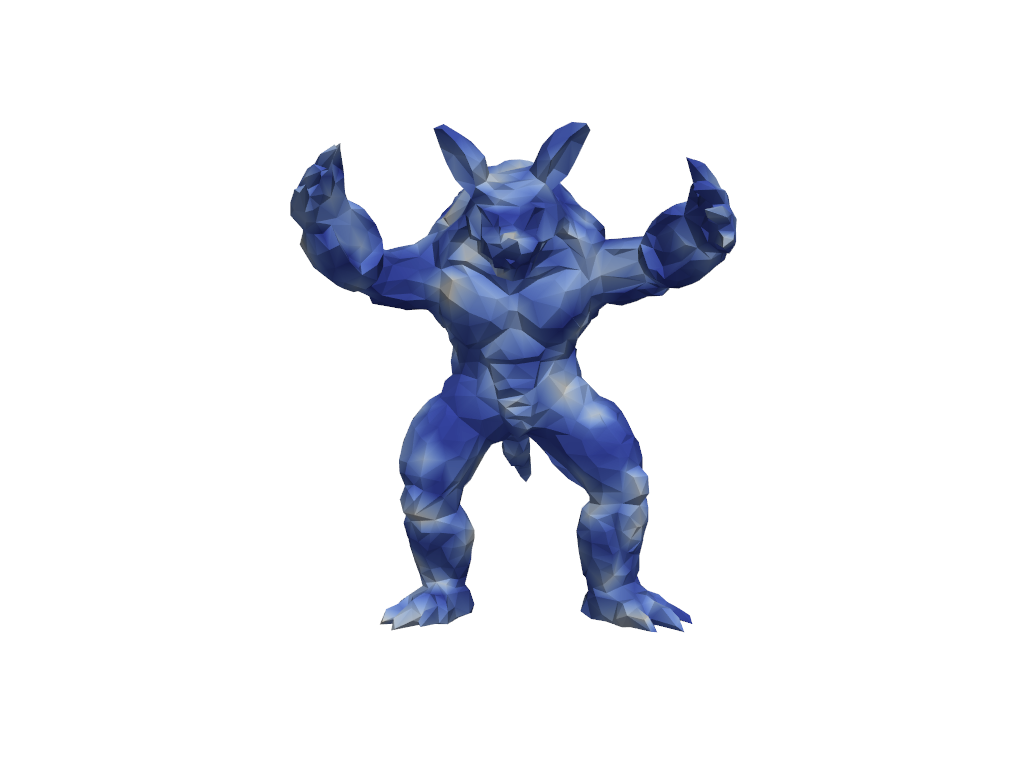}  & \includegraphics[width=0.2\textwidth]{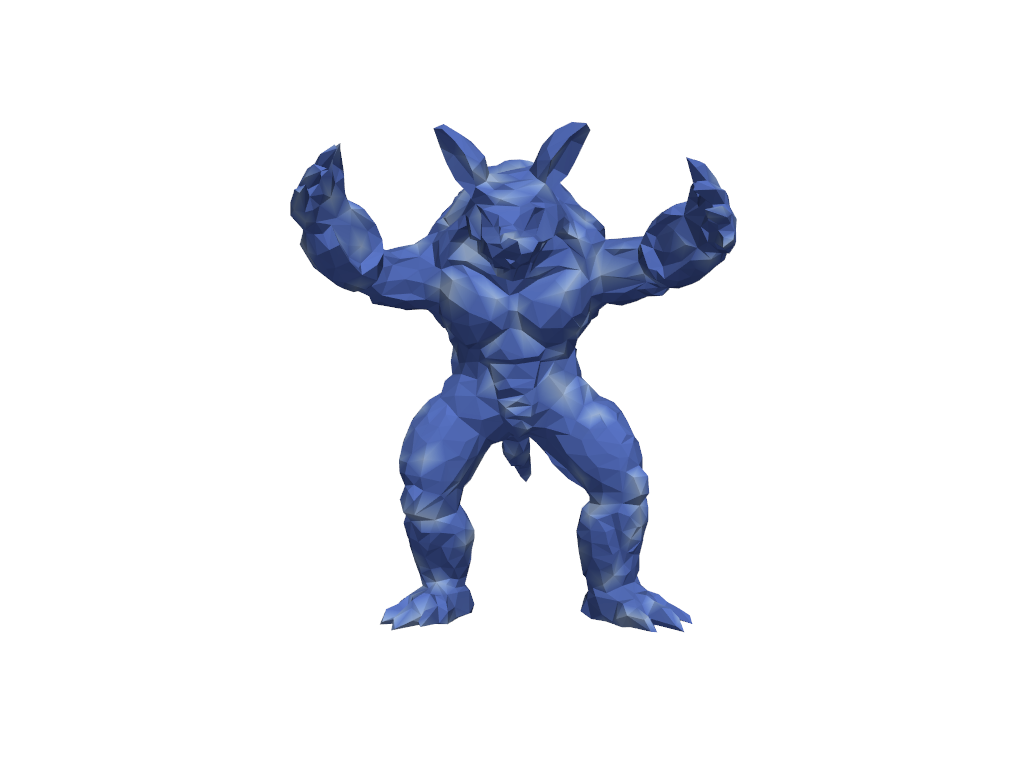} & \includegraphics[width=0.2\textwidth]{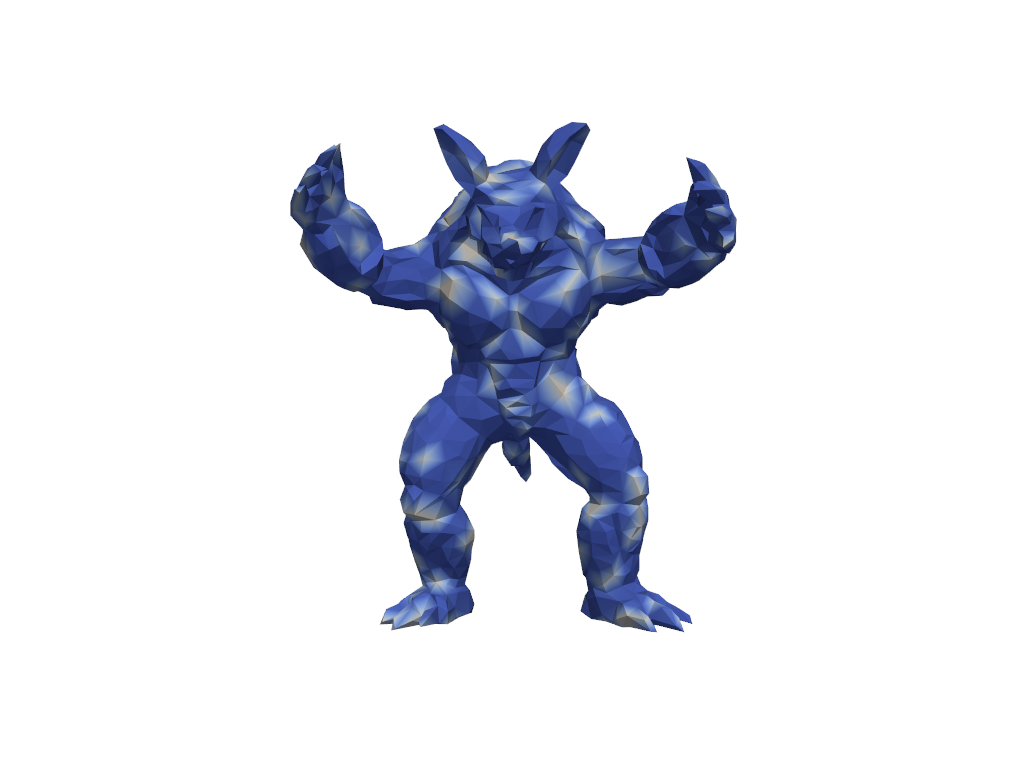} \\ \midrule
 $150$ &     \includegraphics[width=0.2\textwidth]{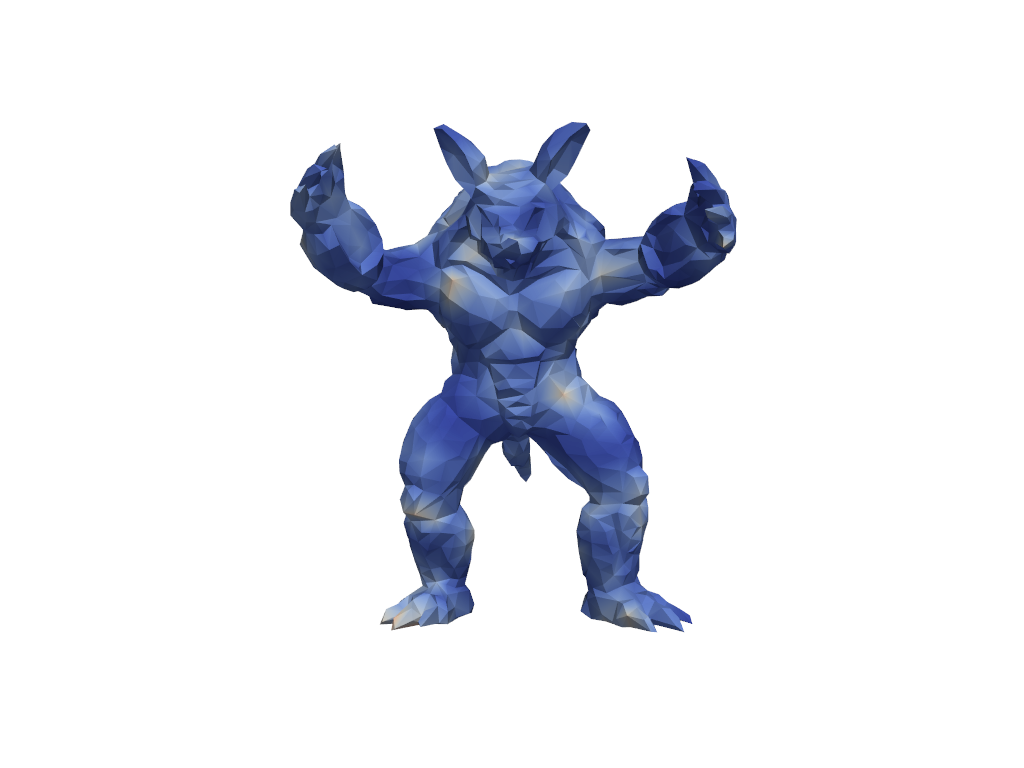}  &  \includegraphics[width=0.2\textwidth]{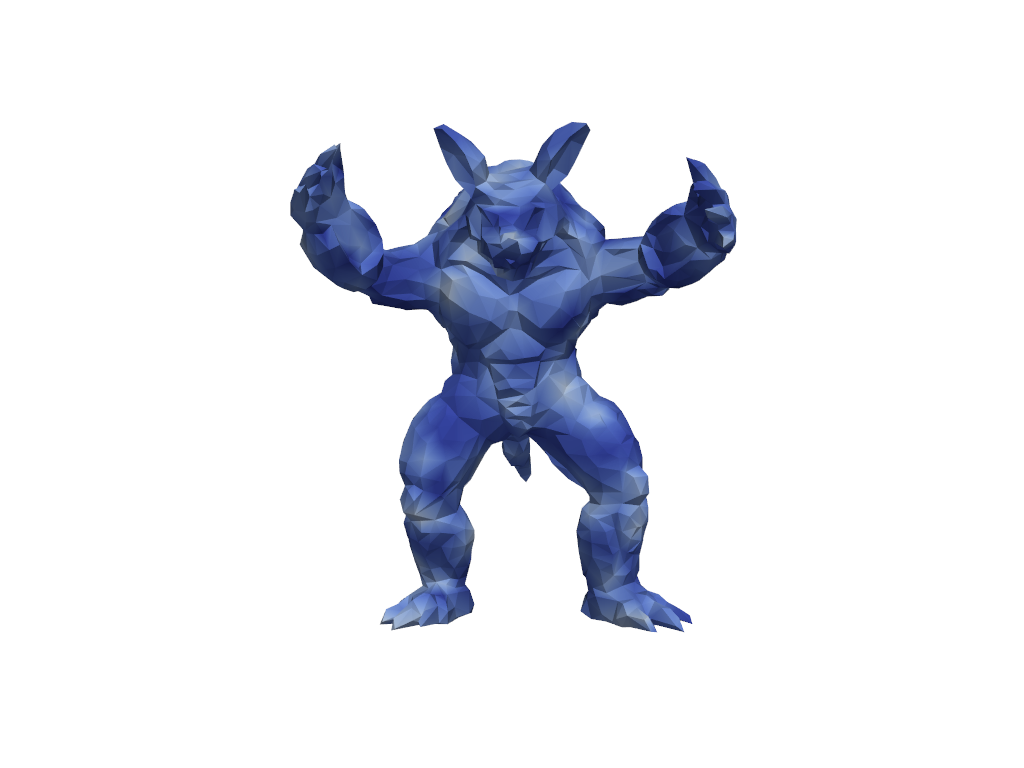}  & \includegraphics[width=0.2\textwidth]{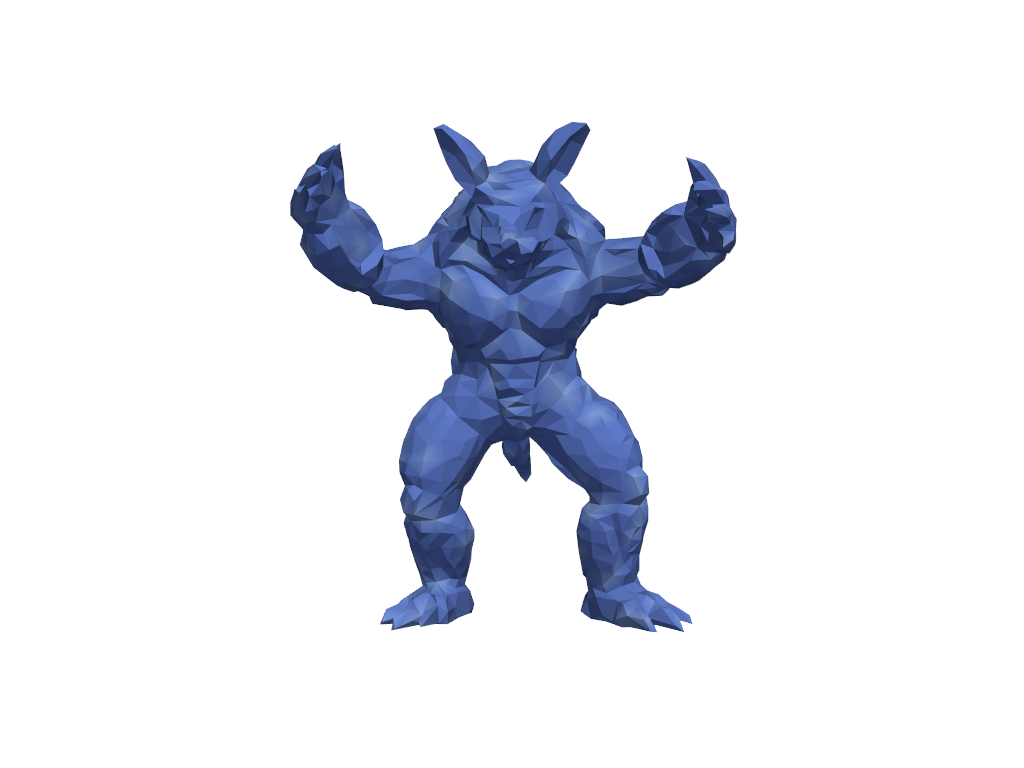} & \includegraphics[width=0.2\textwidth]{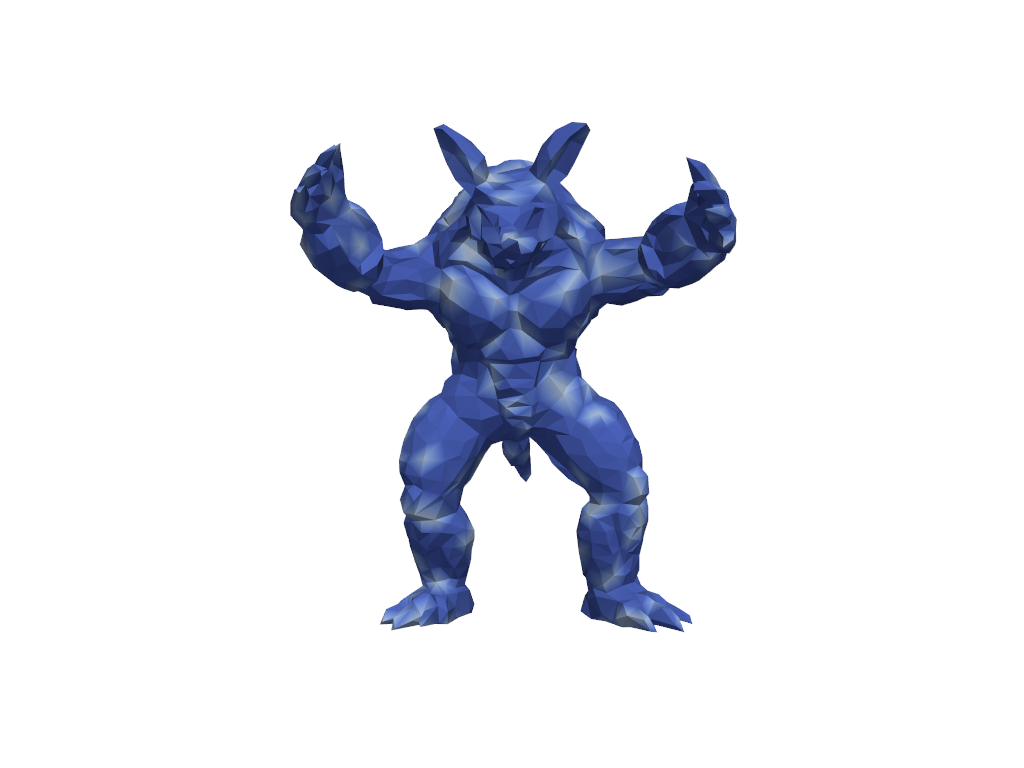} \\ \midrule
 $190$ &     \includegraphics[width=0.2\textwidth]{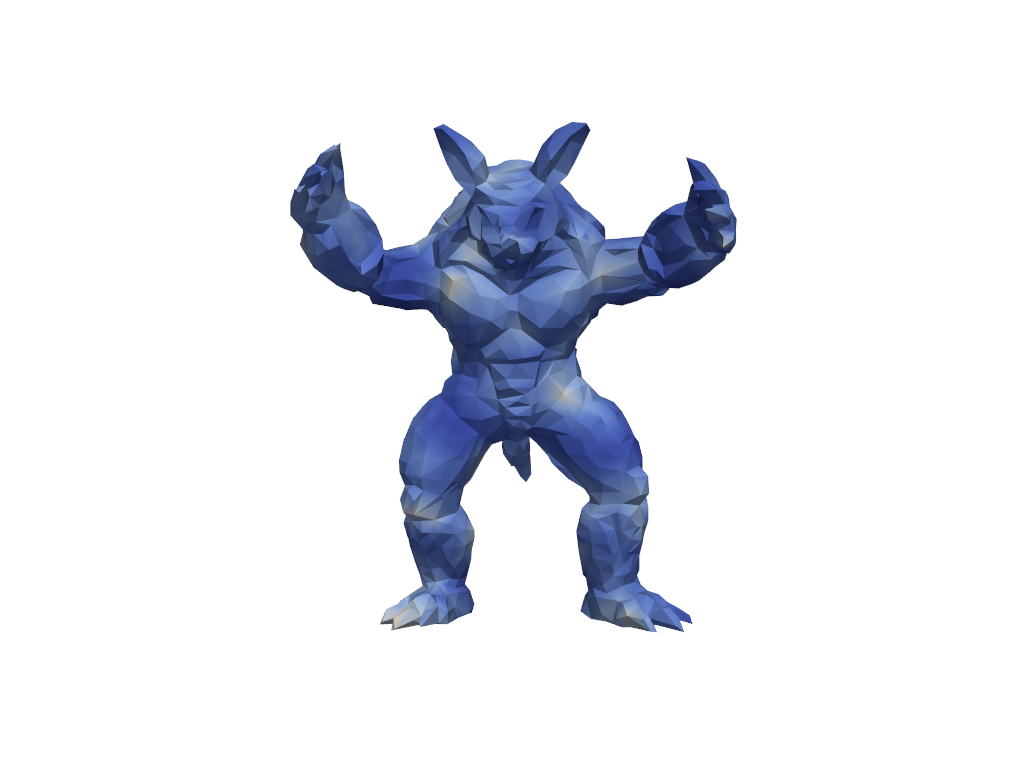}  &  \includegraphics[width=0.2\textwidth]{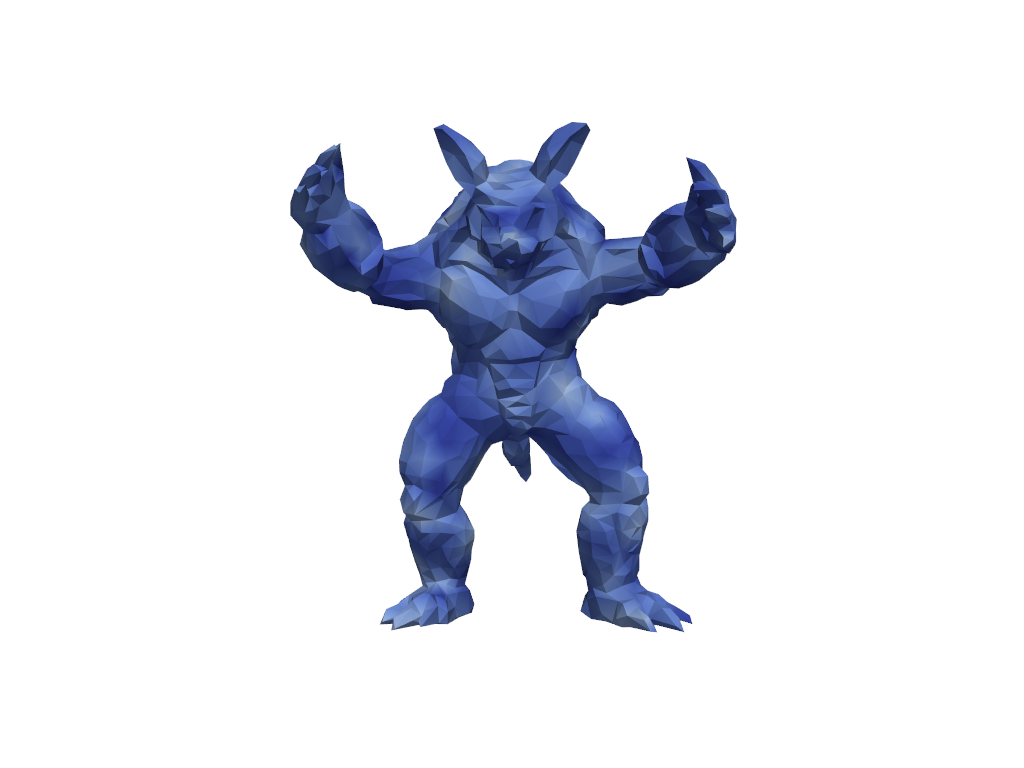}  & \includegraphics[width=0.2\textwidth]{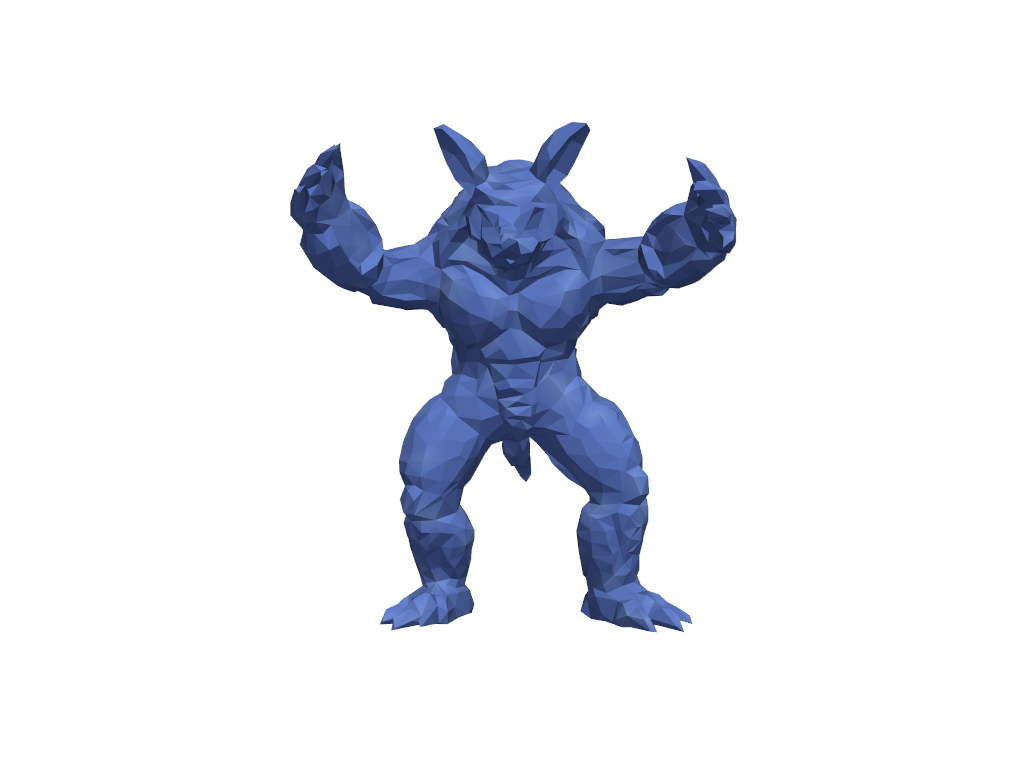} & \includegraphics[width=0.2\textwidth]{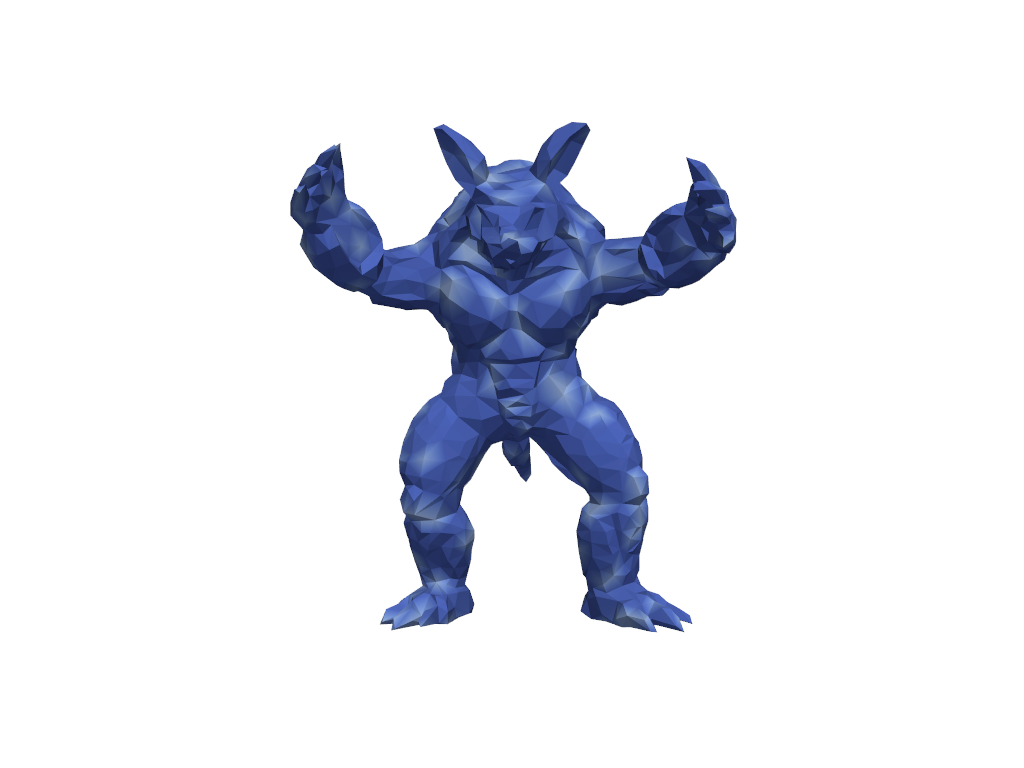}\\ \bottomrule
    \end{tabular}
    \caption{Qualitative comparison of model performance for the heat equation on the armadillo mesh. Our \ours{} model remains faithful to the ground truth during each step of the rollout, whereas the EMAN model over smooths and the Hermes model under smooths.}
    \label{tab:diagnostic}
\end{table*}

\subsection{MeshPDE Multiscale Smoothness Analysis} \label{sec:cf}

Since the Rayleigh quotient is a $1-$hop metric, this section performs additional comparisons on MeshPDE with a multiscale smoothness metric and finds that our $1-$hop smoothness tendencies also hold more generally for the gauge equivariant models we study. In particular, we define smoothness according to the $2-$point correlation function. Let $\delta : \mathbb{R}^3 \to \mathbb{R}$ be a function that maps a point $\mathbf{x}$ on a mesh to the scalar solution $u(\mathbf{x})$ (or approximation thereof) to the PDE at that point. The smoothness is then defined by the $2-$point correlation function $\xi$ given in \cref{eq:cor}:
\begin{equation}
    \xi (r; \delta) = \mathbb{E}\left[\delta(\mathbf{x})\delta(\mathbf{x} +r)\right]. \label{eq:cor}
\end{equation}

Intuitively, if node features are similar at a distance of $r$ apart, the correlation will be high. This allows us to study smoothness beyond $1-$hop neighbors by considering larger $r$. \cref{fig:kk} shows an example correlation function for Hermes at a given time step. We note that this characterization of smoothness is common in the weak gravitational lensing literature for point-cloud datasets \citep{schneider2006weak} and are easily computed with the \texttt{TreeCorr} library \citep{jarvis2004skewness}. 

Let $\delta_{ij}$ be the scalar field for the ground truth on a mesh $\mathcal{M}_i$ at time step $j$ and $\widehat{\delta_{ij}}$ be the approximation thereof. We define our smoothness error by 
\begin{equation*}
\text{err}_{\rm smooth}(\widehat{\delta_{ij}}) = \frac{1}{r_{\rm bins}}\frac{1}{\mathbf{T_{\rm max}}}\frac{1}{N}\underset{i=1}{\overset{N}{{\sum}}}\overset{\mathbf{T_{\rm max}}}{\underset{j=1}{\sum}}\overset{r_{\rm bins}}{{\underset{k=1}{\sum}}}| \xi (r_k ; \delta_{ij}) - \xi (r_k ; \widehat{\delta_{ij}})|.   
\end{equation*}
We note that the correlation function is related to the Fourier space power spectrum $P(k)$ by 
\begin{equation}
    \xi(r) = \frac{1}{2\pi^2}\int k^2 P(k) \frac{\sin (kr)}{kr}dk \label{eq:ps}. 
\end{equation}
Thus, \cref{eq:ps} informs us that our metric for smoothness as a function of $r$ is related to traditional energy spectrum errors \citep[e.g.,][]{wang2020incorporating}. We leave a more systematic comparison between the measures as an opportunity for future work.

As seen in \cref{tab:hope}, the more expressive attention and message passing based models are much better at capturing the underlying smoothness. The CNN model diverges for the heat and wave datasets, but performs reasonably well on Cahn-Hilliard. This is mirrored by our results in \cref{tab:IRE} in the main text for the Gauge Equivariant models.

\begin{figure}[!htb]
    \centering
    \includegraphics[width=0.5\linewidth]{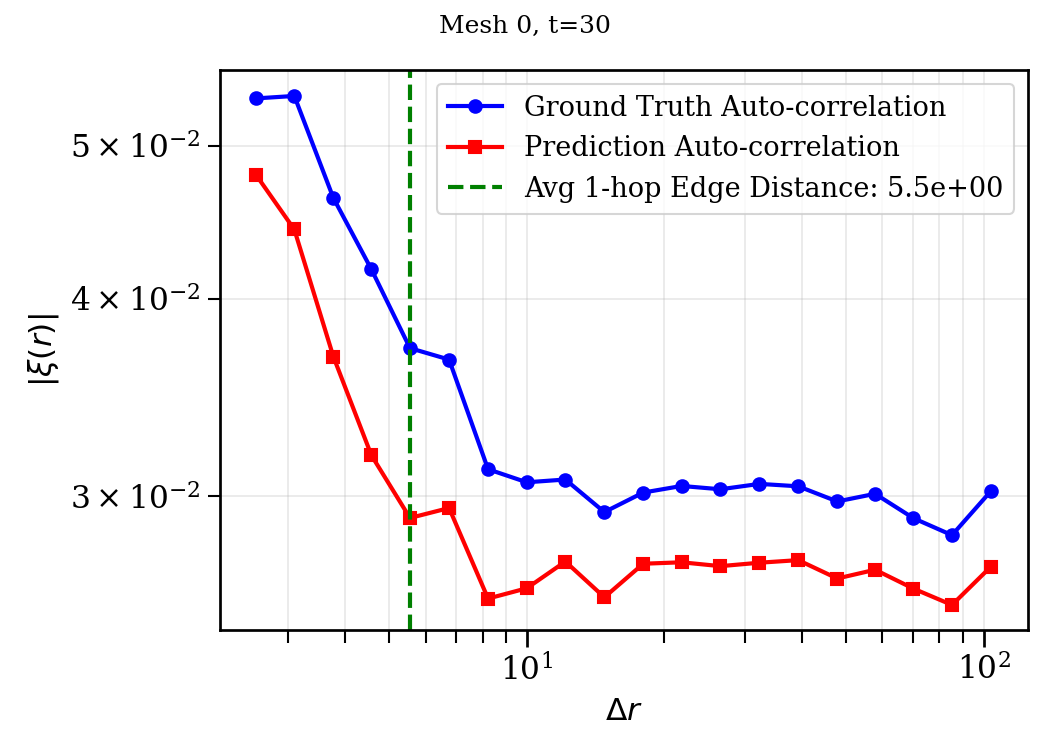}
    \caption{Smoothness for a Hermes model as measured by the $2-$point correlation function. The plot indicates undersmoothing in each radial bin.}
    \label{fig:kk}
\end{figure}

\begin{table}[!htb]
    \centering
    \tiny
    \begin{tabular}{l|c}
        \toprule
        \multicolumn{2}{c}{\textbf{Heat ($\alpha = 1$)}} \\ \cmidrule(lr){1-2}
        \textbf{Model} & $\text{err}_{\rm smooth}$ $(\downarrow)$ \\
        \midrule
    \hyperlink{cite.dehaan2021gaugeequivariantmeshcnns}{GemCNN} & -- \\
        \hyperlink{cite.basu2022equivariant}{EMAN} & $\mathbf{4.04 \times 10^{-3}}$ \\
        \hyperlink{cite.park2023modelingdynamicsmeshesgauge}{Hermes} & $9.71 \times 10 ^{-3}$ \\
        \midrule
        \multicolumn{2}{c}{\textbf{Wave ($c = 1$)}} \\ \cmidrule(lr){1-2}
        \textbf{Model} & $\text{err}_{\rm smooth}$ $(\downarrow)$ \\
        \midrule
        \hyperlink{cite.dehaan2021gaugeequivariantmeshcnns}{GemCNN}  & -- \\
        \hyperlink{cite.basu2022equivariant}{EMAN} & $\mathbf{1.78 \times 10^{-3}}$ \\
\hyperlink{cite.park2023modelingdynamicsmeshesgauge}{Hermes}  & $1.38 \times 10^{-2}$ \\
        \midrule
        \multicolumn{2}{c}{\textbf{Cahn--Hilliard}} \\ \cmidrule(lr){1-2}
        \textbf{Model} & $\text{err}_{\rm smooth}$ $(\downarrow)$ \\
        \midrule
    \hyperlink{cite.dehaan2021gaugeequivariantmeshcnns}{GemCNN} & $1.89 \times 10^{-1}$ \\
        \hyperlink{cite.basu2022equivariant}{EMAN} & $4.59 \times 10^{-1}$ \\
    \hyperlink{cite.park2023modelingdynamicsmeshesgauge}{Hermes} & $\mathbf{9.61 \times 10^{-3}}$ \\
        \bottomrule
    \end{tabular}
    \caption{$\text{err}_{\rm smooth}$ for Gauge Equivariant models on the MeshPDE datasets. Dashes (--) indicate non-convergence. Best performing model is indicated with \textbf{bold text}.}
    \label{tab:hope}
\end{table}

\subsection{MeshPDE Local Smoothness Analysis} \label{sec:local}

This section contains analysis for local smoothness tendencies not captured by global smoothness metrics like the Rayleigh quotient. This is particularly important for the Cahn-Hilliard PDE, where the solution contains sharp discontinuities. To analyze local smoothness, we define smoothness at a given vertex as the Rayleigh quotient of the graph given by the $2$-hop neighborhood around that vertex. To quantify how well local smoothness of a surrogate model matches the target, we examine the distribution of Rayleigh quotients for each timestep. The evolving distributions during each timestep of the PDE are available as gifs on our GitHub codebase at \url{https://github.com/EdwardBerman/rayleigh_analysis/tree/main/rebuttal/localized_metric}. On Cahn-Hilliard, we see that \ours{} is able to capture the local smoothness tendencies on some trajectories, albeit less consistently than GemCNN. This explains the performance in \cref{tab:IRE}.

\subsection{MeshPDE Operator Learning Method Comparison} \label{sec:opppp}

We compare \ours{} with MeshGraphNet, a strong operator learning method baseline and show that we outperform on diffusive dynamics. This is shown in \cref{tab:IRE_opp}.

\begin{table*}[!htb]
\centering
\tiny
\begin{tabular}{lcc}
\toprule
\textbf{Metric} & \ours{} & \hyperlink{pfaff2021learningmeshbasedsimulationgraph}{MeshGraphNet} \\
\midrule
\multicolumn{3}{c}{\textbf{Heat} $(\alpha = 1)$} \\ \midrule
NRMSE $(\downarrow)$ & $\mathbf{51.9 \pm 3.6}$ & $108.1 \pm 3.2$\\
SMAPE $(\downarrow)$ & $\mathbf{79.7 \pm 5.6}$ & $144.1 \pm 6.3$ \\
RE $(\downarrow)$    & $\mathbf{9.1 \pm 7.4}$ & $53.4 \pm 1.6$ \\
\midrule
\multicolumn{3}{c}{\textbf{Wave} $(c = 1)$} \\ \midrule
NRMSE $(\downarrow)$ & $\mathbf{236.5 \pm 6.4}$ & $253.6 \pm 28.2$ \\
SMAPE $(\downarrow)$ & $385.2 \pm 1.2$ & $\mathbf{278.6 \pm 2.3}$ \\
RE $(\downarrow)$    & $93.5 \pm 25.4$ & $\mathbf{30.0 \pm 23.2}$\\
\midrule
\multicolumn{3}{c}{\textbf{Cahn-Hilliard}} \\ \midrule
NRMSE $(\downarrow)$ & $123.9 \pm 2.6$ & $\mathbf{122.0 \pm 1.5}$ \\
SMAPE $(\downarrow)$ & $\mathbf{167.3 \pm 10.6}$ & $186.1 \pm 2.9$ \\
RE $(\downarrow)$  & $18.9 \pm 10.4$ & $\mathbf{6.2 \pm 0.4}$ \\
\bottomrule
\end{tabular}
\caption{NRMSE, SMAPE, and RE averaged over all rollouts on all test meshes for the heat, wave, and Cahn-Hilliard equations. The best values are in bold. \ours{} outperforms MeshGraphNet on heat and is competitive on the other PDEs.}
\label{tab:IRE_opp}
\end{table*}

\subsection{MeshPDE Ablation Study} \label{sec:abl}

We provide ablations on each of the main components in \ours{} and show that are proposed relaxation mechanism is necessary for faithful modeling. Most importantly, \cref{tab:pde_encoder_comparison} shows that unitarity graph convolutions are necessary for strong performance on diffusive dynamics. While our method can tolerate early Taylor truncations in the unitary layers of the preserver network $P$, replacing the unitary layers with fully unconstrained GCN layers results in significantly worse performance. Similarly, in \cref{tab:zeropad_gcn_comparison} we show that the smoothness preserving zero pad operation is necessary for strong performance across the heat task. However, we see in \cref{tab:groupsort_relu_comparison} that our method is not overly sensitive to our usage of the GroupSort activation function. Finally, we note that our method was not overly sensitive to the width and depth of the decoding MLP used in our experiments. As seen in \cref{tab:depth} and \cref{tab:width}, while our results were best for an MLP with $3$ layers and a hidden width of size $64$, performance was still competitive with many baselines for a single readout layer or with a width of $128$.

\begin{table*}[!htb]
\centering
\tiny
\begin{tabular}{lccc}
\toprule
\textbf{Metric} & \textbf{Unitary GCN} ($T=10$) & \textbf{Unitary GCN} ($T=3$) & \textbf{Unconstrained GCN} \\
\midrule
\multicolumn{4}{c}{\textbf{Heat}} \\ \midrule
NRMSE $(\downarrow)$ & $\mathbf{51.9 \pm 3.6}$  & $70.1 \pm 6.3$           & $416.9 \pm 14.3$ \\
SMAPE $(\downarrow)$ & $79.7 \pm 5.6$           & $\mathbf{76.7 \pm 10.4}$ & $248.8 \pm 6.6$  \\
RE $(\downarrow)$    & $9.1 \pm 7.4$            & $\mathbf{9.0 \pm 0.6}$   & $31.5 \pm 8.6$   \\
\midrule
\multicolumn{4}{c}{\textbf{Wave}} \\ \midrule
NRMSE $(\downarrow)$ & $236.5 \pm 6.4$          & $242.2 \pm 4.1$          & $\mathbf{196.0 \pm 0.0}$ \\
SMAPE $(\downarrow)$ & $385.2 \pm 1.2$          & $\mathbf{299.1 \pm 5.9}$ & $363.1 \pm 3.1$  \\
RE $(\downarrow)$    & $93.5 \pm 25.4$          & $\mathbf{37.7 \pm 4.8}$  & $193.6 \pm 4.3$  \\
\midrule
\multicolumn{4}{c}{\textbf{Cahn-Hilliard}} \\ \midrule
NRMSE $(\downarrow)$ & $\mathbf{123.9 \pm 2.6}$ & $130.1 \pm 2.2$          & $175.0 \pm 0.9$  \\
SMAPE $(\downarrow)$ & $167.3 \pm 10.6$         & $\mathbf{155.9 \pm 7.2}$ & $187.0 \pm 2.8$  \\
RE $(\downarrow)$    & $18.9 \pm 10.4$          & $24.2 \pm 8.5$           & $\mathbf{12.0 \pm 4.6}$ \\
\bottomrule
\end{tabular}
\caption{Comparison for the smoothness-preserving component type on NRMSE, SMAPE, and RE averaged over all rollouts on all test meshes for the heat, wave, and Cahn-Hilliard equations. The best values are in bold.}
\label{tab:pde_encoder_comparison}
\end{table*}

\begin{table*}[!htb]
\centering
\tiny
\begin{tabular}{lcc}
\toprule
\textbf{Metric} & \textbf{Zero Pad} & \textbf{GCN} \\
\midrule
\multicolumn{3}{c}{\textbf{Heat}} \\ \midrule
NRMSE $(\downarrow)$ & $\mathbf{51.9 \pm 3.6}$   & $4{,}689.6 \pm 232.0$ \\
SMAPE $(\downarrow)$ & $\mathbf{79.7 \pm 5.6}$   & $367.7 \pm 0.6$       \\
RE $(\downarrow)$    & $\mathbf{9.1 \pm 7.4}$    & $153.9 \pm 13.4$      \\
\midrule
\multicolumn{3}{c}{\textbf{Wave}} \\ \midrule
NRMSE $(\downarrow)$ & $\mathbf{236.5 \pm 6.4}$  & $12{,}532.0 \pm 242.2$ \\
SMAPE $(\downarrow)$ & $385.2 \pm 1.2$  & $\mathbf{383.9 \pm 0.6}$        \\
RE $(\downarrow)$    & $93.5 \pm 25.4$           & $\mathbf{11.8 \pm 2.8}$ \\
\midrule
\multicolumn{3}{c}{\textbf{Cahn-Hilliard}} \\ \midrule
NRMSE $(\downarrow)$ & $\mathbf{123.9 \pm 2.6}$  & $228.5 \pm 1.0$        \\
SMAPE $(\downarrow)$ & $\mathbf{167.3 \pm 10.6}$ & $250.8 \pm 1.3$        \\
RE $(\downarrow)$    & $18.9 \pm 10.4$           & $\mathbf{9.0 \pm 4.3}$ \\
\bottomrule
\end{tabular}
\caption{Comparison of Zero Pad and GCN mapping strategies for increasing the latent dimension on NRMSE, SMAPE, and RE averaged over all rollouts on all test meshes for the heat, wave, and Cahn-Hilliard equations. The best values are in bold.}
\label{tab:zeropad_gcn_comparison}
\end{table*}

\begin{table*}[!htb]
\centering
\tiny
\begin{tabular}{lcc}
\toprule
\textbf{Metric} & \textbf{GroupSort} & \textbf{ReLU} \\
\midrule
\multicolumn{3}{c}{\textbf{Heat}} \\ \midrule
NRMSE $(\downarrow)$ & $\mathbf{51.9 \pm 3.6}$   & $87.7 \pm 10.5$  \\
SMAPE $(\downarrow)$ & $\mathbf{79.7 \pm 5.6}$   & $82.9 \pm 3.6$   \\
RE $(\downarrow)$    & $9.1 \pm 7.4$             & $\mathbf{9.0 \pm 10.5}$ \\
\midrule
\multicolumn{3}{c}{\textbf{Wave}} \\ \midrule
NRMSE $(\downarrow)$ & $\mathbf{236.5 \pm 6.4}$  & $825.7 \pm 12.3$ \\
SMAPE $(\downarrow)$ & $385.2 \pm 1.2$  & $\mathbf{340.2 \pm 4.3}$  \\
RE $(\downarrow)$    & $93.5 \pm 25.4$           & $\mathbf{22.4 \pm 4.0}$ \\
\midrule
\multicolumn{3}{c}{\textbf{Cahn-Hilliard}} \\ \midrule
NRMSE $(\downarrow)$ & $\mathbf{123.9 \pm 2.6}$  & $144.7 \pm 4.2$  \\
SMAPE $(\downarrow)$ & $167.3 \pm 10.6$ & $\mathbf{151.2 \pm 4.5}$  \\
RE $(\downarrow)$    & $\mathbf{18.9 \pm 10.4}$  & $30.0 \pm 8.0$   \\
\bottomrule
\end{tabular}
\caption{Comparison of GroupSort and ReLU activation functions on NRMSE, SMAPE, and RE averaged over all rollouts on all test meshes for the heat, wave, and Cahn-Hilliard equations. The best values are in bold.}
\label{tab:groupsort_relu_comparison}
\end{table*}

\begin{table*}[!htb]
\centering
\tiny
\begin{tabular}{lcc}
\toprule
\textbf{Metric} & \textbf{MLP (Depth = 1)} & \textbf{MLP (Depth = 3)} \\
\midrule
\multicolumn{3}{c}{\textbf{Heat}} \\ \midrule
NRMSE $(\downarrow)$ & $99.9 \pm 13.1$          & $\mathbf{51.9 \pm 3.6}$   \\
SMAPE $(\downarrow)$ & $181.6 \pm 4.0$          & $\mathbf{79.7 \pm 5.6}$   \\
RE $(\downarrow)$    & $21.2 \pm 7.1$           & $\mathbf{9.1 \pm 7.4}$    \\
\midrule
\multicolumn{3}{c}{\textbf{Wave}} \\ \midrule
NRMSE $(\downarrow)$ & $-$                      & $236.5 \pm 6.4$           \\
SMAPE $(\downarrow)$ & $\mathbf{377.6 \pm 4.1}$ & $385.2 \pm 1.2$           \\
RE $(\downarrow)$    & $109.6 \pm 24.8$         & $\mathbf{93.5 \pm 25.4}$  \\
\midrule
\multicolumn{3}{c}{\textbf{Cahn-Hilliard}} \\ \midrule
NRMSE $(\downarrow)$ & $141.9 \pm 1.9$          & $\mathbf{123.9 \pm 2.6}$  \\
SMAPE $(\downarrow)$ & $233.7 \pm 7.8$          & $\mathbf{167.3 \pm 10.6}$ \\
RE $(\downarrow)$    & $25.5 \pm 5.2$           & $\mathbf{18.9 \pm 10.4}$  \\
\bottomrule
\end{tabular}
\caption{Effect of MLP depth on NRMSE, SMAPE, and RE averaged over all rollouts on all test meshes for the heat, wave, and Cahn-Hilliard equations. The best values are in bold.}
\label{tab:depth}
\end{table*}

\begin{table*}[!htb]
\centering
\tiny
\begin{tabular}{lcc}
\toprule
\textbf{Metric} & \textbf{MLP (Width = 64)} & \textbf{MLP (Width = 128)} \\
\midrule
\multicolumn{3}{c}{\textbf{Heat}} \\ \midrule
NRMSE $(\downarrow)$ & $\mathbf{51.9 \pm 3.6}$   & $88.2 \pm 6.8$              \\
SMAPE $(\downarrow)$ & $\mathbf{79.7 \pm 5.6}$   & $136.3 \pm 4.7$             \\
RE $(\downarrow)$    & $\mathbf{9.1 \pm 7.4}$    & $10.7 \pm 2.3$              \\
\midrule
\multicolumn{3}{c}{\textbf{Wave}} \\ \midrule
NRMSE $(\downarrow)$ & $\mathbf{236.5 \pm 6.4}$  & $1{,}433.0 \pm 35.0$        \\
SMAPE $(\downarrow)$ & $385.2 \pm 1.2$           & $\mathbf{343.4 \pm 7.7}$    \\
RE $(\downarrow)$    & $\mathbf{93.5 \pm 25.4}$  & $176.0 \pm 4.2$             \\
\midrule
\multicolumn{3}{c}{\textbf{Cahn-Hilliard}} \\ \midrule
NRMSE $(\downarrow)$ & $\mathbf{123.9 \pm 2.6}$  & $170.6 \pm 1.0$             \\
SMAPE $(\downarrow)$ & $167.3 \pm 10.6$          & $\mathbf{155.4 \pm 1.2}$    \\
RE $(\downarrow)$    & $\mathbf{18.9 \pm 10.4}$  & $47.3 \pm 3.0$              \\
\bottomrule
\end{tabular}
\caption{Effect of MLP width on NRMSE, SMAPE, and RE averaged over all rollouts on all test meshes for the heat, wave, and Cahn-Hilliard equations. The best values are in bold.}
\label{tab:width}
\end{table*}

\clearpage 

\subsection{MeshPDE Sample Initializations} \label{sec:SampleInit}

\begin{table}[!htb]
    \centering
    \begin{tabular}{ccc}
  Mesh &  Heat $T=0$ & Wave $T=0$  \\ \toprule
  Armadillo &  \includegraphics[width=0.2\textwidth]{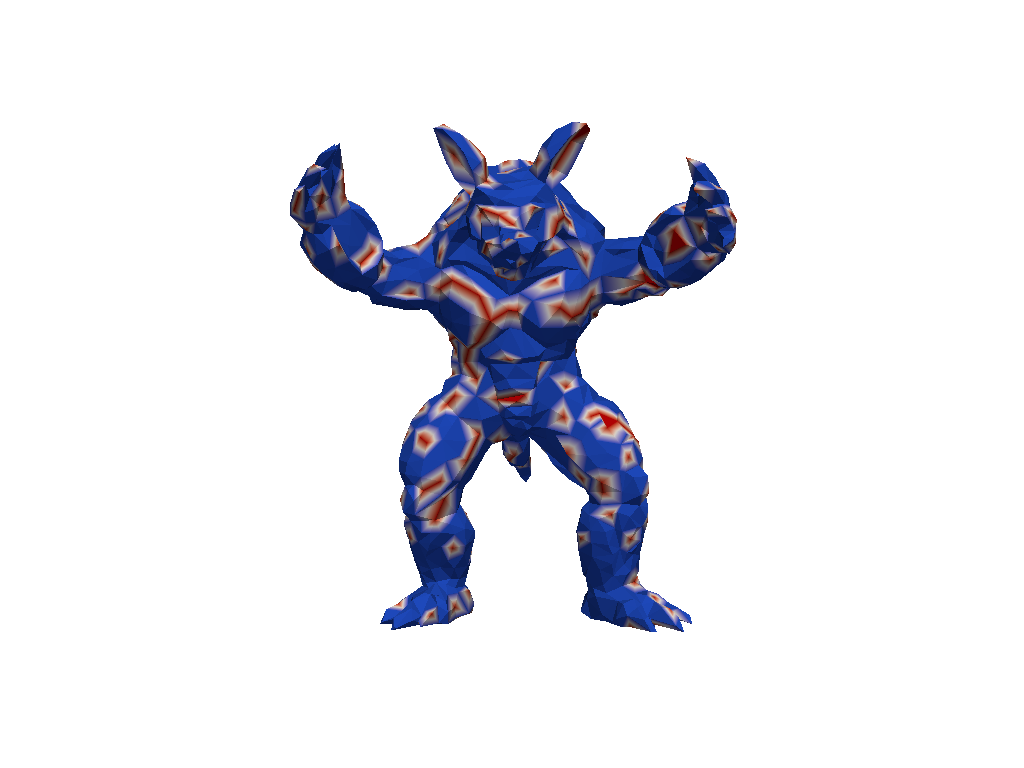} & \includegraphics[width=0.2\textwidth]{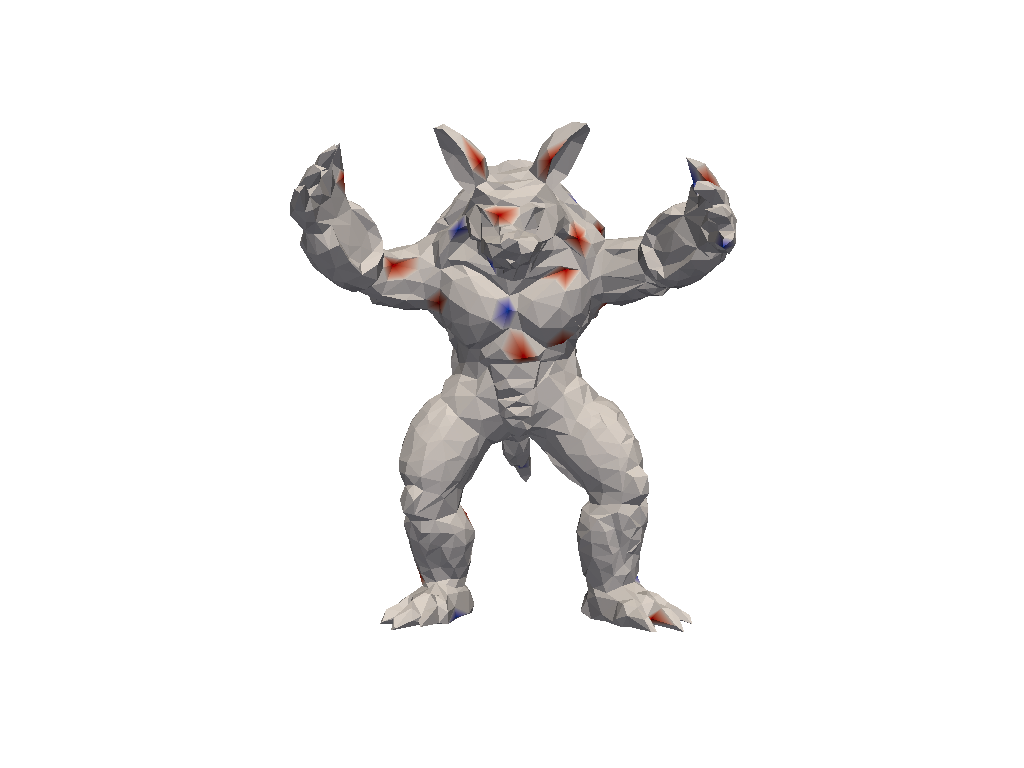}  \\ \midrule
  Bunny & \includegraphics[width=0.2\textwidth]{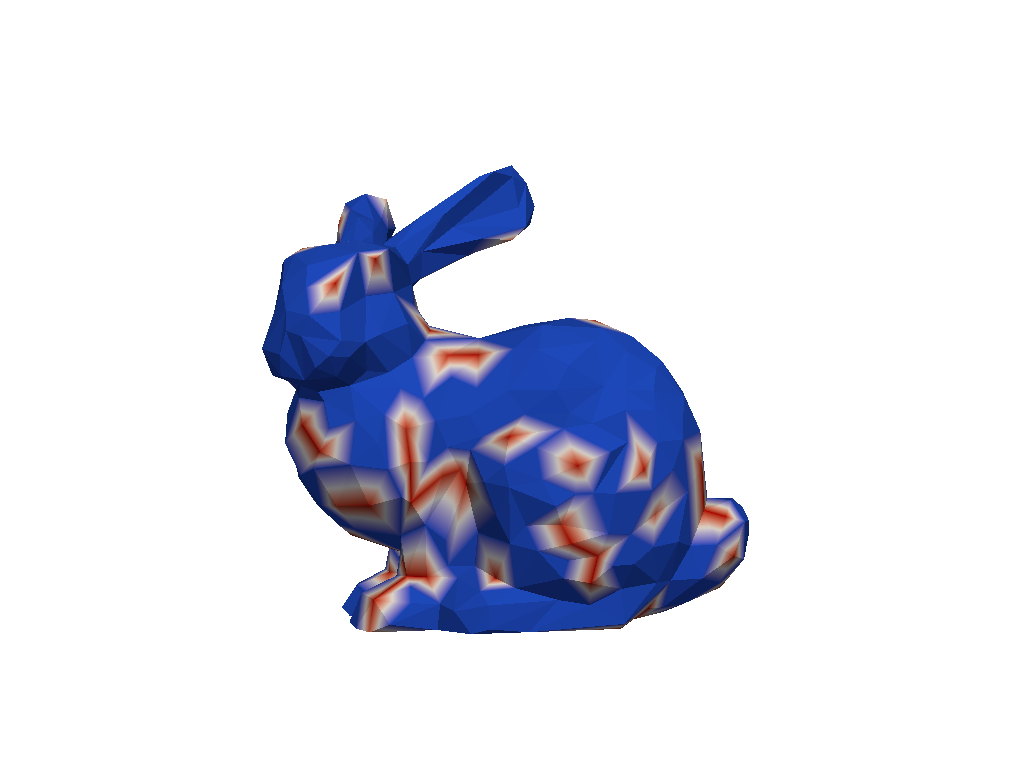} & \includegraphics[width=0.2\textwidth]{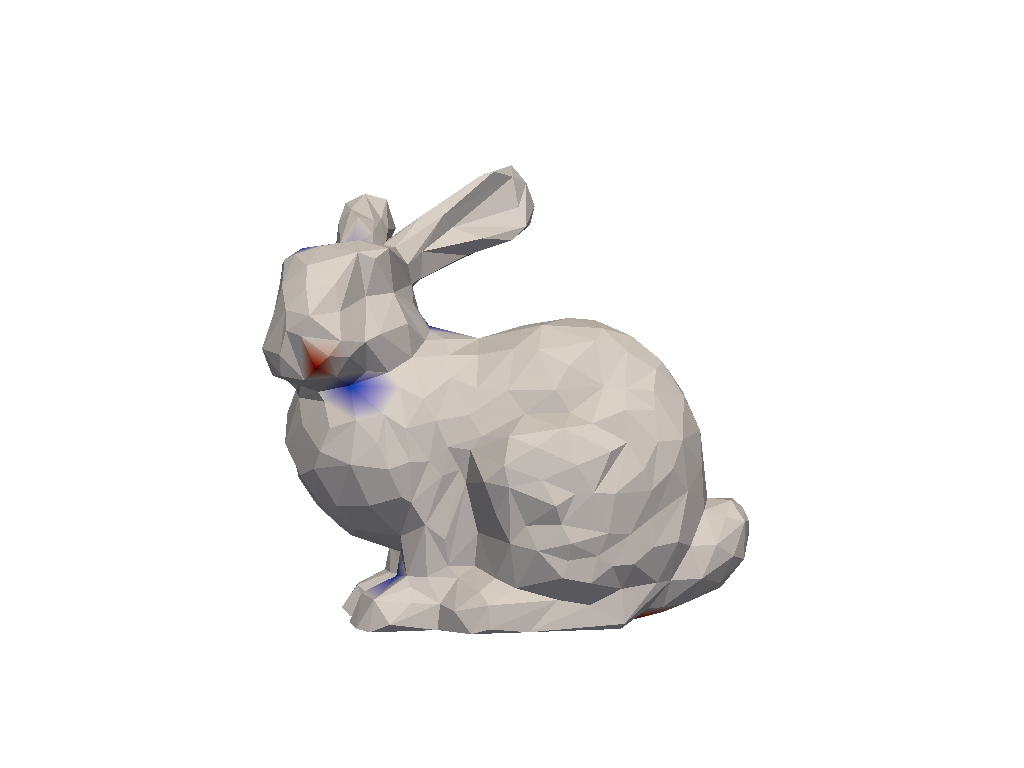}  \\ \midrule
  Lucy & \includegraphics[width=0.2\textwidth]{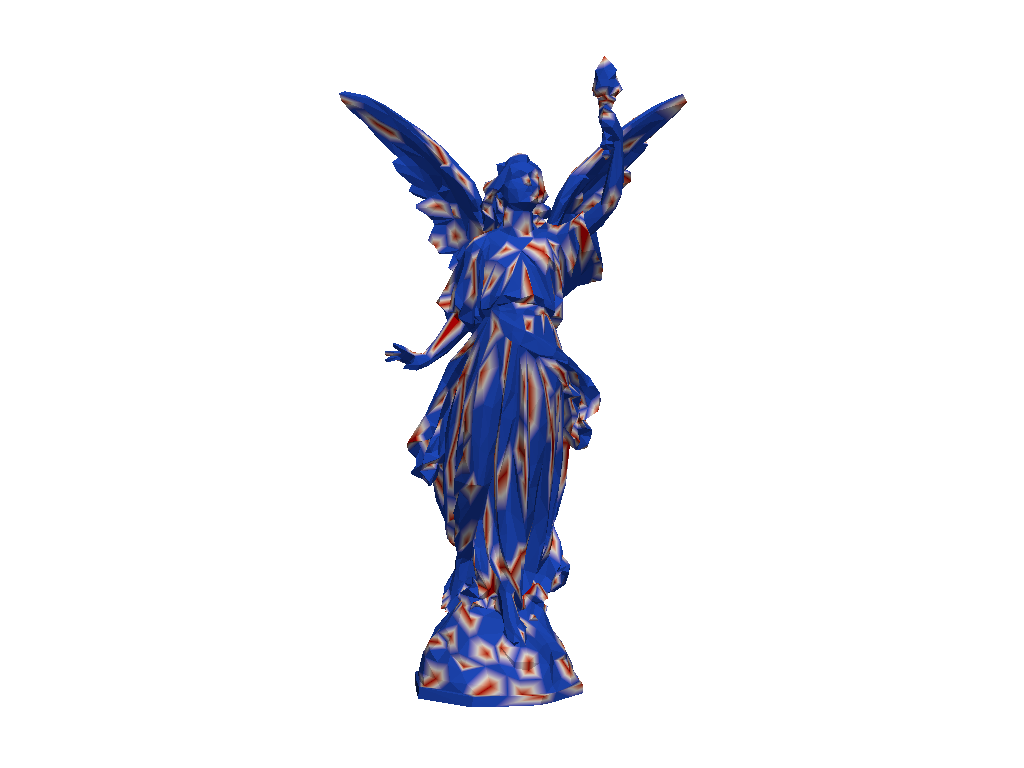} & \includegraphics[width=0.2\textwidth]{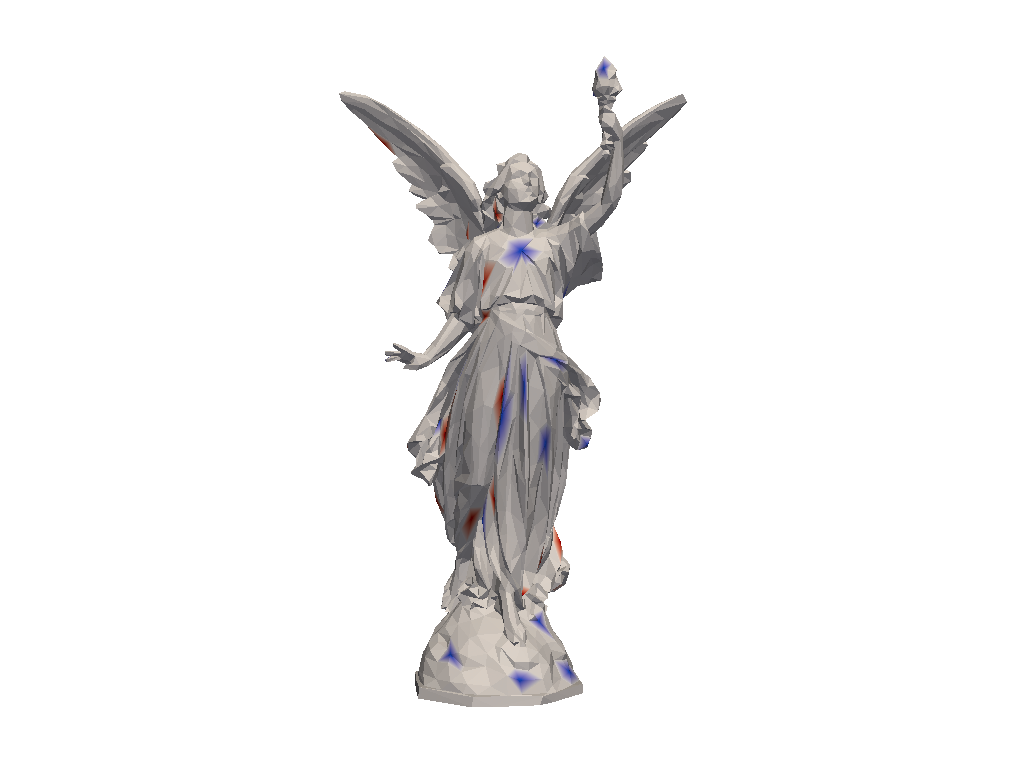} \\ \midrule
  Sphere & \includegraphics[width=0.2\textwidth]{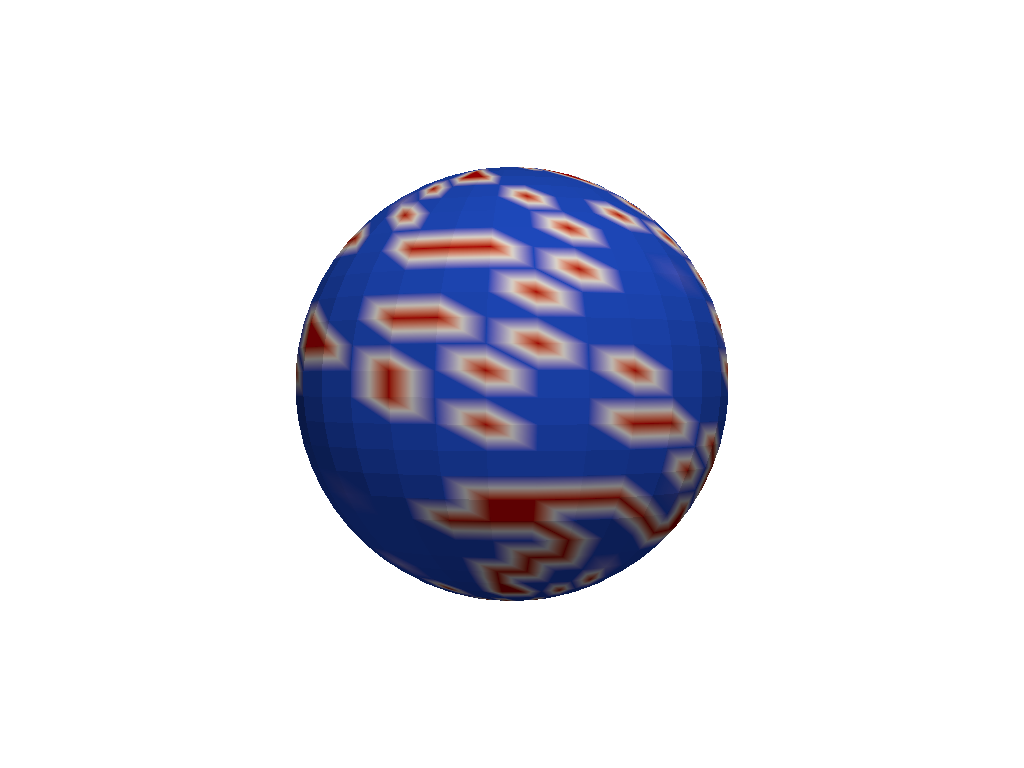} & \includegraphics[width=0.2\textwidth]{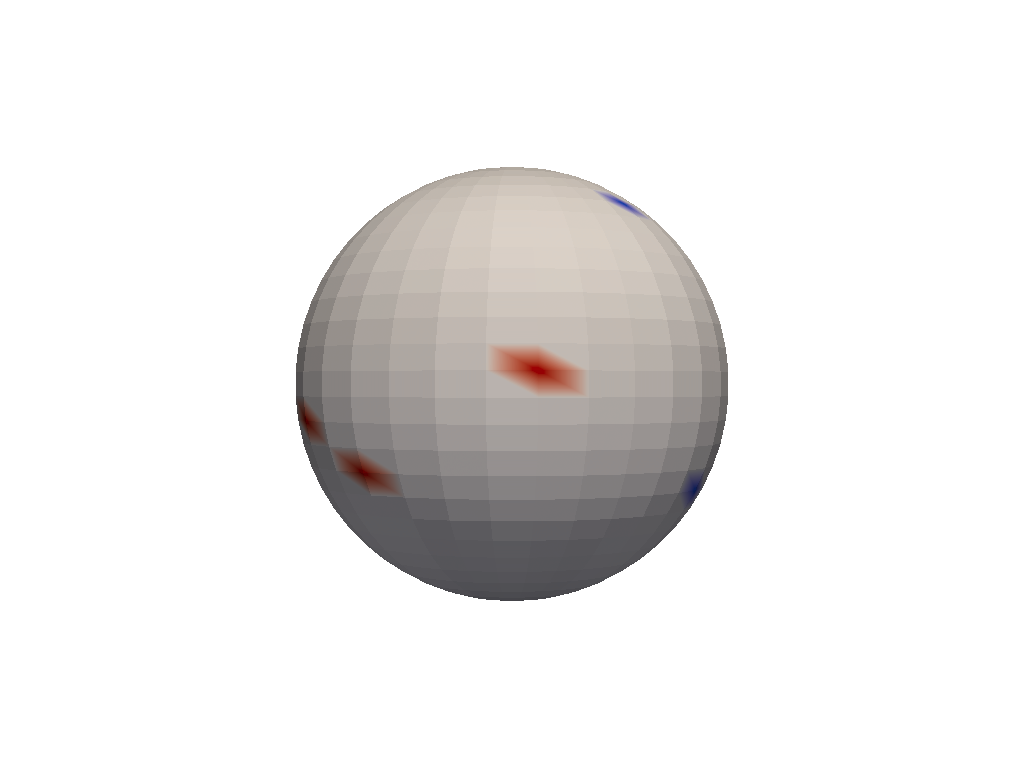} \\ \midrule
  Spider & \includegraphics[width=0.2\textwidth]{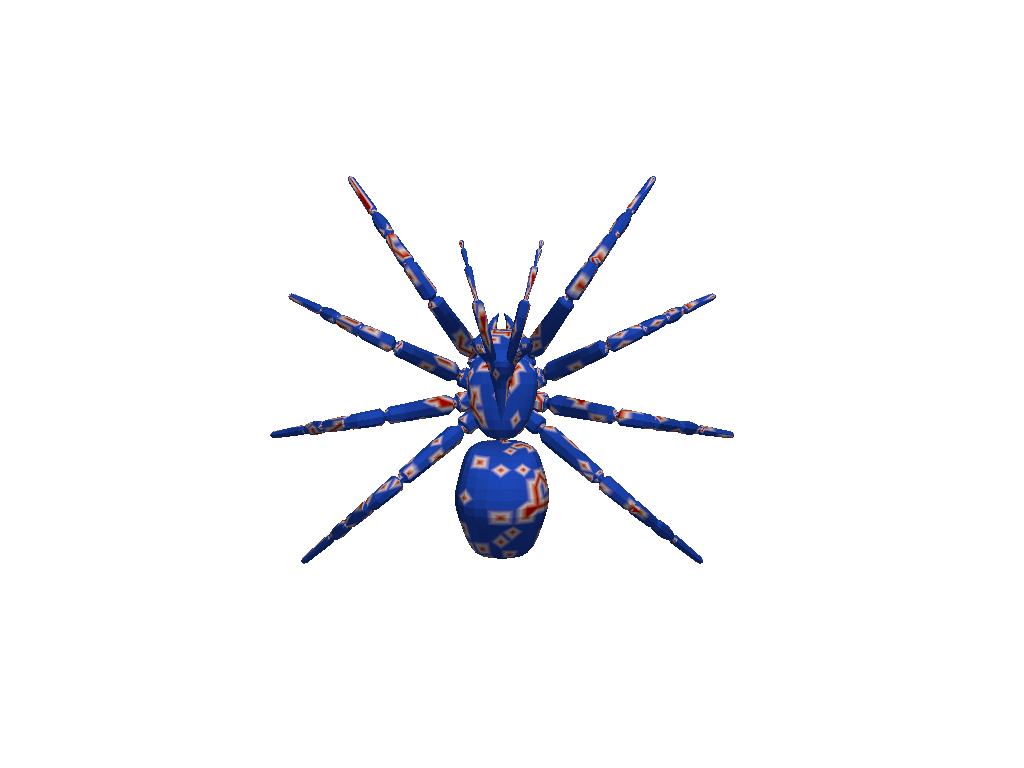} & \includegraphics[width=0.2\textwidth]{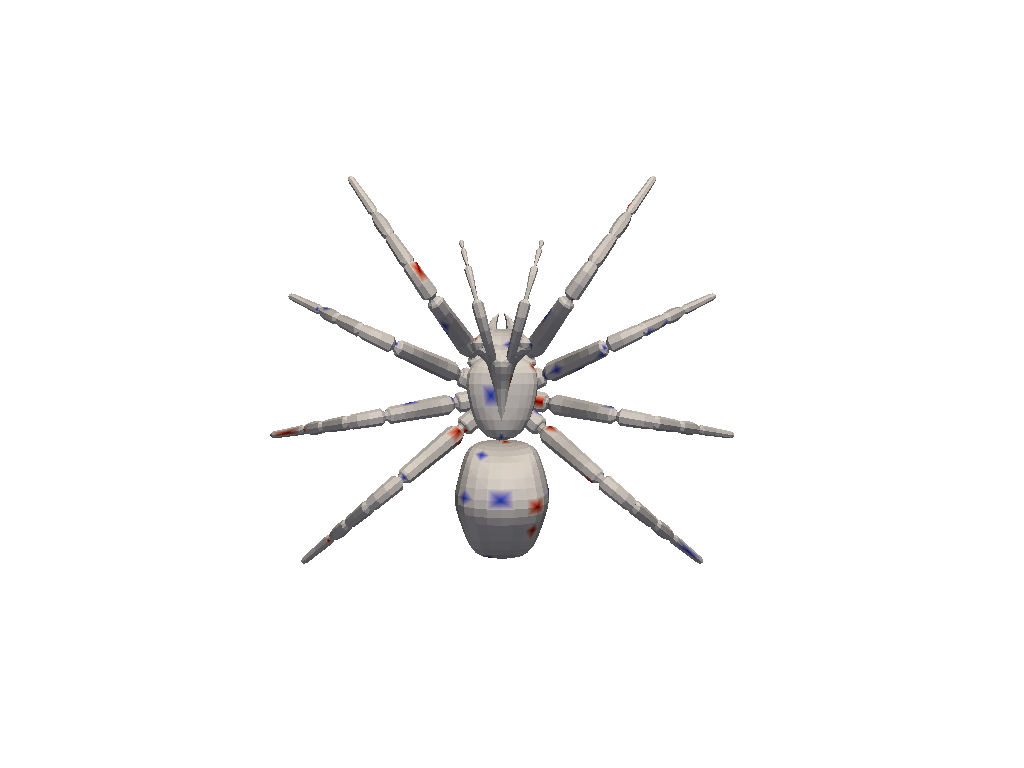} \\ \midrule
  Urn & \includegraphics[width=0.2\textwidth]{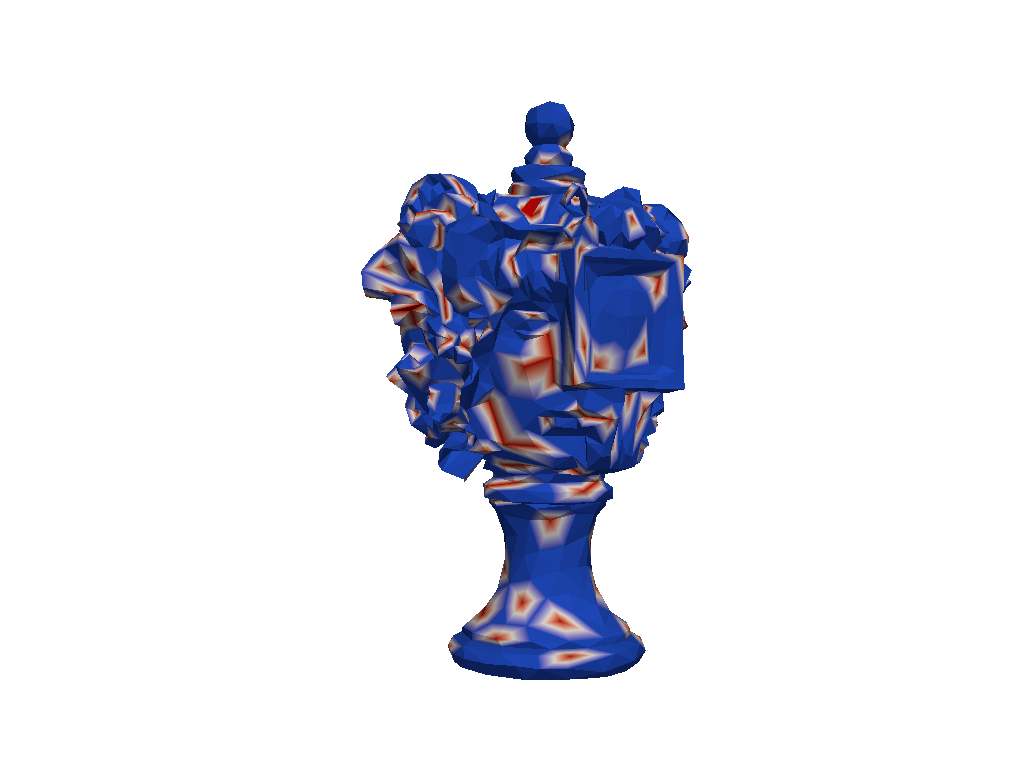} & \includegraphics[width=0.2\textwidth]{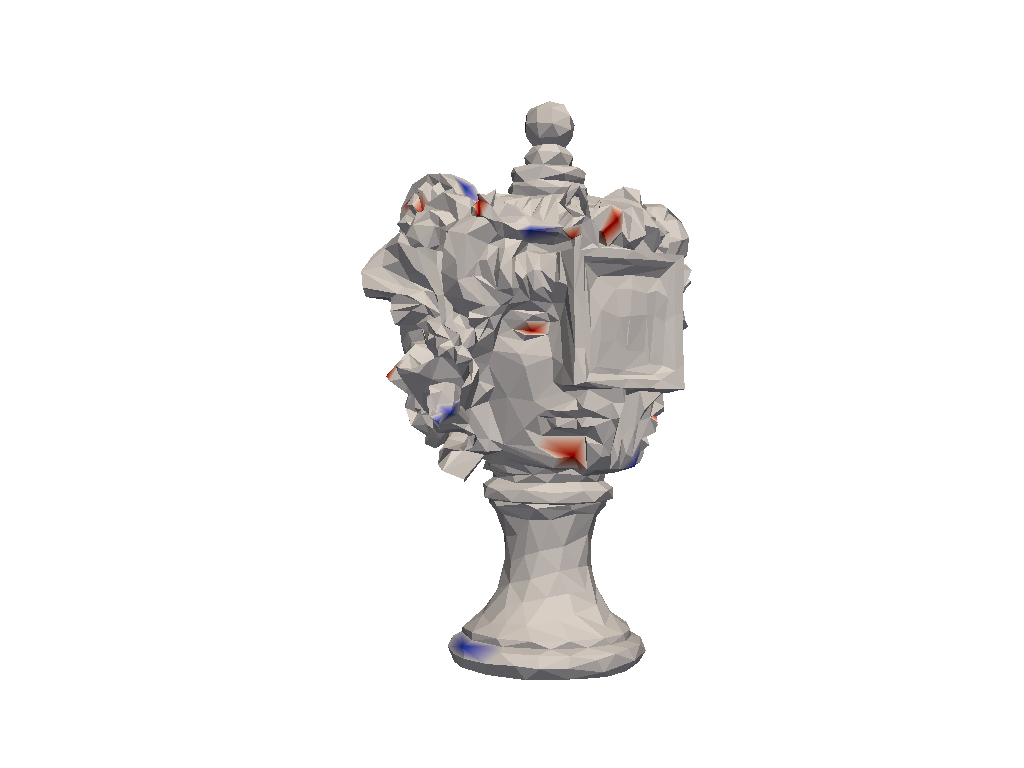} \\ \midrule
  Woman & \includegraphics[width=0.2\textwidth]{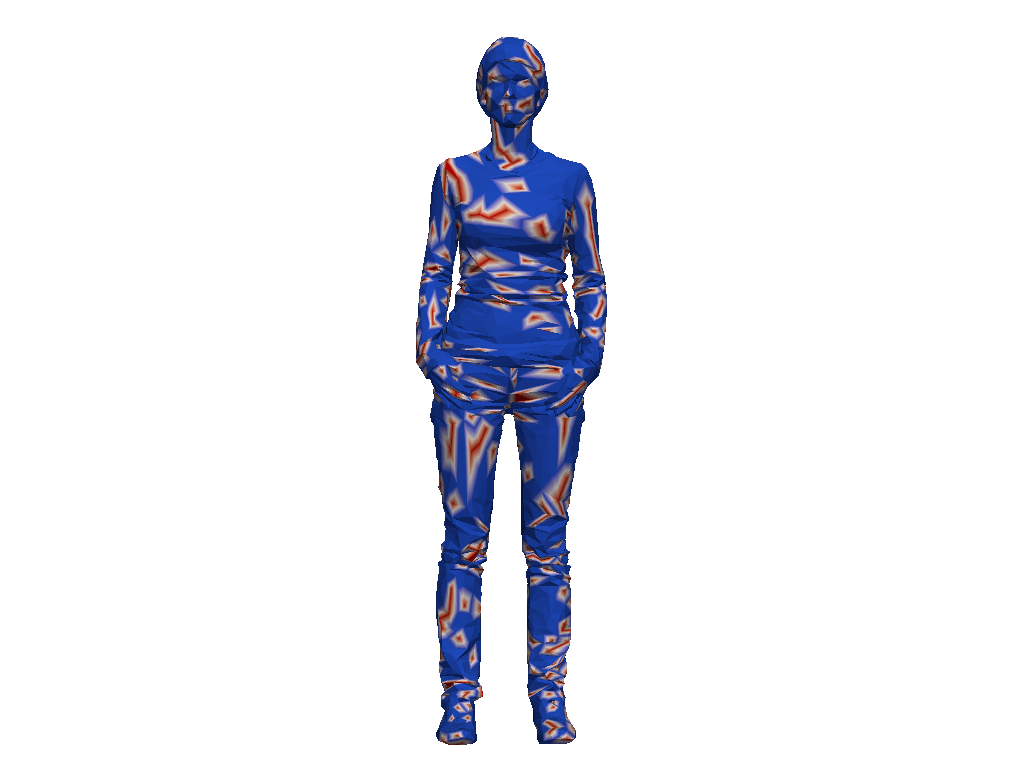} & \includegraphics[width=0.2\textwidth]{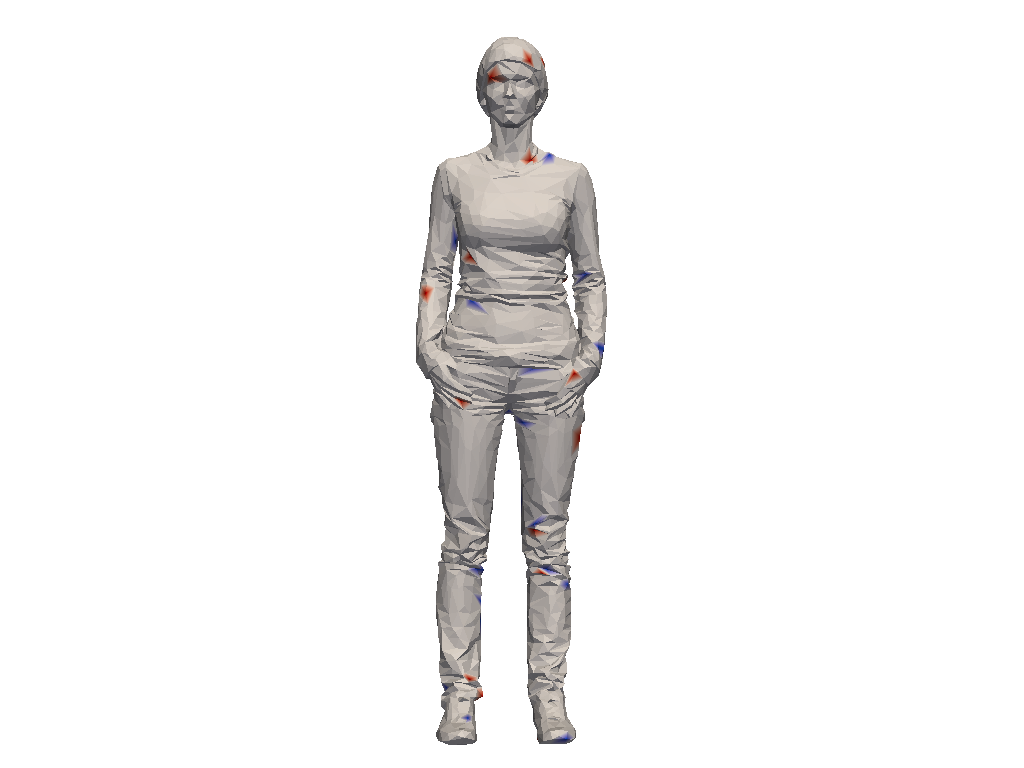} \\ \bottomrule
    \end{tabular}
    \caption{Sample initializations for the heat and wave equations on the \pv{} meshes.}
    \label{tab:sample}
\end{table}

\begin{table}[!htb]
    \centering
    \begin{tabular}{cc}
  Mesh &  Cahn-Hilliard $T=0$   \\ \toprule
  Bunny &  \includegraphics[width=0.2\textwidth]{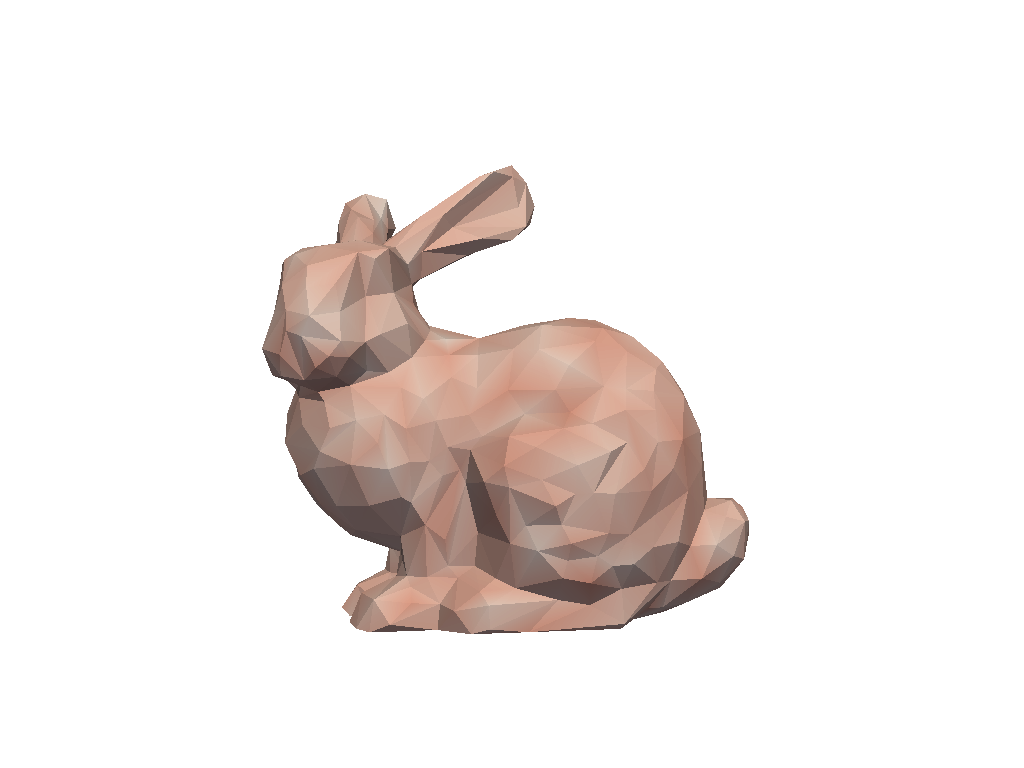}   \\ \midrule
  Ellipsoid & \includegraphics[width=0.2\textwidth]{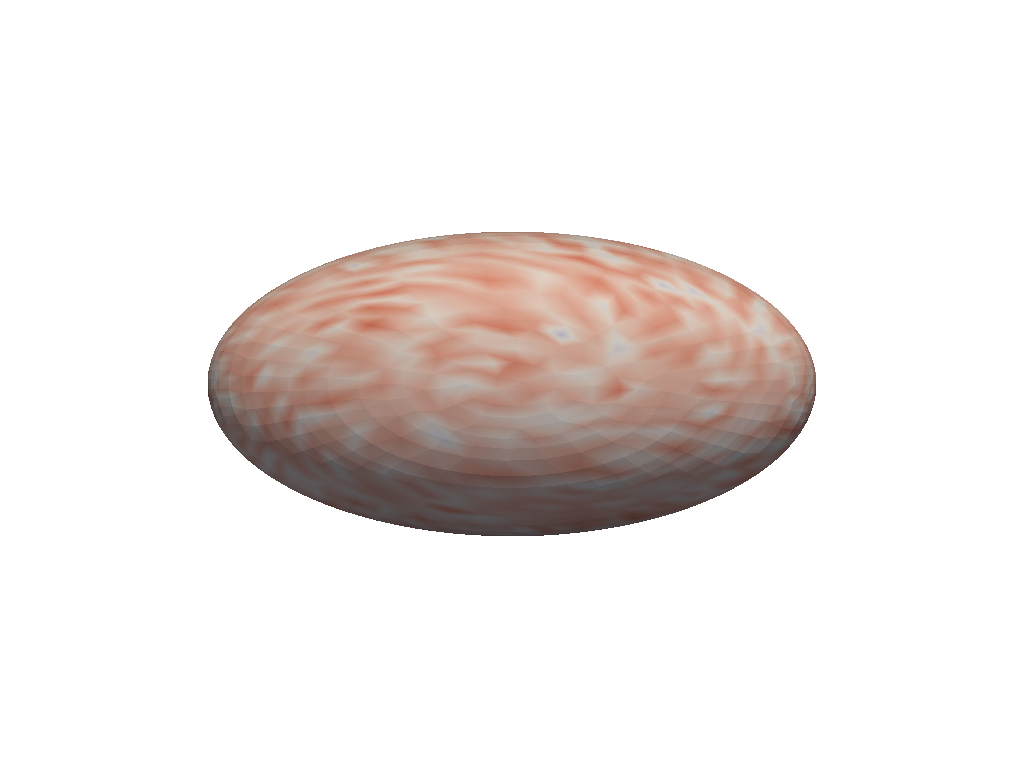}  \\ \midrule
  Sphere & \includegraphics[width=0.2\textwidth]{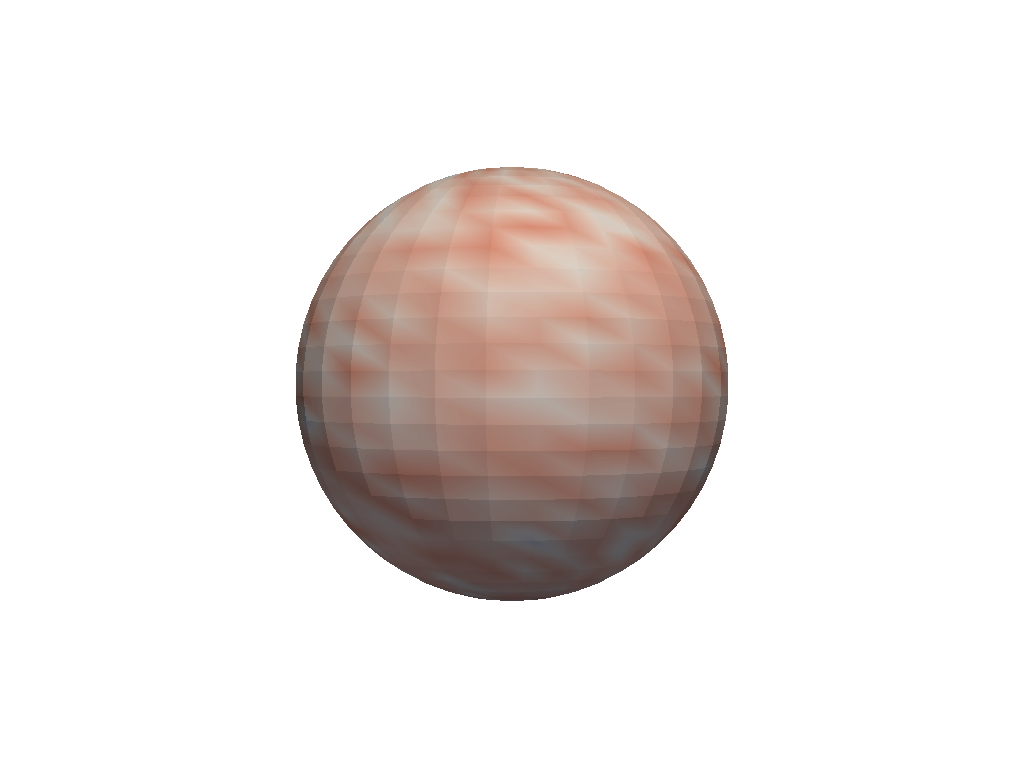}  \\ \midrule
 Super Toroid  & \includegraphics[width=0.2\textwidth]{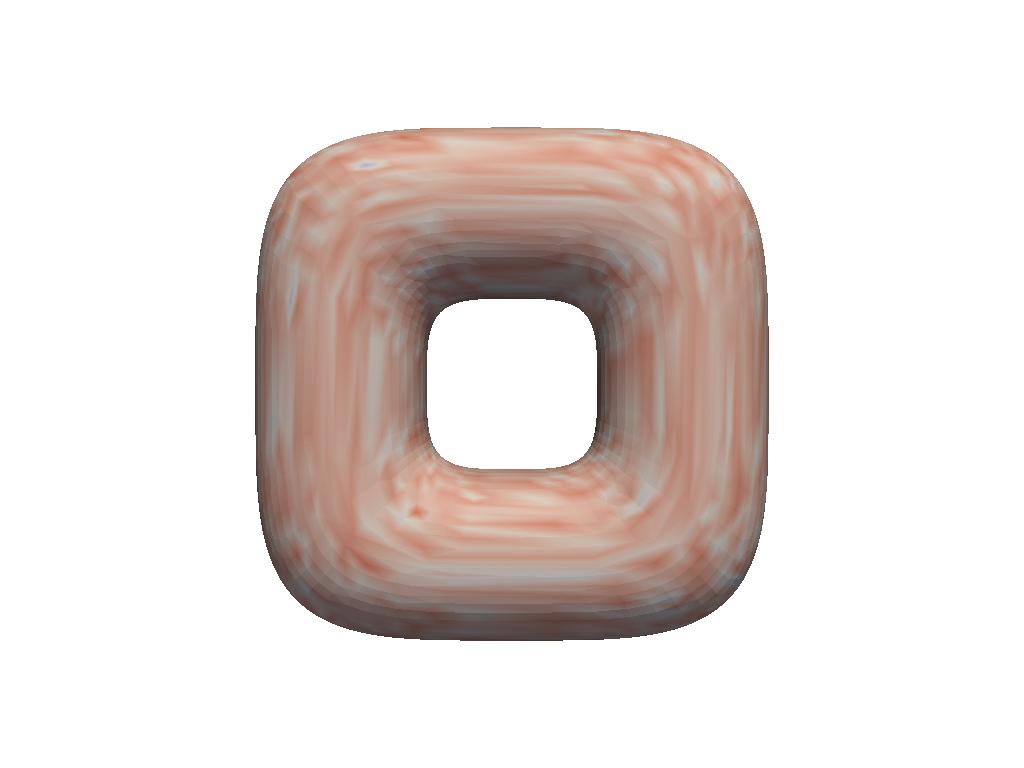} \\ \bottomrule
    \end{tabular}
    \caption{Sample initializations for the Cahn-Hilliard equation on the \pv{} meshes.}
    \label{tab:ch_sample}
\end{table}

\clearpage

\section{WeatherBench$\mathbf{2}$ Further Details and Results} \label{sec:wb_deets}

We lay out the relevant training, evaluation, and dataset details for WB$2$.

\subsection{WB2 Training Details and Problem Statement}  \label{sec:train_ps}

Our WB$2$ problem statement is as follows. Given a dataset $\mathcal{D} = \{X_i\}^N_{i = 1}$ of historical weather data, the task of weather forecasting is to predict future weather conditions $X_T \in \mathbb{R}^{V \times H \times W}$ given initial conditions $\{X_i\}^K_{i=1}, X_i \in \mathbb{R}^{V \times H \times W}$ , where $T$ is the target lead time, $K$ is the number of input time steps to the model, $V$ is the number of atmospheric variables, and $H \times W$ is the spatial resolution of the data, which depends on how densely we grid the globe. We follow the same training and hyper parameter optimization strategy as in \cref{sec:gegnn}. The only difference is that we use the $3$ previous time steps as input instead of $5$. All models are given a consistent compute budget of $8$ hours on an NVIDIA H$200$ GPU for up to $100$ epochs.

\subsection{WB2 Evaluation Details}

Here we give precise definitions of the evaluation and metrics omitted in the main text. We begin by establishing some notation common to the subsections, and consistent with the notation used in \cite{rasp2024weatherbench}.

Let $f$ denote the forecast, $o$ the ground-truth observation, and $c$ the climatology. Let $t \in \{1,\ldots,T\}$ denote the verification time, $l \in \{1,\ldots,L\}$ the lead time, $i \in \{1,\ldots,I\}$ the latitude index, and $j \in \{1,\ldots,J\}$ the longitude index. Forecasts are indexed as $f_{t,l,i,j}$, while observations and climatology are indexed by absolute time as $o_{t,i,j}$ and $c_{t,i,j}$. 

\subsubsection{Latitude weighting}

In an equiangular latitude-longitude grid, grid cells at the poles have a much smaller area compared to grid cells at the equator. Weighting all cells equally in the computation of RMSE and ACC would result in an inordinate bias towards the polar regions. As a result both metrics are latitude-weighted with weights computed as follows: 
\begin{equation*}
w(i) = \frac{\sin \theta^u_i - \sin \theta ^l_i}{\frac{1}{I} \sum^I_i (\sin \theta^u_t - \sin \theta ^l_i)},     
\end{equation*}
where $\theta^u_i$ and $\theta^l_i$ indicate upper and lower latitude bounds, respectively.

\subsubsection{Climatology}

The climatology $c$ is a function of the day of year and time of day, it is computed by taking the mean of ERA5 data from 1990 to 2019 (inclusive) for each grid point. A sliding window of 61 days is used around each day of year and time of day combination with weights linearly decaying to zero from the center. For notational consistency, we also define the lead-time–indexed climatology $c_{t,l,i,j} := c_{t+l,i,j},$ corresponding to the climatology at the forecast valid time.

\subsubsection{Root mean squared error (RMSE)}

Following the WB$2$ convention, our work measures error in terms of RMSE. For each variable and level pair, the RMSE at lead time $l$ is defined as: $$\RMSE_l = \sqrt{\frac{1}{TIJ} \sum^T_t\sum^I_i \sum^J_j w(i) (f_{t, l, i, j} - o_{t, i, j})^2}.$$ This choice is important for temperature forecasting, as we are invariant to choice of unit (e.g., temperature in terms of Kelvin and Celsius will have the same RMSE). Moreover, the change in scale over time is less dramatic as it was for the \pv{} meshes, where we considered NRMSE.

\subsubsection{Anomaly correlation coefficient (ACC)}

The ACC is computed as the Pearson correlation coefficient of the anomalies with respect to the climatology $c$. Denote the differences between forecast and climatology and between observation and climatology by $$f'_{t, l, i, j} = f_{t, l, i, j} - c_{t, l, i, j}; \quad o'_{t, i, j} = o_{t, i, j} - c_{t, i, j}.$$ The ACC at lead time $l$ is then defined as $$\ACC_l = \frac{1}{T} \sum^T_t \frac{\sum^I_i \sum^J_j w(i) f'_{t, l, i, j} \, o'_{t, i, j}}{\sqrt{\sum^I_i\sum^J_j w(i) {f'_{t, l, i, j}}^2 \sum^I_i \sum^J_j w(i) {o'_{t, i, j}}^2}}.$$ACC ranges from $1$, indicating perfect correlation, to $-1$, indicating perfect anti-correlation. The ECMWF states that when the ACC value falls below 0.6, it is considered that the positioning of synoptic scale features ceases to have value for forecasting purposes.

\subsection{WB2 Earth Mesh Discretization} \label{sec:earth_dis}

We construct a spherical mesh of the Earth by directly projecting the latitude–longitude grid points onto the unit sphere, and define mesh connectivity according to the original grid neighborhood structure. In order to obtain triangular faces, we further subdivide each cell into two triangles. The resulting mesh has 29040 nodes, 57600 faces, and 86640 edges. We note that this mesh construction is simpler than those used in other graph-based models, such as GraphCast \cite{lam2023graphcastlearningskillfulmediumrange}, which employs a subdivided icosahedron as the underlying mesh. However, our approach has the advantage that it operates directly on the native latitude–longitude grid and therefore does not require interpolation or regridding of the ERA5 data. Although more elaborate mesh constructions is likely to improve performance in real weather forecasting applications, our focus is on methodological experimentation rather than optimized weather prediction, and we therefore leave mesh optimization as an opportunity for future work.

\subsection{WB2 Temperature and Geopotential Extended Results} \label{sec:wb_geo_extended_results}

We report the RMSE and ACC curves for all lead times. We see in \cref{fig:wb2_g} that \ours{} stays valid for the longest lead time in terms of ACC, and is rivaled only by GemCNN in terms of RMSE for Geopotential. The results of \cref{fig:wb2} and \cref{fig:wb2_g} are also given as a table in \cref{tab:weather_results}.

\begin{figure}[!htb]
    \centering
    \includegraphics[width=0.75\linewidth]{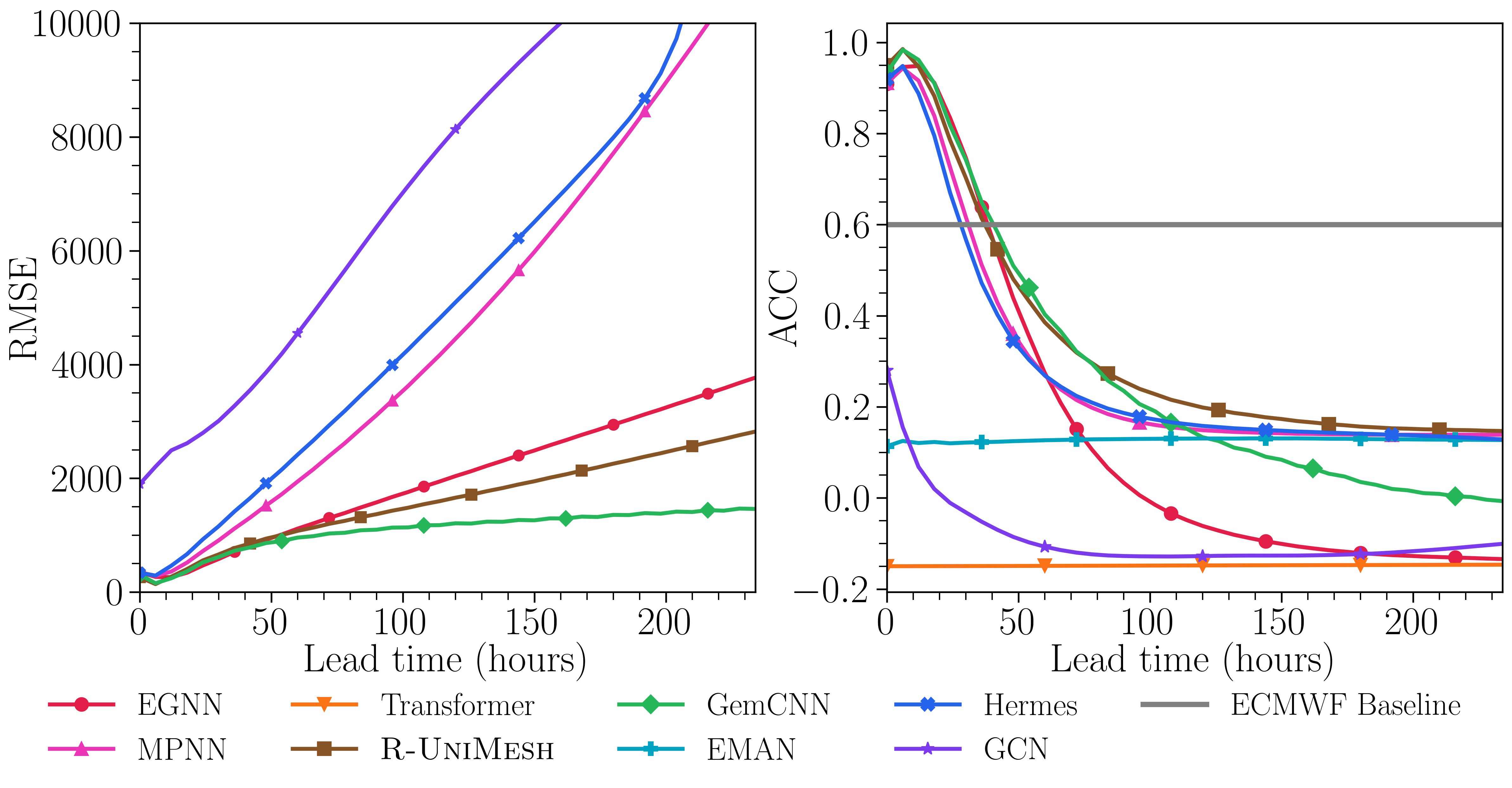}
    \caption{RMSE and ACC as a function of lead time for all models geopotential prediction. \ours{} has a competitive RMSE, especially at early lead time. \ours{} also maintains viability for lead times of roughly $2$ days according to the ECMWF baseline.}  
    \label{fig:wb2_g}
\end{figure}

\begin{table}[htbp]
\centering
\tiny
\resizebox{\textwidth}{!}{%
\begin{tabular}{llrrrrrrrrr}
\toprule
Model & Metric & 6h & 12h & 18h & 24h & 30h & 36h & 42h & 48h & 240h \\
\midrule
\multirow{4}{*}{\hyperlink{cite.satorras2022enequivariantgraphneural}{EGNN}} & Z500 ACC & 0.909 & 0.946 & 0.948 & 0.912 & 0.835 & 0.747 & 0.639 & 0.538 & -0.134 \\
 & Z500 RMSE & 347.83 & 266.96 & 261.09 & 340.43 & 467.68 & 581.20 & 706.01 & 812.70 & 3769.31 \\
 & T850 ACC & 0.840 & 0.923 & 0.894 & 0.817 & 0.725 & 0.637 & 0.541 & 0.456 & 0.093 \\
 & T850 RMSE & 1.94 & 1.32 & 1.54 & 2.02 & 2.46 & 2.81 & 3.17 & 3.47 & 6.71 \\
\midrule
\multirow{4}{*}{ \hyperlink{cite.basu2022equivariant}{EMAN}} & Z500 ACC & 0.114 & 0.125 & 0.121 & 0.123 & 0.120 & 0.122 & 0.123 & 0.123 & 0.127 \\
 & Z500 RMSE & 39530.01 & 52028.53 & 65037.68 & 89210.70 & 103179.47 & 119529.81 & 139314.85 & 156786.23 & 4898180.46 \\
 & T850 ACC & -0.016 & -0.011 & -0.025 & -0.029 & -0.029 & -0.028 & -0.028 & -0.028 & -0.007 \\
 & T850 RMSE & 19.87 & 26.34 & 29.95 & 33.25 & 35.42 & 37.73 & 39.95 & 43.95 & 1097293305.30 \\
\midrule
\multirow{4}{*}{ \hyperlink{cite.kipf2017semisupervised}{GCN}} & Z500 ACC & 0.280 & 0.155 & 0.068 & 0.020 & -0.011 & -0.032 & -0.052 & -0.070 & -0.101 \\
 & Z500 RMSE & 1897.27 & 2205.72 & 2491.37 & 2616.58 & 2798.23 & 3009.40 & 3272.80 & 3552.62 & 13147.85 \\
 & T850 ACC & 0.287 & 0.170 & 0.087 & 0.030 & -0.006 & -0.025 & -0.033 & -0.031 & 0.093 \\
 & T850 RMSE & 6.15 & 7.02 & 8.29 & 10.90 & 21.75 & 61.01 & 173.60 & 466.71 & 930146229837458.25 \\
\midrule
\multirow{4}{*}{\hyperlink{cite.dehaan2021gaugeequivariantmeshcnns}{GemCNN}} & Z500 ACC & 0.934 & 0.984 & 0.962 & 0.910 & 0.818 & 0.742 & 0.650 & 0.584 & -0.007 \\
 & Z500 RMSE & 304.11 & 156.00 & 242.86 & 364.36 & 521.75 & 615.11 & 724.60 & 785.92 & 1461.45 \\
 & T850 ACC & 0.879 & 0.995 & 0.901 & 0.776 & 0.670 & 0.581 & 0.477 & 0.392 & -0.025 \\
 & T850 RMSE & 1.71 & 0.34 & 1.51 & 2.30 & 2.77 & 3.10 & 3.49 & 3.80 & 6.05 \\
\midrule
\multirow{4}{*}{\hyperlink{cite.park2023modelingdynamicsmeshesgauge}{Hermes}} & Z500 ACC & 0.919 & 0.948 & 0.888 & 0.795 & 0.670 & 0.568 & 0.472 & 0.402 & 0.129 \\
 & Z500 RMSE & 339.23 & 289.93 & 463.62 & 667.90 & 927.73 & 1157.17 & 1416.43 & 1653.81 & 25644.40 \\
 & T850 ACC & 0.842 & 0.906 & 0.756 & 0.586 & 0.447 & 0.340 & 0.252 & 0.187 & -0.025 \\
 & T850 RMSE & 1.96 & 1.50 & 2.55 & 3.59 & 4.45 & 5.20 & 5.99 & 6.76 & 293.82 \\
\midrule
\multirow{4}{*}{\hyperlink{cite.g2017}{MPNN}} & Z500 ACC & 0.911 & 0.943 & 0.916 & 0.839 & 0.726 & 0.617 & 0.512 & 0.429 & 0.139 \\
 & Z500 RMSE & 349.58 & 283.77 & 360.45 & 519.00 & 721.06 & 908.69 & 1116.36 & 1310.80 & 11191.46 \\
 & T850 ACC & 0.846 & 0.933 & 0.898 & 0.817 & 0.724 & 0.638 & 0.547 & 0.469 & 0.043 \\
 & T850 RMSE & 1.90 & 1.23 & 1.52 & 2.03 & 2.48 & 2.82 & 3.17 & 3.44 & 5.99 \\
\midrule
\multirow{4}{*}{\hyperlink{cite.janny2023eagle}{Transformer}} & Z500 ACC & -0.150 & -0.150 & -0.150 & -0.150 & -0.150 & -0.150 & -0.149 & -0.149 & -0.147 \\
 & Z500 RMSE & 55368.55 & 55368.50 & 55368.45 & 55520.22 & 55520.17 & 55520.12 & 55558.25 & 55558.18 & 55570.73 \\
 & T850 ACC & 0.884 & 0.884 & 0.884 & 0.553 & 0.553 & 0.553 & 0.304 & 0.304 & -0.068 \\
 & T850 RMSE & 1.62 & 1.62 & 1.62 & 3.13 & 3.13 & 3.13 & 3.96 & 3.96 & 7.37 \\
\midrule
\multirow{4}{*}{\ours{} \textbf{(Ours)}} & Z500 ACC & 0.950 & 0.985 & 0.947 & 0.882 & 0.786 & 0.703 & 0.614 & 0.546 & 0.147 \\
 & Z500 RMSE & 260.31 & 140.60 & 271.28 & 405.01 & 556.83 & 662.19 & 773.67 & 853.87 & 2821.03 \\
 & T850 ACC & 0.888 & 0.964 & 0.889 & 0.774 & 0.679 & 0.598 & 0.504 & 0.428 & 0.134 \\
 & T850 RMSE & 1.63 & 0.91 & 1.63 & 2.37 & 2.85 & 3.20 & 3.65 & 4.04 & 11.68 \\
\bottomrule
\end{tabular}%
}
\caption{Weather Forecasting Results, ACC and RMSE at Different Lead Times for Temperature and Geopotential.}
\label{tab:weather_results}
\end{table}

\subsection{WB2 Smoothness Extended Results} \label{sec:wb_smooth_extended_results}

We provide smoothness errors for all models on WB$2$ temperature and geopotential datasets. As seen in \cref{tab:wb_smooth}, \ours{} is competitive across both temperature and geopotential, and is within statistical significance of the best performing EGNN model on geopotential.

\begin{table}[!htb]
    \centering
    \tiny 
    \begin{tabular}{ccc}
      Model &  RE Temperature $(\downarrow)$ & RE Geopotential $(\downarrow)$ \\ \toprule
      \hyperlink{cite.kipf2017semisupervised}{GCN} & $6.8 \cdot 10^{-2} \pm 1.7 \cdot 10^{-2}$   & $1.2 \cdot 10^{-3} \pm  1.0 \cdot 10^{-3}$ \\ 
      \hyperlink{cite.g2017}{MPNN} & $3.0 \cdot 10^{-3} \pm 2.7 \cdot 10^{-3}$ & $1.3 \cdot 10^{-3} \pm 8.5 \cdot 10^{-4}$\\
      \hyperlink{cite.park2023modelingdynamicsmeshesgauge}{Hermes} & $3.1 \cdot 10^{-1} \pm 7.5 \cdot 10^{-1}$ & $1.0 \cdot 10^{-1} \pm 3.6 \cdot 10^{-1}$ \\ 
      \hyperlink{cite.dehaan2021gaugeequivariantmeshcnns}{GemCNN}  & $\mathbf{8.9 \cdot 10^{-4} \pm 7.0 \cdot 10^{-4}}$ & $9.4 \cdot 10^{-4} \pm 6.1 \cdot 10^{-4}$ \\
      \hyperlink{cite.basu2022equivariant}{EMAN} & $7.7 \cdot 10^0 \pm 5.2 \cdot 10^0$ & $4.1 \pm 2.6 \cdot 10^{-2}$ \\ 
      \hyperlink{cite.satorras2022enequivariantgraphneural}{EGNN} & $1.1 \cdot 10^{-3} \pm 7.3 \cdot 10^{-4}$ & $\mathbf{8.3 \cdot 10^{-4} \pm 1.0 \cdot 10^{-3}}$ \\ 
      \hyperlink{cite.janny2023eagle}{Transformer}  & $9.4 \cdot 10^{-4} \pm 8.2 \cdot 10^{-4}$ & $2.0 \cdot 10^{-3} \pm 1.1 \cdot 10^{-3}$ \\
            \ours{} (Ours) & $2.2 \cdot 10^{-3} \pm 1.6 \cdot 10^{-3}$ & $9.8 \cdot 10^{-4} \pm 7.2 \cdot 10^{-4}$ \\ 
\bottomrule
    \end{tabular}
    \caption{Rayleigh error for all models for all initializations on WB$2$ temperature and geopotential. Best performing model is indicated with \textbf{bold text}. Errors scaled up by $\times 40$.}
    \label{tab:wb_smooth}
\end{table}

\end{document}